\newtheorem{theorem}{Theorem}%[section]
\newtheorem{proposition}[theorem]{Proposition}
\newtheorem{lemma}[theorem]{Lemma}
\newtheorem{definition}{Definition}
\newtheorem{assumption}[definition]{Assumption}
\newtheorem{remark}{Remark}
\newtheorem{example}[remark]{Example}
\definecolor{puorange}{rgb}{0.80,0.20,0}
\definecolor{bluegray}{rgb}{0.04,0,0.7}
\definecolor{greengray}{rgb}{0.05,0.50,0.15}
\definecolor{darkbrown}{rgb}{0.40,0.2,0.05}
\definecolor{darkcyan}{rgb}{0,0.4,1}
\definecolor{black}{rgb}{0,0,0}
\definecolor{grey}{rgb}{0.93,0.93,0.93}
\definecolor{royalazure}{rgb}{0.0, 0.22, 0.66}
\newcommand{\tabemph}[1]{\cellcolor{grey!30}\textcolor{black!50!royalazure}{#1}}%
\Crefname{assumption}{Assumption}{Assumption}
\newcommand{\reals}{{\mathbb R}}
\newcommand{\abs}[1]{\left| #1 \right|}
\newcommand{\norm}[1]{\left\lVert #1 \right\rVert}
\newcommand{\Vect}{\operatorname{Vec}}
\newcommand{\ip}[1]{{\langle #1 \rangle}}
\newcommand{\lone}{\mathbf{L}^1}
\newcommand{\ltwo}{\mathbf{L}^2}
\newcommand{\Expect}{\operatorname{\mathbb E}}
\newcommand{\Prob}{\operatorname{\mathbb P}}
\newcommand{\pspace}{\mathcal{M}_1}
\newcommand{\hnull}{\mathbf{H}_0}
\newcommand{\halt}{\mathbf{H}_1}
\newcommand{\calC}{\mathcal{C}}
\newcommand{\calF}{\mathcal{F}}
\newcommand{\calN}{\mathcal{N}}
\newcommand{\calX}{\mathcal{X}}
\newcommand{\calY}{\mathcal{Y}}
\newcommand{\calZ}{\mathcal{Z}}
\newcommand{\bbN}{{\mathbb N}}
\newcommand{\ind}{\mathds{1}}
\newcommand{\txtover}[2]{\overset{\mbox{\scriptsize #1}}{#2}}
\newcommand{\teotic}{{ETIC}}
\newcommand{\eoticrf}{{ETIC-RF}}
\newcommand{\eotic}{T}
\newcommand{\heotic}{T_n}
\newcommand{\px}{P_X}
\newcommand{\hpx}{\hat P_X}
\newcommand{\py}{P_Y}
\newcommand{\hpy}{\hat P_Y}
\newcommand{\pxy}{P_{XY}}
\newcommand{\hpxy}{\hat P_{XY}}
\newcommand{\prodxy}{\px \otimes \py}
\newcommand{\hprodxy}{\hpx \otimes \hpy}
\newcommand{\kl}[2]{\operatorname{KL}(#1 \Vert #2)}
\newcommand{\ot}{\operatorname{OT}}
\newcommand{\hsic}{\operatorname{HSIC}}
\newcommand{\sdiv}{\bar S}
\title{Entropy Regularized Optimal Transport Independence Criterion}
\author{Lang Liu$^1$ \qquad Soumik Pal$^2$ \qquad Zaid Harchaoui$^1$ \vspace{0.3cm} \\
$^1$ Department of Statistics, University of Washington \\
$^2$ Department of Mathematics, University of Washington}
\date{}
\begin{document}

\maketitle

\begin{abstract}
  We introduce an independence criterion based on entropy regularized optimal transport. Our criterion can be used to test for independence between two samples. We establish non-asymptotic bounds for our test statistic and study its statistical behavior under both the null hypothesis and the alternative hypothesis. The theoretical results involve tools from U-process theory and optimal transport theory. We also offer a random feature type approximation for large-scale problems, as well as a differentiable program implementation for deep learning applications.  We present experimental results on existing benchmarks for independence testing, illustrating the interest of the proposed criterion to capture both linear and nonlinear dependencies in synthetic data and real data.
\end{abstract}

\section{Introduction}
\label{sec:intro}
Statistical independence measures have been widely used in machine learning and statistics, ranging from independent component analysis~\citep{bach2002kernel,gretton2005kernel} to causal inference~\citep{pfister2018kernel,chakraborty2019distance}, and recently in self-supervised learning~\citep{li2021selfsupervised} and representation learning~\citep{ozair2019wasserstein}.
Classical dependence measures such as Pearson's correlation coefficient, Spearman's $\rho$, and Kendall's $\tau$~\citep{hoeffding1948nonparametric,kruskal1958ordinal,lehmann1966some} focus on real-valued one dimensional random variables and thus are not suitable for high dimensional data; see also~\citep{schweizer1981nonparametric,nikitin1995asymptotic}.

One popular choice of independence measures in high dimension is the Hilbert-Schmidt independence criterion (HSIC)~\citep{gretton2005kernel}.
This criterion was used to test independence by \citet{gretton2007kernel}.
Several extensions of HSIC are available, such as a relative dependency measure~\citep{bounliphone2015low} and a joint independence measure among multiple random elements~\citep{pfister2018kernel}.
Another choice is the distance covariance (dCov) of~\citet{szekely2007measuring}.
dCov was originally developed in Euclidean spaces using characteristic functions and later generalized to metric spaces~\citep{lyons2013distance}.
In fact, in their most general form, HSIC and dCov are equivalent as shown by~\citet{sejdinovic2013equivalence}.

A different line of research explored optimal transport to measure dependence.
The Wasserstein distance naturally defines a dependence measure when it is used to quantify the dissimilarity between the joint distribution and the product of marginals; see, e.g., \citep{cifarelli2017gini}.
The normalized version---the so-called Wasserstein correlation coefficient---has recently gained attention in \citep{wiesel2021measuring,mordant2021measuring,nies2021transport}.
Following the classical rank-based tests such as Pearson's $\rho$, optimal transport is also used to define multivariate ranks and the subsequent independence tests~\citep{shi2020distribution,deb2021multivariate}.
However, these tests can suffer from the curse of dimensionality or high computational complexity, limiting their practical usefulness; see~\citep{peyre2019computational} for a discussion.

A remedy to this challenge is to use the entropy regularized formulation of optimal transport.
This is particularly attractive from both a computational viewpoint~\citep{cuturi2013sinkhorn} and a statistical viewpoint~\citep{rigollet2018entropic}.
Moreover the empirical counterpart of entropy regularized optimal transport enjoys as an estimator a parametric rate of convergence and thus appears to overcome the curse of dimensionality~\citep{genevay2019sample,mena2019statistical}. The Sinkhorn divergence~\citep{feydy2019interpolating}, its centered version, defines a semi-metric on probability measures which metrizes weak convergence.
\citet{ramadas2017wasserstein} used it for two-sample testing and \citet{genevay2018learning} for generative modeling; see also \citep{salimans2018improving,sanjabi2018convergence}. Our approach to independence testing shall build upon this recent progress on entropic regularization.

\paragraph{Outline.}
In \Cref{sec:eotic}, we introduce the entropy regularized optimal transport independence criterion (\teotic) and discuss its key properties.
We propose the Tensor Sinkhorn algorithm with a random feature approximation to compute \teotic, which admits a quadratic scaling in time and space. We also show how to approximate \teotic~using random features, and how to differentiate through \teotic~in a framework of differentiable programming.
In \Cref{sec:thm}, we give our main theoretical results, i.e., non-asymptotic bounds, characterizing the statistical behavior of the empirical estimator of \teotic~under both the null and alternative hypotheses.
In \Cref{sec:experiments}, we compare the empirical behavior of \teotic~with HSIC on both synthetic and real data. The Appendix contains detailed proofs as well as additional experiments.

\paragraph{Related Work.}
Statistical metrics on the space of probability measures form the backbone of many dependence measures.
On the machine learning side, distributions are compared by embedding them into reproducing kernel Hilbert spaces \citep{gretton2007twosample,gretton2012kernel}.
The Hilbert-Schmidt independence criterion (HSIC) uses Hilbertian embeddings of probability distributions to compare the joint distribution and the product of marginals~\citep{gretton2005kernel,gretton2007kernel}.
On the statistics side, distributions defined on Euclidean spaces are compared via their characteristic functions, leading to the so-called energy distance~\citep{szekely2004testing}.

A closely related dependence measure is the distance covariance \citep{szekely2007measuring}.
These distances were later generalized to general metric spaces of negative type by~\citet{lyons2013distance}, unifying the two notions via the Barycenter map---a quantity similar to the feature map in kernel methods.
In fact, the kernel-based and distance-based approaches are equivalent~\citep{sejdinovic2013equivalence}.
Their corresponding empirical estimators all admit a U-statistics expression, and enjoy a convergence rate that is independent of the dimension.
These results can be established using tools from U-statistics theory; see, e.g.,~\citep{serfling2009approximation}.

On the other hand, Wasserstein distances provide a class of metrics on the space of probability measures with nice geometric properties~\citep{ambrosio2005gradient}.
However, it is known that its empirical estimate suffers from the curse of dimensionality~\citep{dudley1969speed,fournier2015rate,weed2019sharp,lei2020convergence}, limiting their usage in high-dimension problems.
A remedy to this issue is to introduce the entropic regularization.
\citet{genevay2019sample} showed that the plug-in estimator of the entropy regularized optimal transport cost possesses a parametric rate of convergence when the measures are compactly supported.
Their results can be extended to sub-Gaussian distributions~\citep{mena2019statistical} with tools from empirical process theory.

The independence criterion we propose uses entropy regularized optimal transport to compare the joint distribution and the product of marginals.
The empirical counterpart involves a product of two empirical measures, leading to a two-sample U-process on paired samples.
The resulting U-process requires a sophisticated analysis of its statistical behavior; common tools from empirical processes are ineffective here.
Using the decoupling technique from~\citet{pena1999decoupling} and duality theory~\citep{peyre2019computational}, we prove a rate of convergence roughly $O(\sigma^{3d} n^{-1/2})$, where $n$ is the sample size, $d$ is the ambient dimension, and $\sigma$ is the sub-Gaussian parameter, recovering previous results for two sample statistics.

\section{Entropy Regularized Optimal Transport Independence Criterion}
\label{sec:eotic}
We introduce in this section the entropy regularized optimal transport independence criterion (\teotic) and discuss its key properties.
We design an independence test based on \teotic~and develop an efficient algorithm---the Tensor Sinkhorn algorithm---to compute its test statistic. We also provide an approximation using random features for large-scale problems, as well as a differentiable program implementation for deep learning applications.

\paragraph{Notation.}
For a Euclidean space $\calZ$ equipped with the Borel $\sigma$-algebra,
let $\pspace(\calZ)$ be the set of probability measures defined on $\calZ$.
Let $(X, Y)$ be a pair of random vectors with respective dimension $d_1$ and $d_2$ following some joint distribution $\pxy \in \pspace(\reals^d)$ where $d := d_1 + d_2$.
Denote $\px \in \pspace(\reals^{d_1})$ and $\py \in \pspace(\reals^{d_2})$ the marginal distributions of $X$ and $Y$, respectively.
Given $Q \in \pspace(\reals^{d_1} \times \reals^{d_2})$ and a real-valued function $f$ on the same domain, we denote $Q[f]$ the expectation $\Expect_{(X, Y) \sim Q}[f(X, Y)]$.
We adopt the notation from the empirical process theory and write $\norm{Q}_{\calF} := \sup_{f \in \calF} \abs{Q[f]}$ for a real-valued function class $\calF$.
We say $Q$ is sub-Gaussian with parameter $\sigma^2$, denoted as subG$(\sigma^2)$, if $\Expect_Q e^{\norm{(X, Y)}^2/(2d\sigma^2)} \le 2$; see, e.g., \citet{vershynin2018high}.
We write $\Expect := \Expect_{\pxy}$ for short.

\paragraph{ETIC.}
Let $c: \reals^d \times \reals^d \rightarrow \reals_+$ be a continuous cost function satisfying $c((x, y), (x', y')) = 0$ iff $(x, y) = (x', y')$.
We introduce the entropy regularized optimal transport independence criterion (\teotic):
\begin{align}\label{eq:eotic}
  \eotic(X, Y) := \eotic_\varepsilon(X, Y) := \sdiv_\varepsilon(\pxy, \prodxy),
\end{align}
where $\sdiv_\varepsilon$ is a divergence defined as
$\sdiv_\varepsilon(\pxy, \prodxy) := S_\varepsilon(\pxy, \prodxy) - \frac12 S_\varepsilon(\pxy, \pxy) - \frac12 S_\varepsilon(\prodxy, \prodxy)$. 
Here $S_\varepsilon(\pxy, \prodxy)$ is the entropy regularized optimal transport cost between $\pxy$ and $\prodxy$, i.e.,
\begin{align}\label{eq:eot}
    \inf_{\gamma} \left[ \int c d\gamma + \varepsilon \kl{\gamma}{\pxy \otimes (\prodxy)} \right],
\end{align}
where the infimum is over $\Pi(\pxy, \prodxy)$ which is the set of couplings (or joint distributions) on $\reals^{d \times d}$ with marginals $\pxy$ and $\prodxy$, $\varepsilon > 0$ is the regularization parameter, and KL is the Kullback-Leibler divergence.
The other two terms are defined similarly and are omitted for the sake of space.

As we will show later, it is computationally convenient to work with additive cost functions, i.e., $c((x, y), (x', y')) = c_1(x, x') + c_2(y, y')$.
For this type of cost functions, we prove that the resulting \teotic~is a valid independence criterion as long as the induced \emph{Gibbs kernels}
\begin{align}\label{eq:gibbs_kernels}
    k_1(x, x') = \exp\left\{ -c_1(x, x') / \varepsilon \right\} \quad \mbox{and} \quad k_2(y, y') = \exp\left\{ -c_2(y, y') / \varepsilon \right\}
\end{align}
are positive universal; see \Cref{sec:computation} for the proof.

\begin{proposition}\label{prop:eotic_valid}
  Let $\calX \subset \reals^{d_1}$ and $\calY \subset \reals^{d_2}$ be compact subsets equipped with Lipschitz costs $c_1$ and $c_2$, respectively.
  Assume that the Gibbs kernels defined in \eqref{eq:gibbs_kernels} are positive universal.
  Then, the \teotic~is a valid dependency measure on $\pspace(\calX \times \calY)$, i.e., $T(X, Y) \ge 0$ and
  \begin{align}
      T(X, Y) = 0 \mbox{ iff } \pxy = \prodxy. 
  \end{align}
  Moreover, the claim holds true for measures with a bounded support on $\calX \times \calY = \reals^d$ with the costs $c_1(x, x') = \norm{x - x'}^p/\lambda_1$ and $c_2(y, y') = \norm{y - y'}^p/\lambda_2$ for $p \in \{1, 2\}$ and for all $\lambda_1, \lambda_2 > 0$.
\end{proposition}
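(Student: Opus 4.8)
The object $\eotic(X,Y)=\sdiv_\varepsilon(\pxy,\prodxy)$ is the debiased Sinkhorn divergence of \citet{feydy2019interpolating} evaluated at the joint law and the product of the marginals, which are \emph{two probability measures on the same space} $\calX\times\calY$. Validity of the criterion will therefore follow from the positive-definiteness property of that divergence: $\sdiv_\varepsilon(\alpha,\beta)\ge 0$ for all $\alpha,\beta\in\pspace(\calX\times\calY)$, with equality iff $\alpha=\beta$. The plan is (i) to show that this property holds on $\pspace(\calX\times\calY)$ under the stated hypotheses, and then (ii) to instantiate it at $\alpha=\pxy$, $\beta=\prodxy$, which immediately gives $\eotic(X,Y)\ge 0$ and $\eotic(X,Y)=0\iff\pxy=\prodxy$, i.e. independence.

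\textbf{Step 1: the Gibbs kernel tensorizes.} Since the cost is additive, $c((x,y),(x',y'))=c_1(x,x')+c_2(y,y')$, the Gibbs kernel on $\calX\times\calY$ factorizes as $k((x,y),(x',y'))=k_1(x,x')\,k_2(y,y')$ with $k_1,k_2$ as in \eqref{eq:gibbs_kernels}. The Schur product theorem shows that a pointwise product of positive-definite kernels is positive definite (and $k$ is pointwise positive since $k_1,k_2$ are), and a Stone--Weierstrass argument shows that $k$ is universal whenever $k_1,k_2$ are: on the compact set $\calX\times\calY$, the algebra $C(\calX)\odot C(\calY)$ of finite sums of products is dense by Stone--Weierstrass, and $\rkhs_{k_1}\odot\rkhs_{k_2}$ is sup-norm dense in it when $\rkhs_{k_1},\rkhs_{k_2}$ are dense in $C(\calX),C(\calY)$; hence $\overline{\rkhs_{k_1}\odot\rkhs_{k_2}}=C(\calX\times\calY)$. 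So $k=e^{-c/\varepsilon}$ is positive universal on $\calX\times\calY$.

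\textbf{Step 2: positive-definiteness of $\sdiv_\varepsilon$.} Here I would invoke strong duality for entropy regularized optimal transport \citep{peyre2019computational}, writing $S_\varepsilon(\alpha,\beta)=\sup_{f,g}\big\{\alpha[f]+\beta[g]-\varepsilon(\alpha\otimes\beta)\big[e^{(f\oplus g-c)/\varepsilon}-1\big]\big\}$, with maximizers $(f_{\alpha\beta},g_{\alpha\beta})$ characterized by the Sinkhorn fixed-point equations. Expanding the three terms of $\sdiv_\varepsilon(\alpha,\beta)$ in terms of the cross potentials and the autocorrelation potentials of $(\alpha,\alpha)$ and $(\beta,\beta)$ and rearranging yields a Bregman-type expression that is manifestly nonnegative and vanishes exactly when the relevant potentials coincide; positive universality of $k$ is then used to conclude that equality forces $\alpha=\beta$, since distinct measures cannot induce identical Sinkhorn potentials against a characteristic kernel. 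I expect this finite-$\varepsilon$ positive-definiteness to be the main obstacle of the proof: the limiting regimes $\varepsilon\to 0$ (the optimal transport cost) and $\varepsilon\to\infty$ (the MMD with kernel $-c$) are only suggestive and do not by themselves deliver positivity at a fixed $\varepsilon$, so one genuinely needs the potential-based argument. Combining Steps 1 and 2 and specializing $\alpha=\pxy$, $\beta=\prodxy$ proves the first assertion.

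\textbf{Step 3: the power costs.} For the last sentence, restrict attention to the closures of the (bounded, hence relatively compact) supports of the marginals, call them $\calX\subset\reals^{d_1}$ and $\calY\subset\reals^{d_2}$; on such sets $c_i(z,z')=\norm{z-z'}^p/\lambda_i$ is Lipschitz for $p\in\{1,2\}$ and vanishes only on the diagonal, so the hypotheses of the first part are met once we check that $k_i(z,z')=\exp\{-\norm{z-z'}^p/(\lambda_i\varepsilon)\}$ is positive universal. For $p=2$ this is the Gaussian RBF kernel, which is positive definite on all of $\reals^{d_i}$ and universal on every compact subset. For $p=1$ it is the exponential (Laplacian) kernel built from the Euclidean norm; it is positive definite on $\reals^{d_i}$ because $t\mapsto e^{-a\sqrt t}$ is completely monotone (Schoenberg), and it is characteristic/universal on compact subsets. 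Applying Steps 1--2 on $\calX\times\calY$ then yields the conclusion verbatim for measures with bounded support on $\reals^d$, for all $\lambda_1,\lambda_2>0$.
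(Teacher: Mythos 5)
Your overall architecture mirrors the paper's proof exactly: argue that the additive cost produces a product Gibbs kernel $k = k_1 \otimes k_2$ which is positive and universal on the compact product space, and then deduce the claim from the fact that the debiased Sinkhorn divergence with a positive universal Gibbs kernel is positive definite on probability measures. Step~1 is a correct direct argument (Schur product plus Stone--Weierstrass); the paper instead cites a lemma of Blanchard, Lee and Scott for the universality of the tensor-product kernel, so you have merely re-proved a known fact, which is fine. Step~3 is also correct: on a bounded domain the power costs are Lipschitz, the $p=2$ kernel is Gaussian, and the $p=1$ kernel is the Laplacian, both positive definite by Schoenberg and universal on compacta.

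The genuine gap is in Step~2. You reduce everything to ``$\sdiv_\varepsilon(\alpha,\beta) \ge 0$, with equality iff $\alpha=\beta$, whenever $k$ is positive universal and the measures have bounded support,'' and then you offer only a sketch: invoke duality, expand in cross- and self-potentials, ``rearrange into a Bregman-type expression that is manifestly nonnegative,'' and use universality to conclude. You yourself flag this as ``the main obstacle of the proof,'' and indeed it is, because the rearrangement is not manifest: the nonnegativity of $\sdiv_\varepsilon$ at fixed $\varepsilon>0$ requires a concrete argument (in the literature it proceeds by lower-bounding $\sdiv_\varepsilon(\alpha,\beta)$ by an RKHS norm of the difference of two kernel-integral transforms built from the self-transport potentials, and then using universality to make that embedding injective). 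Without either carrying that computation out or citing the result, Step~2 is an assertion, not a proof. The paper avoids all of this by simply invoking Theorem~1 of \citet{feydy2019interpolating}, which is exactly this statement. So your proposal correctly identifies the needed lemma but does not supply it; the fix is to cite Feydy, S\'ejourn\'e, Vialard, Amari, Trouv\'e, and Peyr\'e (2019) at that point rather than attempting a re-derivation.
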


A running example we consider in this paper is the weighted quadratic cost.
\begin{example}[Weighted quadratic cost]
  Let $\lambda_1, \lambda_2 \in (0, \infty)$.
  Consider the cost function
  \begin{align}
    c((x, y), (x', y')) = \frac1{\lambda_1} \norm{x - x'}^2 + \frac1{\lambda_2} \norm{y - y'}^2.
  \end{align}
  This cost induces two universal kernels $k_1(x, x') = e^{-\lVert x - x' \rVert^2 / (\varepsilon \lambda_1)}$ and $k_2(y, y') = e^{-\lVert y - y' \rVert^2 / (\varepsilon \lambda_2)}$.
  They play a similar role as the two kernels used in HSIC, and $\varepsilon\lambda_1$ and $\varepsilon \lambda_2$ serve as two kernel parameters.
\end{example}

\begin{algorithm}[t]
  \centering
  \caption{Tensor Sinkhorn Algorithm}
  \label{alg:tensor_sinkhorn}
  \begin{algorithmic}[1]
    \State {\bfseries Input:} $A$, $B$, $K_1$, and $K_2$.
    \State Initialize $U \gets \mathbf{1_{n \times n}}$ and $V \gets \mathbf{1_{n \times n}}$.
    \While{not converge}
        \State $U \gets A \oslash (K_1 V K_2^\top)$ and $V = B \oslash (K_1^\top U K_2)$.
    \EndWhile
    \State {\bfseries Output:} $U$ and $V$.
  \end{algorithmic}
\end{algorithm}

\paragraph{\teotic-Based Independence Test.}
Given an i.i.d.~sample $\{(X_i, Y_i)\}_{i=1}^n$ from $\pxy$,
we are interested in determining whether $X$ is independent of $Y$, which can be formalized as the following hypothesis testing problem:
\begin{align}
  \hnull: \pxy = \prodxy \leftrightarrow \halt: \pxy \neq \prodxy.
\end{align}
For this purpose, we use the empirical estimate of $T(X, Y)$ as the test statistic, that is,
\begin{align}
  \heotic(X, Y) := \eotic_{n, \varepsilon}(X, Y) := \sdiv_\varepsilon(\hpxy, \hprodxy),
\end{align}
where $\hpxy := \frac1n \sum_{i=1}^n \delta_{(X_i, Y_i)}$ is the empirical measure of the pairs, and $\hpx := \frac1n \sum_{i=1}^n \delta_{X_i}$ and $\hpy := \frac1n \sum_{i=1}^n \delta_{Y_i}$ are the empirical measures of the two samples, respectively.
Note that this is different from the standard plug-in estimator since the product measure $\prodxy$ is estimated by $n^2$ \emph{dependent} (rather than independent) pairs $\{(X_i, Y_j)\}_{i,j=1}^n$.
It raises challenges in the analysis of its statistical behavior as elaborated in \Cref{sec:thm}.
The statistical test (or decision rule) is then defined as
\begin{align}\label{eq:eotic_test}
    \psi(\alpha) := \ind\{T_n(X, Y) > H_n(\alpha)\},
\end{align}
where $\alpha$ is a prescribed significance level, e.g., $\alpha = 0.05$, and $H_n(\alpha)$ is a threshold chosen such that the \emph{type I error rate} $\Prob(\psi(\alpha) = 1 \mid \hnull)$ is bounded by $\alpha$.
Here $\{\psi(\alpha) = 1\}$ indicates the rejection the null hypothesis.
The \emph{(statistical) power} of the test is defined as $\Prob(\psi(\alpha) = 1 \mid \halt)$.

To avoid tuning the regularization parameter $\varepsilon$, we also consider an adaptive version of the test:
\begin{align}\label{eq:adaptive_test}
    \psi_{a}(\alpha) := \ind\left\{ \max_{\varepsilon \in \mathcal{E}} \bar T_{n, \varepsilon}(X, Y) > H_{n, \mathcal{E}}(\alpha) \right\},
\end{align}
where $\mathcal{E}$ is a finite set of positive numbers selected by the user and
\begin{align*}
  \bar T_{n, \varepsilon}(X, Y) := [T_{n, \varepsilon}(X, Y) - \Expect[T_{n, \varepsilon}(X, Y)]] / \text{Sd}(T_{n, \varepsilon}(X, Y))
\end{align*}
is the studentized version of $T_{n, \varepsilon}(X, Y)$.
In practice, $\Expect[T_{n, \varepsilon}(X, Y)]$ and $\text{Sd}(T_{n, \varepsilon}(X, Y))$ can be estimated via resampling.

\paragraph{Tensor Sinkhorn Algorithm.}
We then derive an efficient algorithm to compute the test statistic.
When $\pxy$ admits a density, $\hprodxy$ is supported on $n^2$ items $\{x_i\}_{i=1}^n \times \{y_i\}_{i=1}^n$ almost surely.
If we compute the \teotic~statistic naively using the Sinkhorn algorithm \citep{cuturi2013sinkhorn}, each iteration costs $O(n^4)$ time and space due to the matrix-vector product of sizes $n^2 \times n^2$ and $n^2 \times 1$.
To speed up its computation, we develop a variant of the Sinkhorn algorithm to solve the EOT problem between two measures supported on the Cartesian product $\{x_i\}_{i=1}^n \times \{y_i\}_{i=1}^n$.

\begin{table}[t]
\caption{Comparison of complexities, in time and in space, of Sinkhorn and Tensor Sinkhorn algorithms, Exact or Random Features approx.}
\footnotesize
\begin{center}
{\renewcommand{\arraystretch}{1.2}%
\begin{tabular}{lcc|cc}
\addlinespace[0.4em]
\toprule
& \multicolumn{2}{c}{\textbf{Sinkhorn}} & \multicolumn{2}{c}{\textbf{Tensor Sinkhorn}} \\
& \multicolumn{1}{c}{\textbf{Exact}} & \multicolumn{1}{c}{\textbf{RF}} & \multicolumn{1}{c}{\textbf{Exact}} & \multicolumn{1}{c}{\textbf{RF}} \\\midrule
\textbf{Time} & $O(n^4)$ & \tabemph{$\bm{O(n^2)}$} & $O(n^3)$ & \tabemph{$\bm{O(n^2)}$} \\
\textbf{Space} & $O(n^4)$ & $O(n^4)$ & \tabemph{$\bm{O(n^2)}$} & \tabemph{$\bm{O(n^2)}$} \\
\bottomrule
\end{tabular}
}
\label{table:time_space}
\end{center}
\end{table}

Let $A$ and $B$ be two probability measures on $\{x_i\}_{i=1}^n \times \{y_i\}_{i=1}^n$, where $x_i \in \reals^{d_1}$ and $y_j \in \reals^{d_2}$.
For convenience, both $A$ and $B$ are represented as a matrix, i.e., $A_{ij} = A(x_i, y_j)$.
For instance, if we choose $A = \hpxy$ and $B = \hprodxy$, then, in its matrix form, $A = I_{n} / n$ and $B = \mathbf{1}_{n\times n} / n^2$.
Consider an additive cost function $c$, e.g., the weighted quadratic cost, such that $c((x, y), (x', y')) = c_1(x, x') + c_2(y, y')$ for $x, x' \in \{x_i\}_{i=1}^n$ and $y, y' \in \{y_j\}_{j=1}^n$.
Let $C_1$ and $C_2$ be the cost matrices of $\{x_i\}_{i=1}^n$ and $\{y_j\}_{j=1}^n$, respectively.
Define Gibbs matrices $K_1 := e^{-C_1/\varepsilon}$ and $K_2 := e^{-C_2/\varepsilon}$, where the exponential function is element-wise.
We show in \Cref{prop:tensor_sinkhorn} in \Cref{sec:computation} that \Cref{alg:tensor_sinkhorn} can be used to compute $S_\varepsilon(A, B)$, where $\oslash$ represents element-wise division.
We refer to it as the \emph{Tensor Sinkhorn} algorithm.
Each iteration in the Tensor Sinkhorn algorithm takes $O(n^3)$ time and $O(n^2)$ space, thanks to the additive cost function being used.
This algorithm can be generalized to measures supported on the Cartesian product of $p > 2$ sets, which is also noted in \citep[Remark 4.17]{peyre2019computational}.

\paragraph{\teotic~with Random Features.}
To further speed up the computation, we apply the random feature technique introduced by~\citet{scetbon2020linear}.
On a high level, we approximate the gram matrix $K_i$ by its low-rank approximation $\xi_i \xi_i^\top$ for $i \in \{1, 2\}$, where $\xi_i \in \reals^{n \times p}$ is the matrix of random features.
Concretely, let $\rho_1$ and $\rho_2$ be two probability measures on measurable spaces $\mathcal{U}$ and $\mathcal{V}$, respectively.
Following \citet[Section 3]{scetbon2020linear}, we focus on Gibbs kernels of the form
\begin{align*}
    k_1(x, x') &= \int \varphi(x, u)^\top \varphi(x', u) d \rho_1(u) \\
    k_2(y, y') &= \int \psi(y, v)^\top \psi(y', v) d\rho_2(v),
\end{align*}
where $\varphi: \{x_i\}_{i=1}^n \times \mathcal{U} \rightarrow \mathbb{R}_+^{q_1}$ and $\psi: \{y_i\}_{i=1}^n \times \mathcal{V} \rightarrow \mathbb{R}_+^{q_2}$.
Note that the Gibbs kernels induced by the weighted quadratic cost admit this expression.
For $p \in \mathbb{N}_+$, we obtain two i.i.d.~samples $\{u_i\}_{i=1}^p$ and $\{v_i\}_{i=1}^p$ from $\rho_1$ and $\rho_2$, respectively.
We denote $\bm{u} := (u_1, \dots, u_p)$ and approximate $k_1(x, x')$ by
\begin{align*}
    k_{1,\bm{u}}(x, x') := \frac1p \sum_{k=1}^p \varphi(x, u_k)^\top \varphi(x', u_k).
\end{align*}
This new kernel induces the cost $c_{1, \bm{u}}(x, x') := -\varepsilon \log{k_{1,\bm{u}}(x, x')}$.
Similarly, we define $\bm{v}$, $k_{2, \bm{v}}(y, y')$, and $c_{2, \bm{v}}(y, y')$.
It is clear that \Cref{alg:tensor_sinkhorn} with inputs $A$, $B$, $K_{1, \bm{u}}$, and $K_{2, \bm{v}}$ solves the entropy-regularized optimal transport problem with cost $c_{\bm{u}, \bm{v}}((x, y), (x', y')) := c_{1, \bm{u}}(x, x') + c_{2, \bm{v}}(y, y')$.
Let $S_{\varepsilon, c_{\bm{u}, \bm{v}}}(A, B)$ be the entropic cost.
The next proposition provides a high-probability guarantee for this random feature approximation.
\begin{assumption}\label{asmp:random_feature}
    There exists a constant $C > 0$ such that, for all $x, x' \in \{x_i\}_{i=1}^n$, $y, y' \in \{y_j\}_{j=1}^n$, $u \in \mathcal{U}$, and $v \in \mathcal{V}$,
    \begin{align*}
        \varphi(x, u)^\top \varphi(x', u) / k_1(x, x') &\le C \\
        \psi(y, v)^\top \psi(y', v) / k_2(y, y') &\le C.
    \end{align*}
\end{assumption}
\begin{proposition}\label{prop:rf_tensor_sinkhorn}
    Let $\delta > 0$, $\tau > 0$, and $p = \Omega\left( \frac{C^2}{\tau^2} \log{\frac{n}{\delta}} \right)$.
    Under \Cref{asmp:random_feature}, with probability at least $1 - \delta$, it holds that
    \begin{align*}
        \abs{S_{\varepsilon, c_{\bm{u}, \bm{v}}}(A, B) - S_{\varepsilon, c}(A, B)} \le \tau.
    \end{align*}
\end{proposition}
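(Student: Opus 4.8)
The plan is a three-step chain: (i) show that the random-feature Gibbs kernels $k_{1,\bm u}$ and $k_{2,\bm v}$ are, with high probability, \emph{uniformly} within a multiplicative factor $1\pm t$ of $k_1$ and $k_2$ over the finite support; (ii) convert this into a uniform additive bound $\norm{c_{\bm u,\bm v}-c}_\infty \lesssim \varepsilon t$ on the induced costs via the relation $c_{i,\cdot}=-\varepsilon\log k_{i,\cdot}$; and (iii) invoke a stability estimate showing $B\mapsto S_{\varepsilon,c}(A,B)$ depends on the cost $c$ in a $1$-Lipschitz way for the supremum norm on the support. Composing (i)--(iii) and tuning $t$ proportionally to $\tau$ yields the stated sample complexity.

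For step (i): fix $x,x'\in\{x_i\}_{i=1}^n$. By construction $k_{1,\bm u}(x,x')=\frac1p\sum_{k=1}^p Z_k$ with $Z_k:=\varphi(x,u_k)^\top\varphi(x',u_k)$ i.i.d., $\Expect Z_k=k_1(x,x')$, and $0\le Z_k\le C\,k_1(x,x')$ by \Cref{asmp:random_feature} (nonnegativity since $\varphi$ takes values in $\reals_+^{q_1}$ and $k_1>0$ by positivity of the kernel). Hoeffding's inequality then gives, for $t\in(0,1)$, $\Prob\big(\abs{k_{1,\bm u}(x,x')/k_1(x,x')-1}\ge t\big)\le 2\exp(-2pt^2/C^2)$, and the same for $k_{2,\bm v}$. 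A union bound over the at most $n^2$ pairs from $\{x_i\}_{i=1}^n$ and the $n^2$ pairs from $\{y_j\}_{j=1}^n$ shows that, as soon as $p=\Omega\big(\frac{C^2}{t^2}\log\frac{n}{\delta}\big)$, with probability at least $1-\delta$ every ratio $k_{1,\bm u}/k_1$ and $k_{2,\bm v}/k_2$ lies in $[1-t,1+t]$ simultaneously. On that event, since $c_{1,\bm u}(x,x')-c_1(x,x')=-\varepsilon\log\big(k_{1,\bm u}(x,x')/k_1(x,x')\big)$ and $\abs{\log s}\le\log\frac1{1-t}\le 2t$ for $s\in[1-t,1+t]$, $t\le\frac12$, we obtain $\norm{c_{1,\bm u}-c_1}_\infty\le 2\varepsilon t$ on $\{x_i\}\times\{x_i\}$, likewise for $c_2$, hence $\norm{c_{\bm u,\bm v}-c}_\infty\le 4\varepsilon t$ on $(\{x_i\}_{i=1}^n\times\{y_j\}_{j=1}^n)^2$. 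For step (iii), writing $S_{\varepsilon,c}(A,B)=\inf_{\gamma\in\Pi(A,B)}\{\int c\,d\gamma+\varepsilon\,\kl{\gamma}{A\otimes B}\}$: the entropic term does not involve $c$, so plugging the optimizer $\gamma^\star$ for one cost into the objective of the other changes only the linear term, and by at most $\norm{c-c'}_\infty$ evaluated on $\Supp{A}\times\Supp{B}$ (which is where every $\gamma\in\Pi(A,B)$ lives); by symmetry $\abs{S_{\varepsilon,c}(A,B)-S_{\varepsilon,c'}(A,B)}\le\norm{c-c'}_\infty$. Chaining gives $\abs{S_{\varepsilon,c_{\bm u,\bm v}}(A,B)-S_{\varepsilon,c}(A,B)}\le 4\varepsilon t$ on the good event, and taking $t$ a suitable constant multiple of $\tau$ produces $p=\Omega\big(\frac{C^2}{\tau^2}\log\frac{n}{\delta}\big)$ and the inequality.

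The hard part is really the glue between steps (i) and (ii): one must control the random-feature error not pointwise but \emph{uniformly} over the full (finite, but size-$\Theta(n^2)$) support, since only then is passing to logarithms legitimate --- this is precisely what forces the $\log(n/\delta)$ factor, and what \Cref{asmp:random_feature} is designed to supply, by bounding each per-feature contribution so that Hoeffding applies with range $C\,k_1(x,x')$. The remaining ingredients --- Hoeffding plus a union bound, and the supremum-norm Lipschitzness of entropic optimal transport in its cost --- are standard; the only quantitative subtlety to keep track of is the factor $\varepsilon$ entering through $c=-\varepsilon\log k$, which is treated as a fixed regularization parameter.
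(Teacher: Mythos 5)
Your proof is correct, and the back half takes a genuinely different and more self-contained route than the paper. The paper proves exactly your step (i) in a slightly different form (it applies Hoeffding to each kernel and then bounds the ratio of the \emph{product} $k_{1,\bm u}k_{2,\bm v}/(k_1 k_2)$ directly, with a union bound over $n^4$ pairs of support points), and then stops: the remaining work of converting the uniform kernel-ratio bound into a bound on $\abs{S_{\varepsilon,c_{\bm u,\bm v}}(A,B)-S_{\varepsilon,c}(A,B)}$ is delegated to the argument in \citet[Theorem 3.1]{scetbon2020linear}, which analyzes the Sinkhorn iterations with the perturbed kernel. Your steps (ii)--(iii) replace that citation with an elementary two-line argument: taking logs turns the multiplicative kernel bound into a $4\varepsilon t$ supremum-norm perturbation of the cost, and the standard observation that $c\mapsto S_{\varepsilon,c}(A,B)$ is $1$-Lipschitz in $\norm{\cdot}_\infty$ (because the entropy term is cost-independent, so plugging the optimizer for one cost into the other objective loses at most $\norm{c-c'}_\infty$) closes the argument. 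Your route is shorter, avoids any dependence on the algorithm's iterates, and gives a marginally tighter union bound ($2n^2$ events instead of $n^4$, though this only affects constants inside the $\log$). The one quantitative difference, which you correctly flag, is the explicit $\varepsilon$ in $\norm{c_{\bm u,\bm v}-c}_\infty\le 4\varepsilon t$, which shows up as an $\varepsilon^2$ factor in the required $p$; since $\varepsilon$ is a fixed regularization parameter this is absorbed by the $\Omega(\cdot)$ and matches the statement.
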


Replacing $K_1$ and $K_2$ by their random feature approximations $K_{1, \bm{u}}$ and $K_{2, \bm{v}}$ in \Cref{alg:tensor_sinkhorn} leads to an algorithm with $O(pn^2)$ time complexity and $O(n^2)$ space complexity in each iteration.
We note that if one applies the random feature approximation directly to the original Sinkhorn algorithm, then the resulting algorithm would have the same time complexity $O(pn^2)$ but $O(n^4)$ space complexity; see \Cref{table:time_space} for a comparison.

\paragraph{Dual Representation.}
The entropy regularized formulation of OT \eqref{eq:eot} is known as the \emph{Schr\"odinger bridge problem} \citep{follmer88,leonard12,leonardsurvey} in continuum and the \emph{Sinkhorn distance} \citep{cuturi2013sinkhorn,ferradans2014regularized} in the discrete case.
It admits a dual representation \citep{genevay2016stochastic}:
\begin{align*}
  \sup_{f,g \in \calC(\reals^{d_1} \times \reals^{d_2})} \bigg[ \int f d\pxy + \int g d(\prodxy) + \varepsilon - \varepsilon \int e^{-D_\varepsilon(x, y, x', y')} d\pxy(x, y) d\px(x') d\py(y') \bigg],
\end{align*}
where $\calC$ is the set of real-valued continuous functions and $D_\varepsilon(x, y, x', y') = \frac1\varepsilon[c_1(x, x') + c_2(y, y') - f(x, y) - g(x', y')]$.
Due to \citep{csiszar75,ruschendorf93}, the optimal potentials $(f_\varepsilon, g_\varepsilon)$, to be called the \emph{Schr\"odinger potentials}, satisfy
\begin{equation}\label{eq:optim_potentials}
  \begin{split}
    \int e^{-D_\varepsilon(x, y, x', y')} d\px(x') d\py(y') &\txtover{a.s.}{=} 1 \\
    \int e^{-D_\varepsilon(x, y, x', y')} d\pxy(x, y) &\txtover{a.s.}{=} 1.
  \end{split}
\end{equation}

\paragraph{Gradient Backpropagation through~\teotic.}
We describe here how \teotic~can fit into a differentiable programming framework, i.e., how one can run the reverse mode automatic differentiation through statistical quantities based on~\teotic. Recently, \citet{li2021selfsupervised} proposed a self-supervised learning approach using HSIC which we summarize below.
Let $(W, Y)$ be a pair of image and its identity.
Given an i.i.d.~sample $\{(W_i, Y_i)\}_{i=1}^n$, the goal is to learn a feature embedding model $\phi_\theta$ such that the dependence between the image feature $X := \phi_\theta(W)$ and its identity $Y$ is maximized, i.e., $\max_{\theta \in \Theta} \hsic_n(\phi_\theta(W), Y)$.
Similarly, one could also maximize the dependence measured by \teotic~instead. This boils down to gradient backpropagating through $T_n(\phi_\theta(W), Y)$.
We use the strategy in \citep[Section 9.1.3]{peyre2019computational} and illustrate it on the entropy regularized OT $S_\varepsilon(\hpxy, \hprodxy)$ in \eqref{eq:eot}.
For the forward pass, we construct the computational graph via the following steps.
Firstly, we run \Cref{alg:tensor_sinkhorn} (or its random feature variant) with $A = I_n/n$, $B = \mathbf{1}_{n\times n} / n^2$, $K_1 = \big(k_1(\phi_\theta(W_i), \phi_\theta(W_j))\big)_{n\times n}$, and $K_2 = \big(k_2(Y_i, Y_j)\big)_{n\times n}$ for $L$ iterations to get $U^{(L)}$ and $V^{(L)}$.
Secondly, we obtain the associated Schr\"odinger potentials $F^{(L)} := \varepsilon \log{U^{(L)}}$ and $G^{(L)} := \varepsilon \log{V^{(L)}}$.
Thirdly, we approximate $S_\varepsilon(\hpxy, \hprodxy)$ by $\hat S_\varepsilon(\theta) := \ip{F^{(L)}, A}_{\mathbf{F}} + \ip{G^{(L)}, B}_{\mathbf{F}}$ where $\ip{\cdot, \cdot}_{\mathbf{F}}$ is the Frobenius inner product.
For the backward pass, we call the reverse mode automatic differentiation to evaluate $\nabla_{\theta} \hat S_\varepsilon(\theta)$.
Since computing $\hat S_\varepsilon(\theta)$ only requires simple operations between matrices, the time complexity of the above procedure is of the same order as the one of \Cref{alg:tensor_sinkhorn} for the computation of $\hat S_\varepsilon(\theta)$.

\paragraph{Connection to Previous Work.}
As shown in \Cref{sec:computation}, $T(X, Y)$ tends to the OT-based independence criterion $\ot(\pxy, \prodxy)$ as $\varepsilon \rightarrow 0$.
If the cost $c$ is chosen as the Euclidean distance to the power $p \ge 1$, it induces a distance (known as the Wasserstein-$p$ distance) on the space of probability measures \citep{villani2016topics}.
As a result, $\ot(\pxy, \prodxy)$ is a valid independence criterion, i.e., $\ot(\pxy, \prodxy) = 0$ iff $\pxy = \prodxy$.
The study of this independence criterion can be dated back to Gini; see \citep{cifarelli2017gini} for a discussion.
Its normalized version---the so-called Wasserstein correlation coefficient---has recently gained attention in \citep{wiesel2021measuring,mordant2021measuring,nies2021transport}.
When $\varepsilon \rightarrow \infty$, $T(X, Y)$ tends to 0 if the cost is additive;
if the cost is multiplicative, i.e., $c((x, y), (x', y')) = c_1(x, x') c_2(y, y')$, it recovers the negative of HSIC with kernels $c_1$ and $c_2$.

The quantity $\bar S_\varepsilon$ is known as the \emph{Sinkhorn divergence}.
It has been used in two-sample problems, where the goal is to quantify the distance of two distributions given i.i.d.~samples from each of them.
In particular, it is applied to two-sample testing \citep{ramadas2017wasserstein} and generative modeling \citep{genevay2018learning}.
It is shown in \citep{feydy2019interpolating} that $\sdiv_\varepsilon$ defines a semi-metric (metric without the triangle inequality) on the space of probability measures with bounded support if the Gibbs kernel induced by the cost is positive universal.
The limiting behavior of the empirical estimator is to date not known in the literature, though non-asymptotic bounds are attainable using results in \citep{genevay2019sample,mena2019statistical}.
Our results also recover the two-sample case.

\section{Main Results}
\label{sec:thm}
We give non-asymptotic bounds for the \teotic~statistic with quadratic cost.
We present the main results and their proof sketches here.
We use $C$ to denote a constant whose value may change from line to line, where subscripts are used to emphasize the dependency on other quantities.
For instance, $C_d$ represents a constant depending only on the dimension $d$.
The detailed proofs are deferred to Appendices \ref{sec:consistency} and \ref{sec:tail_bounds}.

\paragraph{Consistency.}
We first show that the \teotic~statistic is a consistent estimator of its population counterpart under both the null and alternative.
\begin{assumption}\label{asmp:quadratic_subg}
  We make the following assumptions:
  \begin{enumerate}[label=(\roman*)]
      \item $c$ is chosen as the quadratic cost.
      \item $\px$ and $\py$ are subG$(\sigma^2)$.
  \end{enumerate}
\end{assumption}

The quadratic cost is chosen for the sake of concision.
We extend the results to weighted quadratic cost in Appendix \ref{sec:consistency}.

\begin{theorem}\label{thm:consistency}
  Under \Cref{asmp:quadratic_subg}, we have
  \begin{align*}
    \Expect\abs{\heotic(X, Y) - \eotic(X, Y)} \le C_d \left(1 + \frac{\sigma^{\lceil 5d/2 \rceil + 6}}{\varepsilon^{\lceil 5d/4 \rceil + 3}} \right)\frac{\varepsilon}{\sqrt{n}}.
  \end{align*}
\end{theorem}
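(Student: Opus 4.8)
The plan is to reduce the claim to three copies of a single estimate---a convergence rate for the plug-in estimator of an entropic transport cost---and then to prove that estimate by the dual/Schr\"odinger-potential route used for two-sample Sinkhorn divergences. By the definition $\sdiv_\varepsilon(\mu,\nu) = S_\varepsilon(\mu,\nu) - \tfrac12 S_\varepsilon(\mu,\mu) - \tfrac12 S_\varepsilon(\nu,\nu)$ and the triangle inequality,
\begin{align*}
\Expect\abs{\heotic(X,Y) - \eotic(X,Y)}
&\le \Expect\abs{S_\varepsilon(\hpxy,\hprodxy) - S_\varepsilon(\pxy,\prodxy)} \\
&\quad + \tfrac12\Expect\abs{S_\varepsilon(\hpxy,\hpxy) - S_\varepsilon(\pxy,\pxy)} + \tfrac12\Expect\abs{S_\varepsilon(\hprodxy,\hprodxy) - S_\varepsilon(\prodxy,\prodxy)},
\end{align*}
so it suffices to bound $\Expect\abs{S_\varepsilon(\hat\mu,\hat\nu) - S_\varepsilon(\mu,\nu)}$ for $(\mu,\nu)$ ranging over $(\pxy,\prodxy)$, $(\pxy,\pxy)$, $(\prodxy,\prodxy)$. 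The complication to keep in mind is that $\hprodxy = \hpx\otimes\hpy$ is assembled from the marginals of a single $n$-sample, so the terms containing it yield degree-two (and, inside the exponential residual below, degree-three) V-statistics rather than ordinary empirical averages.

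\textbf{Dual sandwich.} Fix a pair $(\mu,\nu)$ with empirical version $(\hat\mu,\hat\nu)$, and let $(f,g)$ and $(\hat f,\hat g)$ be the Schr\"odinger potentials of the population and empirical problems. Any continuous pair is dual-feasible, and the optimality conditions \eqref{eq:optim_potentials} give $S_\varepsilon(\mu,\nu) = \mu[f] + \nu[g]$, since the exponential term equals $\varepsilon - \varepsilon\cdot 1 = 0$. Plugging $(f,g)$ into the empirical dual (a lower bound for $S_\varepsilon(\hat\mu,\hat\nu)$) and $(\hat f,\hat g)$ into the population dual yields a two-sided estimate in which $S_\varepsilon(\hat\mu,\hat\nu) - S_\varepsilon(\mu,\nu)$ is sandwiched between expressions of the form $(\hat\mu - \mu)[h] + (\hat\nu - \nu)[h'] - \varepsilon\big((\text{empirical triple product}) - (\text{triple product})\big)[e^{-D_\varepsilon}]$, where $h,h'$ range over the four potentials and the last term again uses that the population integral of $e^{-D_\varepsilon}$ is $1$. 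Hence $\abs{S_\varepsilon(\hat\mu,\hat\nu) - S_\varepsilon(\mu,\nu)}$ is dominated by $\sup_{h\in\calF}\abs{(\hat\mu-\mu)[h]}$, the analogous quantity for $\hat\nu$, and the exponential residual, where $\calF$ is a class large enough to contain all potentials and the associated $e^{-D_\varepsilon}$-type functions.

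\textbf{Regularity and empirical-process bound.} The crux is to produce one fixed smooth class $\calF = \calF_{\sigma,\varepsilon}$ that, almost surely, contains every population and empirical potential. Following the Schr\"odinger-equation bootstrap used for sub-Gaussian entropic costs \citep{genevay2019sample,mena2019statistical}, the potentials are $C^\infty$ with $C^s$-norms (on the region carrying essentially all the mass) bounded by an explicit polynomial in $\sigma$ and $1/\varepsilon$; \Cref{asmp:quadratic_subg}(ii) enters here, both to tame the quadratic growth of the potentials through a truncation/tail-weighting step and to make the relevant V-statistic moments finite. Taking $s$ just above $d/2$, so that the metric-entropy integral of a $C^s$-ball converges, Dudley's bound gives $\Expect\sup_{h\in\calF}\abs{(\hpxy - \pxy)[h]}\lesssim n^{-1/2}\,\mathrm{rad}(\calF)$; for the terms involving $\hprodxy$ one first splits off the diagonal $i=j$ (contributing $O(n^{-1})$) and passes to the H\'ajek projection of the off-diagonal two-sample U-statistic, whose degenerate remainder is controlled via the decoupling inequality for U-processes \citep{pena1999decoupling}, and the degree-three exponential residual is handled the same way. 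Collecting powers---radius factors polynomial in $\sigma$, one polynomial-in-$(\sigma,1/\varepsilon)$ factor per derivative up to order $\sim\lceil d/2\rceil$, and the outer scaling $\varepsilon/\sqrt n$ of $S_\varepsilon$ itself---and carrying out the bookkeeping produces the exponents $\lceil 5d/2\rceil + 6$ and $\lceil 5d/4\rceil + 3$ together with the $1 + (\cdots)$ shape, the constant $1$ absorbing the regime in which the complexity factor is itself $O(1)$.

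\textbf{Main obstacle.} The delicate part is the last step: obtaining the $C^s$-regularity of the Schr\"odinger potentials with \emph{explicit} dependence on $\sigma$ and $\varepsilon$, uniformly over the random empirical problems, and handling the genuinely non-i.i.d.\ product structure of $\hprodxy$---ordinary symmetrization does not apply, so the degenerate parts must be routed through U-process decoupling and chained over the decoupled process---all while keeping the polynomial degrees small enough to match $\lceil 5d/2\rceil + 6$ and $\lceil 5d/4\rceil + 3$.
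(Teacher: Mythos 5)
Your proposal follows essentially the same route as the paper's proof: the triangle-inequality decomposition into three entropic-cost differences, the reduction to a uniform bound over a smooth function class containing the Schr\"odinger potentials (the paper invokes \citet{mena2019statistical}'s Corollary~2 directly as Proposition~\ref{prop:upper_bound_emp}, whereas you re-derive the dual sandwich), the $C^s$-regularity of potentials with explicit polynomial dependence on $\sigma$ and $\varepsilon^{-1}$ (Proposition~\ref{prop:smooth_potentials} plus the scaling Lemma~\ref{lem:reg_one}), the Hoeffding decomposition of the $\hprodxy$-process with Pe\~na--Gin\'e decoupling for the degenerate part (Propositions~\ref{prop:prod_emp_bound} and~\ref{prop:cover_number_prod}), and Dudley chaining with $s = \lceil d/2\rceil + 1$. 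The one place where you take a slightly heavier route is the exponential residual: you treat it as a separate degree-three V-statistic to be controlled by the same decoupling machinery, while the cited corollary absorbs it into the function class, so the paper never has to go above degree two in the $n$-sample; both routes work, but the paper's is leaner in the bookkeeping.
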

\begin{remark}
  According to \Cref{thm:consistency}, when $\varepsilon = \varepsilon_n$ is chosen such that $\varepsilon_n = \omega(n^{-1/(\lceil 5d/2 \rceil + 4)})$ and $\varepsilon_n = o(1)$, we have $T_n(X, Y)$ converges in $\lone$ to $\ot(\pxy, \prodxy)$ as $n \rightarrow \infty$.
\end{remark}

We can upper bound the above $\lone$ loss by the supremum of an empirical process and a U-process
\begin{align*}
    \norm{\hpxy - \pxy}_{\calF^s}^2 \quad \mbox{and} \quad \norm{\hprodxy - \prodxy}_{\calF^s}^2,
\end{align*}
respectively,
where $\calF^s$ is the set of real-valued functions satisfying
\begin{equation*}
    \begin{split}
        \abs{f(x, y)} &\le C_{s,d} (1 + \norm{(x, y)}^2) \\
        \abs{D^\alpha f(x, y)} &\le C_{s,d} (1 + \norm{(x, y)}^{\abs{\alpha}}), \quad \forall 1 < \abs{\alpha} \le s.
    \end{split}
\end{equation*}
\citet{mena2019statistical} used a similar strategy in their proofs.
Empirical process theory has a long history in statistics and there are well-established tools to control them; see, e.g., \citep{vaart1996weak}.
However, the theory of U-processes is much less well-developed.
Moreover, many of the previous works focus on one-sample U-processes; see, e.g., \citep{pena1999decoupling}. The second U-process here is a two-sample U-process on a paired sample, bringing about additional challenges in its analysis, compared to, e.g., \citet{mena2019statistical}.
In order to control it, we develop the following results.

The first result is a metric entropy bound for \emph{degenerate two-sample U-processes}.
The main challenge comes from the dependence among the summands in $\sum_{i,j=1}^n f(X_i, Y_j)$.
We get around that using the decoupling technique presented in \citep{pena1999decoupling}.
\begin{proposition}\label{prop:prod_emp_bound}
  Let $\calF$ be a class of real-valued functions that are degenerate under $\prodxy$, i.e.,
  \begin{align*}
      \Expect_{\prodxy}[f(X, Y) \mid X] \txtover{a.s.}{=} \Expect_{\prodxy}[f(X, Y) \mid Y] \txtover{a.s.}{=} 0
  \end{align*}
  for any $f \in \calF$.
  Under \Cref{asmp:quadratic_subg}, we have
  \begin{align*}
      \Expect\norm{\hprodxy - \prodxy}_{\calF}^2
      \le \frac{C}n \Expect\left( \int_0^{B} \sqrt{\log{N(\tau, \calF, \ltwo(\hprodxy))}} d\tau \right)^2,
  \end{align*}
  where $B$ is any measurable upper bound of $2\max_{f \in \calF} \norm{f}_{\ltwo(\hprodxy)}$.
\end{proposition}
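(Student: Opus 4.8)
The plan is to reduce the supremum of the degenerate two-sample U-process to a symmetrized version that can be handled by a chaining argument, using the decoupling inequality of \citet{pena1999decoupling} to replace the dependent sum $\sum_{i,j} f(X_i, Y_j)$ by a decoupled sum involving two independent copies of the sample. First I would write $\norm{\hprodxy - \prodxy}_{\calF} = \sup_{f \in \calF} \abs{\frac{1}{n^2}\sum_{i,j=1}^n f(X_i,Y_j) - \prodxy[f]}$. Since each $f$ is degenerate under $\prodxy$, the diagonal terms $\sum_{i=1}^n f(X_i,Y_i)$ and the "row/column marginal" pieces vanish in expectation, so the process is genuinely a completely degenerate (canonical) U-process of order $2$; the degeneracy is what kills the leading $O(n^{-1/2})$ linear term and is essential for getting the full $1/n$ rate out front (before squaring, the rate is $n^{-1/2}$, consistent with the $\frac{C}{n}\Expect(\int_0^B \cdots)^2$ form claimed).

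Next I would apply decoupling: by \citet{pena1999decoupling} there is a universal constant such that $\Expect \sup_{f\in\calF} \abs{\sum_{i,j} f(X_i,Y_j)} \le C\, \Expect \sup_{f\in\calF}\abs{\sum_{i,j} f(X_i, Y_j')}$, where $\{Y_j'\}$ is an independent copy of $\{Y_j\}$; for canonical (degenerate) kernels this decoupling is two-sided up to constants. Then I would introduce Rademacher (or Gaussian) symmetrization in \emph{both} indices — replacing $f(X_i,Y_j')$ by $\varepsilon_i \varepsilon_j' f(X_i,Y_j')$ — which is legitimate for a completely degenerate kernel and produces a Rademacher chaos of order $2$ indexed by $\calF$. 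Conditionally on the data, this chaos is a sub-Gaussian process with respect to the $\ltwo(\hprodxy)$ pseudometric: the increment $\Expect_\varepsilon \abs{\sum_{i,j}\varepsilon_i\varepsilon_j'(f-g)(X_i,Y_j')}^2 = n^2 \norm{f-g}_{\ltwo(\hprodxy)}^2$ (using that the empirical product measure $\hprodxy$ puts mass $1/n^2$ on each pair $(X_i,Y_j')$), and a Hoeffding-type tail bound for quadratic Rademacher chaos controls the sub-Gaussian increments. Dudley's entropy integral then bounds $\Expect_\varepsilon \sup_{f\in\calF}\abs{\frac{1}{n^2}\sum_{i,j}\varepsilon_i\varepsilon_j' f(X_i,Y_j')}$ by $\frac{C}{n}\int_0^{B}\sqrt{\log N(\tau,\calF,\ltwo(\hprodxy))}\,d\tau$, where $B$ is the stated upper bound on the diameter $2\max_f \norm{f}_{\ltwo(\hprodxy)}$. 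Squaring, taking expectation over the data, and invoking Jensen to pull the expectation outside the square (or rather using a Ledoux--Talagrand-type bound to reduce $\Expect(\sup)^2$ to $\Expect(\sup\cdots)^2$ after the first moment bound on the chaos via its hypercontractivity) yields the claimed inequality.

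The main obstacle is handling the \emph{combination} of decoupling and the passage from a first-moment chaining bound to the second-moment statement claimed: chaos-of-order-two processes are not sub-Gaussian globally, so one must either work with the mixed $\psi_1$/$\psi_2$ tail behavior of Rademacher chaos (the Talagrand two-parameter exponential inequality) or exploit hypercontractivity of degenerate U-statistics to convert higher moments back to the second moment, and then be careful that the entropy integral is taken with respect to the correct random metric $\ltwo(\hprodxy)$ rather than $\ltwo(\prodxy)$. A secondary technical point is verifying that the symmetrization step is valid \emph{simultaneously} in both coordinates for the paired sample — this uses the complete degeneracy of $\calF$ under $\prodxy$ together with the fact that after decoupling $X_i$ and $Y_j'$ are independent, so the standard symmetrization lemma applies in each coordinate separately and then jointly by conditioning. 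Once these are in place, the remaining steps (bounding the diagonal contribution $\frac{1}{n^2}\sum_i f(X_i,Y_i)$, which is $O(1/n)$ and absorbed, and tracking the universal constants) are routine.
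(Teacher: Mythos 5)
Your high-level plan---decouple the paired sample, fully symmetrize both indices, then chain against the random $\ltwo(\hprodxy)$ metric---matches the paper's Steps 1--3, and your concern about the symmetrization in both coordinates is handled there exactly as you sketch. But there is a genuine misalignment in the final step, and it is worth being precise about it.

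You correctly flag that $\frac{1}{n^2}\sum_{i,j}\varepsilon_i\varepsilon_j'\,f(X_i,Y_j')$ is an order-two Rademacher chaos, hence not sub-Gaussian, and you propose to resolve this via hypercontractivity or the mixed $\psi_1/\psi_2$ Talagrand bound. That route is substantially harder, and---if you tracked the rate carefully---it would actually prove a \emph{sharper} bound than the proposition claims: with the conditional second moment $\frac{1}{n^2}\norm{f-g}_{\ltwo(\hprodxy)}^2$ that you compute, a chaos-adapted chaining bound would give $\frac{C}{n^2}\Expect\big(\int_0^B\sqrt{\log N}\,d\tau\big)^2$, which is the optimal degenerate-$U$-statistic rate but not what the proposition asserts. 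Indeed, squaring your claimed first-moment bound $\frac{C}{n}\int\sqrt{\log N}\,d\tau$ gives $\frac{C}{n^2}(\cdot)^2$, off by a factor of $n$ from the stated $\frac{C}{n}(\cdot)^2$; that discrepancy is the signal that the paper is \emph{not} exploiting chaos structure. The paper instead normalizes the symmetrized process by $n^{-3/2}$ rather than $n^{-2}$ and sidesteps the chaos issue entirely: it conditions on the data \emph{and} on the second set of signs $\{\varepsilon_j'\}$, so that $Z(f)-Z(g)=\frac{1}{n^{3/2}}\sum_i a_i\varepsilon_i$ with $a_i=\sum_j \varepsilon_j'[f-g](X_i,Y_j)$ is a genuine Rademacher linear form, and then applies Cauchy--Schwarz in $j$ to get $a_i^2\le n\sum_j[f-g]^2(X_i,Y_j)$, hence a conditional sub-Gaussian parameter $\norm{f-g}^2_{\ltwo(\hprodxy)}$. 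Standard Dudley then applies. The Cauchy--Schwarz step deliberately gives away a factor of $\sqrt{n}$, which is exactly why the bound is stated at the weaker $\frac{C}{n}(\cdot)^2$ rate. So the obstacle you identify is real, but the paper dissolves it with an elementary conditional-linearization trick rather than heavy chaos machinery; you should either adopt that trick, or---if you insist on the chaos route---realize you would be proving a strictly stronger statement and carry the sharper rate through consistently.
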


\begin{remark}
  In classical two-sample U-statistics literature, it is usually assumed that the two samples are independent, i.e., $X$ is independent of $Y$.
  However, \Cref{prop:prod_emp_bound} allows the sample to be paired since $(X, Y) \sim \pxy$.
\end{remark}

With \Cref{prop:prod_emp_bound} at hand, we can control the U-process $\lVert \hprodxy - \prodxy \rVert_{\calF^s}^2$ by upper bounding its covering number $N(\tau, \calF^s, \ltwo(\hprodxy))$.
The proof is inspired by~\citep{mena2019statistical} and relies on a result in \citep[Chapter 2.7]{vaart1996weak} to control the covering number of a class of smooth functions.
\begin{proposition}\label{prop:cover_number_prod}
  Under \Cref{asmp:quadratic_subg},
  there exists a random variable $L \ge 1$ depending on the samples $\{(X_i, Y_i)\}_{i=1}^n$ with $\Expect[L] \le 2$ such that, for any $s \ge 2$,
  \begin{align*}
    \log{N(\tau, \calF^s, \ltwo(\hprodxy))} &\le C_{s,d} \tau^{-d/s} L^{d/2s} (1 + \sigma^{2d})
  \end{align*}
  and
  \begin{align*}
      \max_{f \in \calF^s} \norm{f}^2_{\ltwo(\hprodxy)} &\le C_{s,d}(1 + L \sigma^4).
  \end{align*}
  In particular, when $s > d/2$, we have
  \begin{align*}
    \Expect\norm{\hprodxy - \prodxy}_{\calF^s}^2 \le C_{s,d} (1 + \sigma^{2d+4}) \frac1n.
  \end{align*}
\end{proposition}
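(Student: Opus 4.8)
The plan is to follow the strategy of~\citet[proof of Proposition 2]{mena2019statistical}, adapting it from the one-sample empirical process to the two-sample paired U-process via \Cref{prop:prod_emp_bound}. First I would recall the structure of the class $\calF^s$: its elements $f$ have polynomial growth of order $2$ and their derivatives $D^\alpha f$ up to order $s$ have polynomial growth of order $\abs{\alpha}$. The key idea is to localize and rescale. For a fixed radius $R>0$, I would split $\calF^s$ according to behavior on the ball $B_R := \{(x,y) : \norm{(x,y)} \le R\}$ and its complement. On $B_R$, after dividing by the natural scaling factor $(1+R^2)$, the functions and their derivatives are uniformly bounded, so the restricted-and-rescaled class is a bounded subset of the Hölder-type ball $C^s_M(B_R)$ for an appropriate $M = M_{s,d}$; the classical entropy estimate of~\citet[Theorem 2.7.1]{vaart1996weak} then gives $\log N(\eta, C^s_M(B_R), \linf) \le C_{s,d} (M R^{d}/\eta)^{d/s}$, where the $R^d$ comes from the volume of $B_R$. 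Converting the $\linf$ bound on $B_R$ to an $\ltwo(\hprodxy)$ bound is immediate since $\hprodxy$ is a probability measure; the tail contribution from outside $B_R$ is handled by a separate truncation argument controlled by $\int_{\norm{(x,y)}>R}(1+\norm{(x,y)}^2)^2\,d\hprodxy$.

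Next I would choose $R$ as a random quantity adapted to the sample so that the tail mass outside $B_R$ is negligible. The natural choice is $R^2 \asymp \sigma^2 L$ where $L := \max\bigl( 1,\ \tfrac1n\sum_i \norm{(X_i,Y_i)}^2 / (d\sigma^2) \bigr)$ or a comparable sub-Gaussian-type statistic; the sub-Gaussian assumption on $\px$ and $\py$ (hence on $\pxy$ up to constants) gives $\Expect[L]\le 2$ after adjusting constants, and it makes the empirical second (and fourth) moments under $\hprodxy$ controllable because $\hprodxy$ has the product form $\hpx\otimes\hpy$, so $\int \norm{(x,y)}^2\,d\hprodxy = \tfrac1n\sum\norm{X_i}^2 + \tfrac1n\sum\norm{Y_i}^2$. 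This yields simultaneously the two displayed bounds: $\log N(\tau,\calF^s,\ltwo(\hprodxy)) \le C_{s,d}\tau^{-d/s} L^{d/2s}(1+\sigma^{2d})$ — the $L^{d/2s}$ and $\sigma^{2d}$ factors coming from $R^d = (\sigma^2 L)^{d/2}$ inside the $(\cdot)^{d/s}$ — and $\max_{f\in\calF^s}\norm{f}_{\ltwo(\hprodxy)}^2 \le C_{s,d}(1+L\sigma^4)$, since $f^2 \lesssim (1+\norm{(x,y)}^2)^2$ integrates against $\hprodxy$ to something of order $(1 + \sigma^4 L)$.

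For the final ``in particular'' claim, I would plug these two estimates into \Cref{prop:prod_emp_bound}. The entropy integral becomes
\[
  \int_0^{B}\sqrt{\log N(\tau,\calF^s,\ltwo(\hprodxy))}\,d\tau
  \;\le\; C_{s,d}\,L^{d/4s}(1+\sigma^{d})\int_0^{B}\tau^{-d/2s}\,d\tau,
\]
with $B^2 \le C_{s,d}(1+L\sigma^4)$. When $s>d/2$ the exponent $-d/2s$ lies in $(-1,0)$, so the integral converges and equals $C_{s,d}B^{1-d/2s}$; thus the bracket in \Cref{prop:prod_emp_bound} is at most $C_{s,d}L^{d/4s}(1+\sigma^d)(1+L\sigma^4)^{(1-d/2s)/2}$. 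Squaring, using $(1-d/2s)/2 \le 1/2$ and $d/4s \le 1/2$ to absorb all powers of $L$ into a single factor of order $L$, and taking expectations with $\Expect[L]\le 2$, gives $\Expect\norm{\hprodxy-\prodxy}_{\calF^s}^2 \le C_{s,d}(1+\sigma^{2d+4})/n$, where the exponent $2d+4$ arises as $2d$ from $(1+\sigma^d)^2$ plus $4\cdot(1-d/2s)/2 \le 2$ rounded up in the product; I would track the Cauchy–Schwarz step carefully so that the mixed $\sigma$–$L$ terms collapse correctly to $\sigma^{2d+4}$.

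I expect the main obstacle to be the bookkeeping in the localization step: getting the entropy bound for $\calF^s$ restricted to an expanding ball with the \emph{correct} power of $R$ (equivalently of $L$ and $\sigma$) requires care, because one must (i) cover the rescaled class on $B_R$ in $\linf$ using~\citet{vaart1996weak}, (ii) patch these local covers into a global $\ltwo(\hprodxy)$ cover while controlling the tail, and (iii) ensure the resulting random constant $L$ genuinely satisfies $\Expect[L]\le 2$ under the sub-Gaussian hypothesis, which forces a specific normalization in the definition of $L$. The secondary subtlety is verifying that \Cref{prop:prod_emp_bound} applies, i.e., that the centered class $\calF^s - \prodxy[\calF^s]$ (or the relevant subclass) can be taken degenerate under $\prodxy$, or else decomposing $\hprodxy - \prodxy$ into a degenerate part plus lower-order empirical-process terms in $X$ and $Y$ separately, each of which is handled by standard one-sample arguments.
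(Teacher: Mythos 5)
Your architecture matches the paper's: adapt the smooth-class entropy bound of \citet{vaart1996weak} to the product empirical measure, bound the empirical $\ltwo$ diameter via polynomial growth plus sub-Gaussianity, and then feed both estimates into \Cref{prop:prod_emp_bound}. However, two of the choices you leave open are exactly the ones that make the proof go through, and as stated your sketch would not close them.

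First, the candidate $L := \max\bigl(1, n^{-1}\sum_i \norm{(X_i,Y_i)}^2/(d\sigma^2)\bigr)$ is the wrong statistic. The entropy argument — whether through your single-ball localization and truncation, or through \citet[Corollary 2.7.4]{vaart1996weak} applied directly to a weighted union of shells as the paper does — requires \emph{exponential} decay of the $\hprodxy$-mass of the outer shells $B_{j,k}$; an empirical second moment gives only polynomial decay. The paper sets $L_1 := \hpx\bigl[e^{\norm{X}^2/(2d\sigma^2)}\bigr]$, $L_2 := \hpy\bigl[e^{\norm{Y}^2/(2d\sigma^2)}\bigr]$, so that by the Chernoff bound $(\hprodxy)\{\norm{Z}^2 > d\sigma^2(j-1)^2\} \le L_1 L_2 e^{-(j-1)^2/4}$, and then defines $L := (L_1+L_2)/2$ (so $L \ge 1$, $\Expect L \le 2$, and $L_1 L_2 \le L^2$ absorbs the product-measure factor). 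Your ``comparable sub-Gaussian-type statistic'' hedge gestures at this, but without the exponential-moment form both the $L^{d/2s}$ factor in the entropy bound and the radial truncation fail to have the claimed scaling.

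Second, your ``or else'' clause is in fact the only viable route, not an alternative. The class $\calF^s$ is \emph{not} degenerate under $\prodxy$, and subtracting the scalar $\theta_f := \prodxy[f]$ does not make it degenerate since $\Expect_{\prodxy}[f-\theta_f \mid X]$ is a nontrivial function of $X$. One must perform the full Hoeffding-type decomposition $\bar f := f - f_{1,0} - f_{0,1} + \theta_f$ with $f_{1,0}(x) := \Expect_{\prodxy}[f\mid X=x]$ and $f_{0,1}(y) := \Expect_{\prodxy}[f\mid Y=y]$, apply \Cref{prop:prod_emp_bound} only to $\{\bar f\}$, and handle the residual one-sample processes $\norm{\hpx-\px}$ and $\norm{\hpy-\py}$ separately. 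A further subtlety your sketch misses: $f_{1,0}$ and $f_{0,1}$ do not lie in $\calF^s$ but in the $\sigma$-inflated class $\calF^s_\sigma$, because $f_{1,0}(x) = \int f(x,y)\,dQ(y)$ picks up an additive $\Expect_Q\norm{Y}^2 \lesssim d\sigma^2$ term; the paper records this as a separate lemma and then invokes the empirical-process bound for $\calF^s_\sigma$. Without that containment the last two terms in the decomposition are not controlled by the stated one-sample result.

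A smaller methodological difference: you propose \citet[Theorem 2.7.1]{vaart1996weak} on a ball $B_R$ plus a hand-made truncation, whereas the paper applies Corollary 2.7.4 once to the entire weighted union of hypercubes. Your route can be made to work, but the truncation radius needed to suppress the tail contribution to the $\ltwo(\hprodxy)$ cover at resolution $\tau$ then depends on $\tau$, and you would need to carry that dependence carefully to recover the stated $\tau^{-d/s}$ scaling; Corollary 2.7.4 handles the weighting in one step and is what the displayed constant is actually tuned to.
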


\paragraph{Exponential Tail Bound.}
We also prove an exponential tail bound for the \teotic~statistic.
It follows from \Cref{thm:consistency} and the McDiarmid inequality.

\begin{theorem}\label{thm:tail_bound}
  Let $c$ be the quadratic cost.
  Assume that $\px$ and $\py$ are supported on a bounded domain of radius $D$.
  Then we have, with probability at least $1 - \delta$,
  \begin{align*}
      \abs{\heotic(X, Y) - \eotic(X, Y)}
      \le C_d \left( 1 + \frac{D^{5d+16}}{\varepsilon^{5d/2 + 8}} \sqrt{\log{\frac6\delta}} \right) \frac{\varepsilon}{\sqrt{n}}.
  \end{align*}
\end{theorem}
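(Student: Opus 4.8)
The plan is to combine the expectation bound from \Cref{thm:consistency} (specialized to the bounded-support setting) with a bounded-difference (McDiarmid) concentration argument for the random variable $\heotic(X,Y)$ viewed as a function of the $n$ i.i.d.\ pairs $(X_1,Y_1),\dots,(X_n,Y_n)$. The key structural fact I would establish first is a \emph{bounded-differences property}: if we replace one pair $(X_i,Y_i)$ by an independent copy $(X_i',Y_i')$, the value of $\heotic$ changes by at most $c_n/n$ for some constant $c_n$ that is polynomial in $D$ and in $1/\varepsilon$. Since $\heotic(X,Y) = \sdiv_\varepsilon(\hpxy,\hprodxy)$ is a combination of three entropic OT costs $S_\varepsilon(\cdot,\cdot)$ between empirical measures, it suffices to get such a stability estimate for each $S_\varepsilon$ term. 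For this I would use the dual representation recorded in the excerpt: $S_\varepsilon(\hpxy,\hprodxy)$ is a supremum over potentials $(f,g)$ of a functional that is linear in the empirical measures plus the exponential penalty term, and the optimal Schr\"odinger potentials $(f_\varepsilon,g_\varepsilon)$ satisfy the normalization identities \eqref{eq:optim_potentials}. On a bounded domain of radius $D$ with quadratic cost, these potentials are bounded (in sup norm) and Lipschitz with constants controlled by $D$ and $\varepsilon$ — this is the standard a priori regularity of Schr\"odinger potentials — so perturbing one atom of a size-$n$ empirical measure perturbs each linear term $\int f\,d\hpxy$ by $O(\|f_\varepsilon\|_\infty/n)$ and the exponential term by a comparable amount, using the normalization to bound the integrand. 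Summing over the three $S_\varepsilon$ pieces (noting that changing $(X_i,Y_i)$ moves both $\hpxy$ and the marginals $\hpx,\hpy$, hence $\hprodxy$) yields the claimed $O(c_n/n)$ bounded-difference constant, where tracking the polynomial dependence on $D$ and $1/\varepsilon$ is exactly the bookkeeping that produces the exponents $5d+16$ and $5d/2+8$ in the statement.

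Given the bounded-differences constant $c_n/n$, McDiarmid's inequality gives
\begin{align*}
  \Prob\big( \abs{\heotic(X,Y) - \Expect[\heotic(X,Y)]} > t \big) \le 2\exp\!\Big( -\frac{2 n t^2}{c_n^2} \Big),
\end{align*}
so with probability at least $1-\delta$ we have $\abs{\heotic(X,Y) - \Expect[\heotic(X,Y)]} \le c_n\sqrt{\log(2/\delta)/(2n)}$. Then I would invoke \Cref{thm:consistency}: on a bounded domain the sub-Gaussian parameter can be taken of order $D$ (up to a dimensional constant), so $\abs{\Expect[\heotic(X,Y)] - \eotic(X,Y)} \le \Expect\abs{\heotic - \eotic} \le C_d(1 + D^{\lceil 5d/2\rceil+6}/\varepsilon^{\lceil 5d/4\rceil+3})\,\varepsilon/\sqrt n$. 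Combining the two bounds by the triangle inequality, absorbing the $\sqrt{\log(6/\delta)}$ factor (the constant $6$ rather than $2$ coming from distributing the failure probability across the several $S_\varepsilon$ terms, or simply from a crude constant adjustment), and taking the larger of the two polynomial-in-$(D,1/\varepsilon)$ prefactors gives the stated bound with exponents $5d+16$ and $5d/2+8$.

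The main obstacle I anticipate is the stability estimate for the entropic OT cost under a single-atom perturbation of an empirical measure, \emph{with explicit polynomial control} of the Lipschitz/sup-norm constants of the Schr\"odinger potentials in terms of $D$ and $\varepsilon$. Qualitative boundedness and Lipschitzness of the potentials on compact domains is classical, but pinning down the dependence $\|f_\varepsilon\|_\infty, \mathrm{Lip}(f_\varepsilon) = \mathrm{poly}(D,1/\varepsilon)$ — and checking that the exponential penalty term is perturbed by no more than $O(1/n)$ times such a constant, using the a.s.\ identities \eqref{eq:optim_potentials} — is where the work lies. A secondary subtlety is that a change of $(X_i,Y_i)$ simultaneously perturbs $\hpxy$, $\hpx$, and $\hpy$, so the perturbation of $\hprodxy = \hpx\otimes\hpy$ is not a single-atom change but a rank-structured change of order $1/n$ in total variation; one must verify that the same dual-representation Lipschitz argument still yields an $O(c_n/n)$ difference, which it does because $S_\varepsilon(\cdot,\mu)$ is $\|f_\varepsilon\|_\infty$-Lipschitz in its first argument with respect to total variation distance.
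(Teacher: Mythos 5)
Your approach is sound but genuinely different from the paper's. You apply McDiarmid directly to the test statistic $\heotic$ as a function of the $n$ i.i.d.\ pairs, deriving the bounded-differences constant from the dual stability estimate (essentially \Cref{prop:upper_bound_emp}) together with the sup-norm bound $\|f_\varepsilon\|_\infty, \|g_\varepsilon\|_\infty \le 8wD^2$ on the Schr\"odinger potentials for any measures supported in the ball of radius $D$ (this is exactly \Cref{prop:bounded_potential}); you then combine with the expectation bound from \Cref{thm:consistency} via the triangle inequality. The paper instead applies McDiarmid at the level of the function-class suprema: it first passes, via \Cref{prop:upper_bound_emp} and \Cref{lem:subg_func}, to a bound of the form $C_d(1+D^{3d+12})\big[\norm{\hpxy - \pxy}_{\calF^s_b} + \norm{\hprodxy - \prodxy}_{\calF^s_b}\big]$, where $\calF^s_b$ is the uniformly bounded subclass justified by \Cref{prop:bounded_potential}; it then concentrates the one-sample empirical process with the standard bounded-differences inequality and the two-sample U-process via \Cref{prop:prod_tail_bound}, adds the expectation bound from \Cref{prop:emp_bound}, and union-bounds over six events (which is the source of the $\log\frac{6}{\delta}$). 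Your route is more elementary in that it sidesteps the covering-number machinery at the tail-bound stage, and it would in fact yield a \emph{sharper} $\sqrt{\log(1/\delta)}$ prefactor of order $D^2/\varepsilon$ rather than $D^{5d+16}/\varepsilon^{5d/2+8}$; the paper's larger exponents arise from the detour through the scaled function class $\calF^s$ rather than from the McDiarmid step itself. Your remark that ``tracking the polynomial dependence\dots produces the exponents $5d+16$ and $5d/2+8$'' is therefore not quite accurate as a description of your own argument, but since your bound dominates the stated one this has no bearing on correctness. You also correctly anticipate the two genuine technical points: (i) the need for explicit, configuration-uniform sup-norm control of the potentials, which the paper supplies as \Cref{prop:bounded_potential}; and (ii) the fact that replacing one pair perturbs $\hprodxy = \hpx \otimes \hpy$ by a rank-structured change of total variation $O(1/n)$ rather than a single-atom move — your observation that TV-Lipschitzness of $S_\varepsilon$ in each argument (with constant $\|f_\varepsilon\|_\infty$ or $\|g_\varepsilon\|_\infty$) is all that is needed handles this cleanly.
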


Under $\hnull$, we have $T(X, Y) = 0$, so \Cref{thm:tail_bound} implies that
\begin{align*}
    \abs{\heotic(X, Y)} > C_d \left( 1 + \frac{D^{5d+16}}{\varepsilon^{5d/2 + 8}} \sqrt{\log{\frac6\delta}} \right) \frac{\varepsilon}{\sqrt{n}}
\end{align*}
with probability at most $\delta$.
It gives an estimate of the tail behavior of $\heotic(X, Y)$ which suggests that the critical value $H_n(\alpha)$ in \eqref{eq:eotic_test} should be of order $O(n^{-1/2})$.
Under $\halt$, \Cref{thm:tail_bound} implies that
\begin{align*}
    \heotic(X, Y) &> \eotic(X, Y) - C_d \left( 1 + \frac{D^{5d+16}}{\varepsilon^{5d/2 + 8}} \sqrt{\log{\frac6\delta}} \right) \frac{\varepsilon}{\sqrt{n}}
\end{align*}
with probability at least $1 - \delta$.
When $\eotic(X, Y) > 0$, it is clear that the right hand side in the above inequality exceeds the threshold $H_n(\alpha)$ for large $n$.
Hence, the \teotic~test has power converging to 1 as $n \rightarrow \infty$.

\section{Experiments}
\label{sec:experiments}
\begin{figure*}[t]
    \centering
    \includegraphics[width=0.7\textwidth]{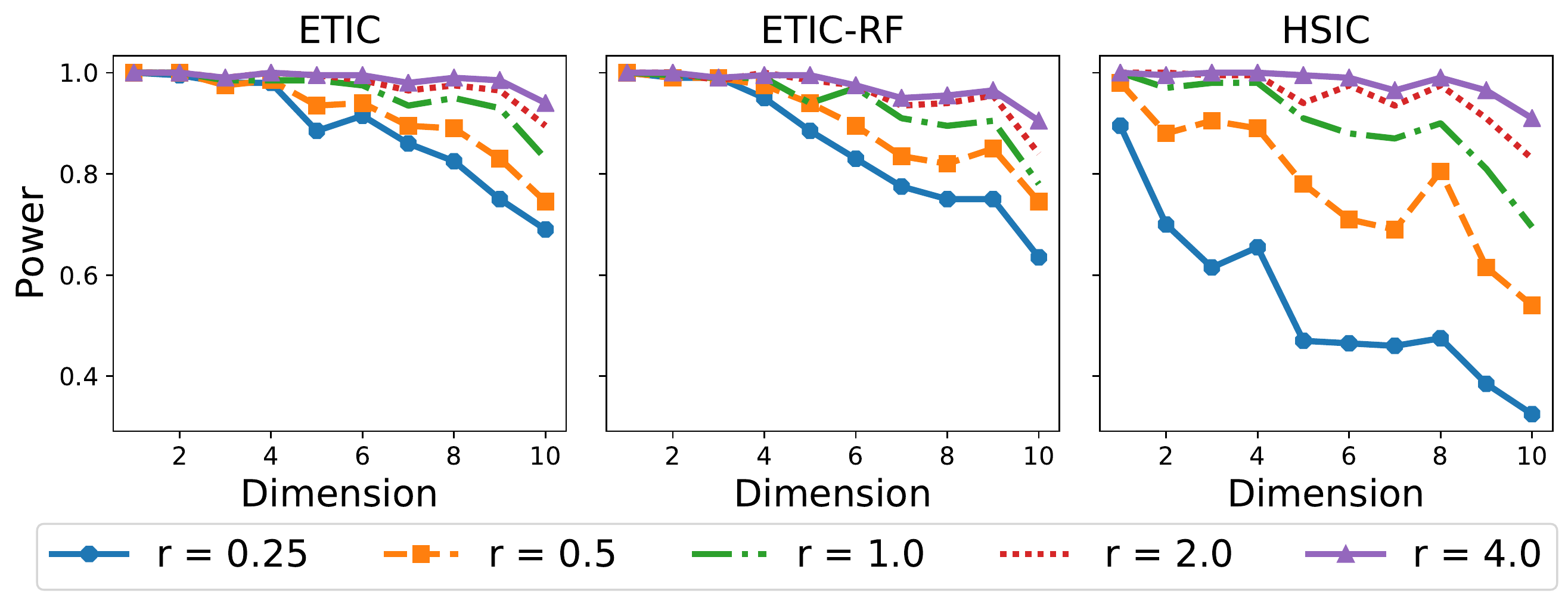}
    \vspace{.1in}
    \caption{Power versus dimension in the linear dependency model \eqref{eq:linear_dependency}.}
    \label{fig:linear_dependency}
\end{figure*}

We examine the empirical behavior of the proposed \teotic~test for independence testing on both synthetic and real data.
We consider synthetic benchmarks from~\citep{gretton2007kernel,jikrettum2017adaptive,zhang2018large} and revisit an application from~\citep{gretton2007kernel} with recent feature representations for text data.
The performance of the adaptive \teotic~test is also investigated but is deferred to \Cref{sub:adaptive_etic} due to the space limit.
The code to reproduce the experiments is available online\footnote{\url{https://github.com/langliu95/etic-experiments}.}.

\begin{figure*}[t]
    \centering
    \includegraphics[width=0.7\textwidth]{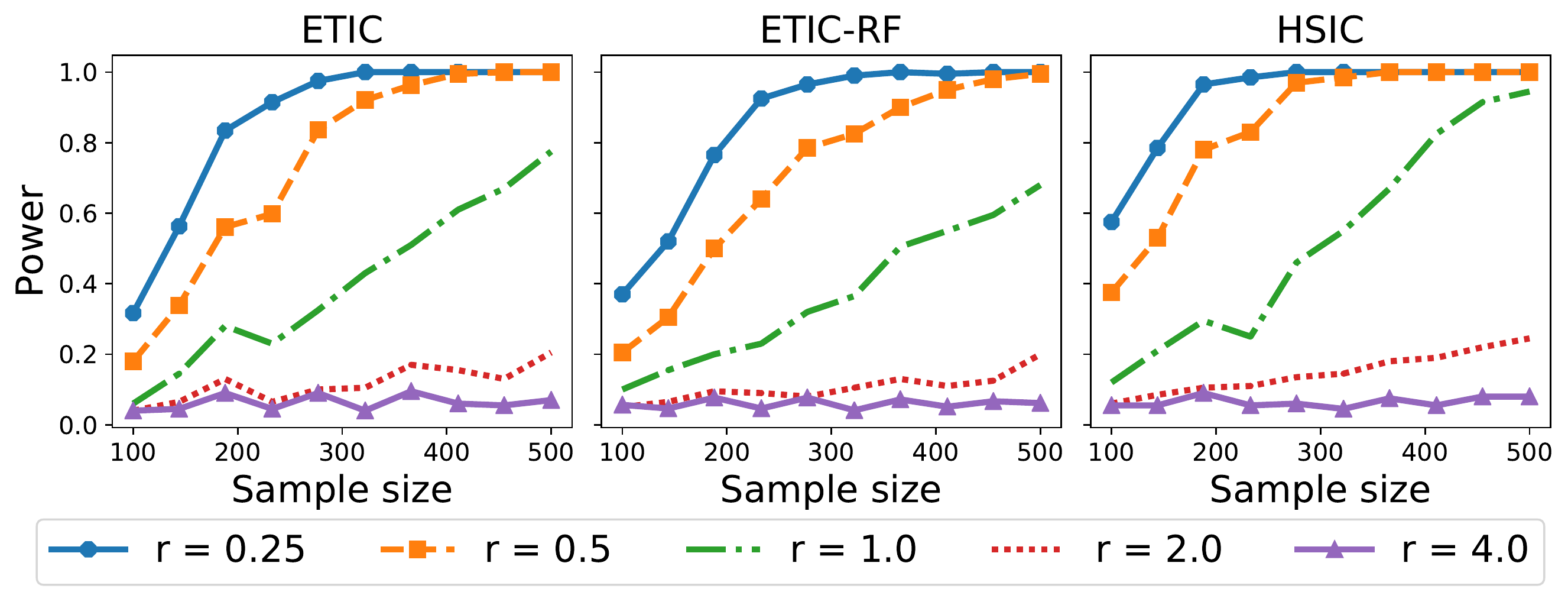}
    \vspace{.1in}
    \caption{Power versus sample size in the Gaussian-sign model \eqref{eq:gaussian_sign}.}
    \label{fig:gaussian_sign}
\end{figure*}

We focus on the weighted quadratic cost
\begin{align*}
    c((x, y), (x', y')) = \frac1{\lambda_1} \norm{x - x'}^2 + \frac1{\lambda_2} \norm{y - y'}^2.
\end{align*}
For convenience, we absorb the regularization parameter $\varepsilon$ into the weights and set $\varepsilon = 1$.
It then induces two Gibbs kernels
\begin{align*}
    k_1(x, x') = \exp\left\{ -\norm{x - x'}^2 / \lambda_1 \right\} \quad \text{and} \quad k_2(y, y') = \exp\left\{ -\norm{y - y'}^2 / \lambda_2 \right\}
\end{align*}
with $\lambda_i$ being the parameter of kernel $k_i$ for $i \in \{1, 2\}$.
To select the weights, we apply the median heuristic \citep{gretton2007kernel} widely used for HSIC, i.e.,
\begin{align*}
    \lambda_1 = r_1 M_x \quad \mbox{and} \quad \lambda_2 = r_2 M_y
\end{align*}
with $r_1$ and $r_2$ ranging from 0.25 to 4, where $M_x$ and $M_y$ are the medians of the quadratic costs $\{\norm{X_i - X_j}^2\}$ and $\{\norm{Y_i - Y_j}^2\}$, respectively.
We also examine its variant \eoticrf~discussed in \Cref{sec:eotic}, where the number of random features is set to be 100 unless otherwise noted.
We compare them with the HSIC statistic with kernels $k_1$ and $k_2$.
For a fair comparison, we calibrate these tests by a Monte Carlo resampling technique \citep{feuerverger1993consistent} with 200 permutations.                            
For each of the experiment, we repeat the whole procedure 200 times and report the rejection frequency as either the type I error rate (when the null is true) or power (when the null is not true).
Note that, even though we are using the same $\lambda_1$ and $\lambda_2$ in the cost and kernels, that does \emph{not} mean we should compare \teotic~and HSIC under the same hyperparameters.
Our goal is to explore their performance over a range of values of the hyperparameters controlling the regularization penalties.

Our main findings are: 1) Both \teotic~and \eoticrf~are consistent in power as the sample size approaches infinity. 2) In some scenarios, \teotic~and \eoticrf~outperforms HSIC significantly; in the linear dependency model in particular, their power is much more robust than HSIC to the value of the hyperparameters. 3) \eoticrf~performs reasonably good compared to \teotic~with a moderate number (i.e., 100) of random features. 4) All three tests benefit from large hyperparameters in detecting simple linear dependency, but smaller values lead to higher power when the dependency is more complicated.

\paragraph{Hilbert-Schmidt Independence Criterion.}
Before we present our results, let us recall the definition of HSIC.
Let $k: \reals^{d_1} \times \reals^{d_1} \rightarrow \reals$ and $l: \reals^{d_2} \times \reals^{d_2} \rightarrow \reals$ be two positive semi-definite kernels.
The Hilbert-Schmidt independence criterion (HSIC) between $X$ and $Y$, $\hsic(X, Y)$, is defined as
\begin{align*}
    \Expect[k(X, X') l(Y, Y')] + \Expect[k(X, X')] \Expect[l(Y, Y')] - 2\Expect[\Expect[k(X, X') \mid X] \Expect[l(Y, Y') \mid Y]],
\end{align*}
where $(X', Y')$ is an independent copy of $(X, Y)$.
Given an i.i.d.~sample $\{(X_i, Y_i)\}_{i=1}^n$ from $\pxy$, we can estimate $\hsic(X, Y)$ by
\begin{align*}
    \frac1{n^2} \sum_{i,j=1}^n k_{ij} l_{ij} + \frac1{n^4} \sum_{i,j,s,t=1}^n k_{ij} l_{st} - \frac2{n^3} \sum_{i,j,s=1}^n k_{ij} l_{is},
\end{align*}
where $k_{ij} := k(X_i, X_j)$ and $l_{ij} := l(Y_i, Y_j)$.
We refer to it as the HSIC statistic.

\subsection{Synthetic Data}
\label{sub:synthetic}

\begin{figure*}[t]
    \centering
    \includegraphics[width=0.7\textwidth]{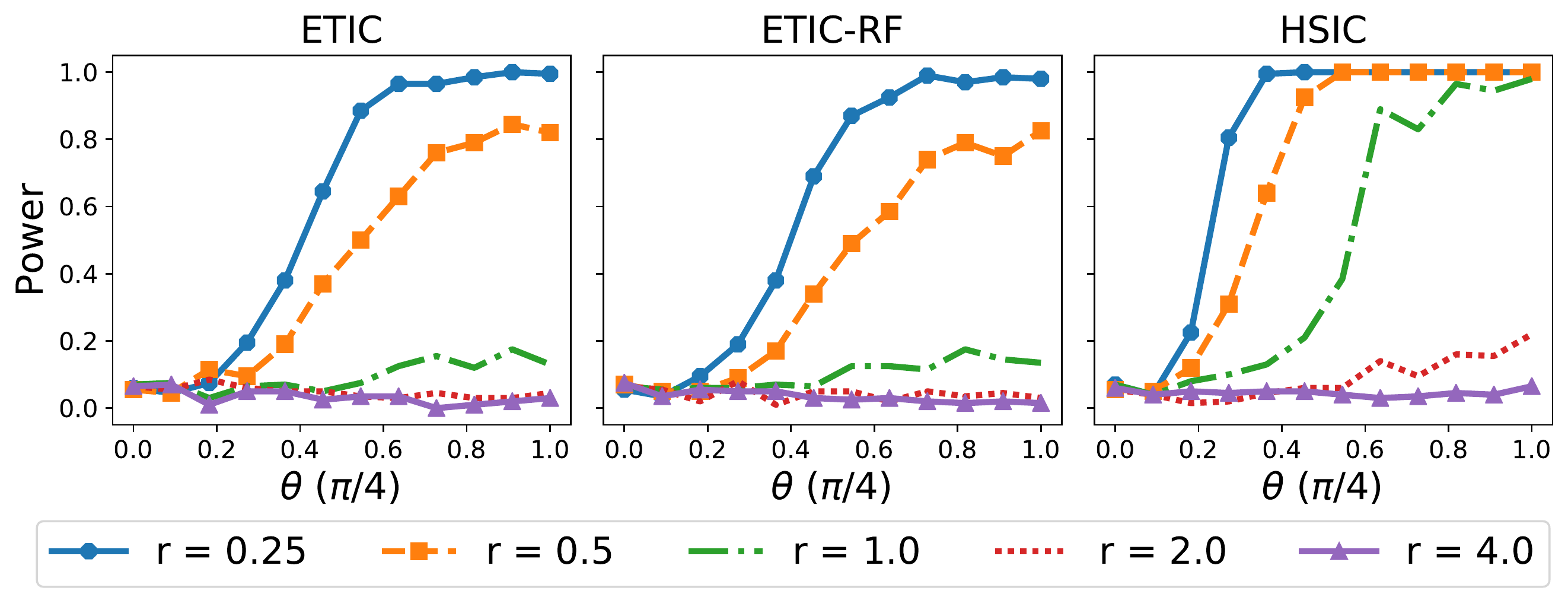}
    \vspace{.1in}
    \caption{Power versus parameter in the subspace dependency model.}
    \label{fig:linear_mixture}
\end{figure*}

We first compare the performance of \teotic~and \eoticrf~with HSIC on synthetic data.
We consider synthetic benchmarks from~\citep{zhang2018large}, \citep{jikrettum2017adaptive}, and~\citep{gretton2007kernel}.
To facilitate the exploration, we set $r_1 = r_2 = r \in \{0.25, 0.5, 1, 2, 4\}$ in this section.
Moreover, we examine the performance of two other independence tests discussed in \citep{gretton2008nonparametric} and summarize the results and findings in \Cref{sub:comparison}.

\paragraph{Linear Dependency.}
We begin with a simple linear dependency model.
Concretely,
\begin{align}\label{eq:linear_dependency}
    X \sim \calN_{d}(0, I_{d}) \quad \mbox{and} \quad Y = X_1 + Z,
\end{align}
where $X_1$ is the first coordinate of $X$, and $Z \sim \calN(0, 1)$ is independent with $X$.
We fix $n = 50$ and plot the power versus $d \in [1, 10]$ in \Cref{fig:linear_dependency}.
All the tests have decaying power as the dimension increases.
This is as expected since larger dimension results in weaker dependency between $X$ and $Y$.
It is clear that the power of both \teotic~and HSIC increases as $r$ \emph{increases}, with the former more robust than the latter.
While the performance of HSIC is similar to \teotic~when $r$ is large, it is much worse than \teotic~when $r$ is small.
As for \eoticrf, it has similar power curves as \teotic.

\paragraph{Gaussian Sign.}
We then consider a Gaussian sign model, i.e.,
\begin{align}\label{eq:gaussian_sign}
    X \sim \calN_{d}(0, I_{d}) \quad \mbox{and} \quad Y = \abs{Z} \prod_{i=1}^{d} \text{sgn}(X_i),
\end{align}
where $\text{sgn}(\cdot)$ is the sign function and $Z \sim \calN(0, 1)$ is independent with $X$.
This problem is challenging since $Y$ is independent with any strict subset of $\{X_1, \dots, X_{d}\}$.
We fix $d = 3$ and plot the power versus $n \in [100, 500]$ in \Cref{fig:gaussian_sign}.
All the tests have improved power as the sample size increases.
Additionally, they all benefit from a \emph{small} regularization parameter, with HSIC performs the best and the other two perform similarly in this particular example.

\begin{figure*}[t]
\centering
\includegraphics[width=0.6\textwidth]{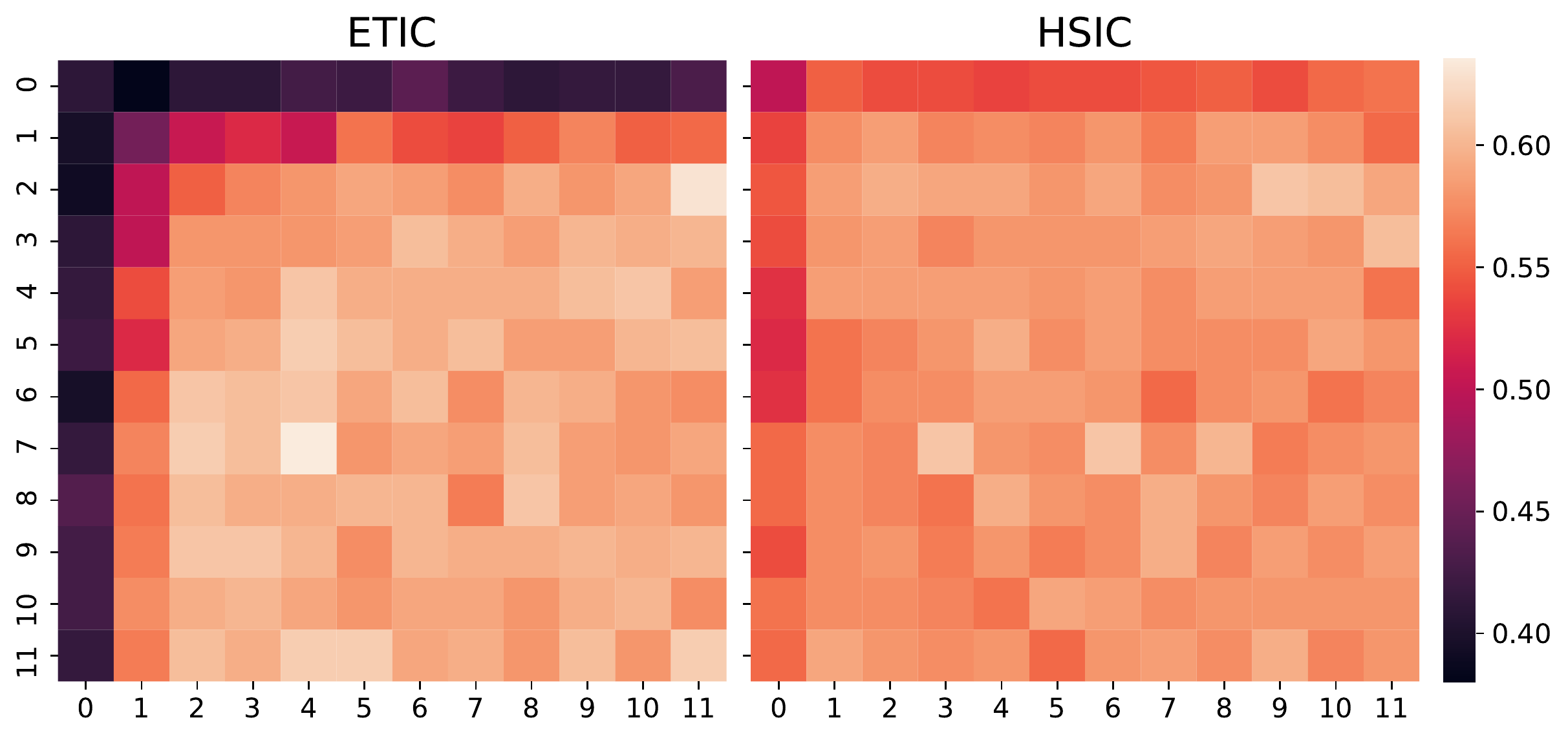}
\vspace{.1in}
\caption{Heatmaps of power on the partially dependent sample of the bilingual data. The $x$-axis is for $r_1$ and $y$-axis is for $r_2$. The indices from 0 to 11 correspond to equally spaced values from 0.25 to 4. Lighter color indicates larger power.}
\label{fig:bilingual}
\end{figure*}

\paragraph{Subspace Dependency.}
One important application of independence testing is independent component analysis \citep{gretton2005kernel}, which involves separating random variables from their linear mixtures.
We construct our data by i) generating $n$ i.i.d.~copies of two random variables following independently $0.5 \calN(0.98, 0.04) + 0.5 \calN(-0.98, 0.04)$, ii) mixing the two random variables by a rotation matrix parameterized by $\theta \in [0, \pi/4]$ (larger $\theta$ leads to stronger dependency), iii) appending $\calN_{d-1}(0, I_{d-1})$ to each of the two mixtures, and iv) multiplying each vector by an independent random $d$-dimensional orthogonal matrix.
We refer to it as the \emph{subspace dependency model}.
We fix $n = 64$, $d = 2$, and plot the power versus $\theta \in [0, \pi/4]$ in \Cref{fig:linear_mixture}.
As expected, the power of all three tests improves as $\theta$ becomes closer to $\pi / 4$.
Moreover, they all have improved power as $r$ \emph{decreases}.
\teotic~and \eoticrf~performs similarly, and they are outperformed by HSIC in this particular example.

\subsection{Dependency between Bilingual Text}
\label{sub:real}

Inspired by \citet{gretton2007kernel}, we now investigate the performance of the proposed tests on bilingual data using recent developments in natural language processing.
Our dataset is taken from the parallel European Parliament corpus \citep{koehn2005europarl} which consists of a large number of documents of the same content in different languages.
Note that it is also used in \citep{bounliphone2015low} to test for relative dependency.
For the hyperparameters, we consider different values of $r_1$ and $r_2$ ranging from $0.25$ to $4$.

To be more specific, we randomly select $n = 64$ English documents and a paragraph in each document from the corpus.
We then 1) pair each paragraph with the corresponding paragraph in French to form the dependent sample, 2) pair each paragraph with a random paragraph in the same document in French to form the partially dependent sample, and 3) pair each paragraph with a random paragraph in French to form the independent sample.

Finally, we use LaBSE \citep{feng2020languageagnostic} to embed all the paragraphs into a common feature embedding space of dimension 768 and perform independence testing on these feature vectors. LaBSE is a state-of-the-art, language agnostic, sentence embedding model based on Bidirectional Encoder Representations from Transformers (BERT).  
This allows us to revisit the idea of~\citet{gretton2007kernel} yet with more modern feature embeddings.

Both \teotic~and HSIC perform perfectly on the dependent sample (with power 1) and the independent sample (with low type I error) across all values of $r_1$ and $r_2$ considered.
The results on the partially dependent sample is shown in \Cref{fig:bilingual}.
\teotic~performs better than HSIC when one of $r_1$ and $r_2$ is large; while HSIC has larger power when $r_1$ or $r_2$ is small. Overall \teotic~appears to perform better than HSIC for large amounts of regularization parameters.

As for \eoticrf, the high-dimensional natural of the feature embeddings imposes challenges on the random feature approximation.
For its performance to be comparable, we first apply the principal component analysis (PCA) on the text embeddings to reduce the dimension, and then we perform \eoticrf~on the low-dimensional features.
As presented in \Cref{sub:etic_rf_text}, the random feature approximation equipped with PCA demonstrates similar performance as the exact \teotic~with enough random features.

\paragraph{Conclusion.}
We introduced a new independence criterion \teotic~based on entropy regularized optimal transport. The proposed criterion can be approximated using a random feature approximation. We established non-asymptotic bounds using U-process theory and optimal transport theory. The experimental results show that \teotic~can exhibit stable behavior w.r.t.~its hyperparameters.
The extension of \teotic~to multi-way dependence is an interesting venue for future work.

\subsubsection*{Acknowledgements}
The authors would like to thank M. Scetbon for fruitful discussions.
L. Liu is supported by NSF CCF-2019844 and NSF DMS-2023166.
S. Pal is supported by NSF DMS-2052239 and a PIMS CRG (PIHOT). Z. Harchaoui is supported by NSF CCF-2019844, NSF DMS-2134012, NSF DMS-2023166, CIFAR-LMB, and faculty research awards.
Part of this work was done while Z. Harchaoui was visiting the Simons Institute for the Theory of Computing.
The authors would like to thank the Kantorovich Initiative of the Pacific Institute for the Mathematical Sciences (PIMS) for supporting this collaboration.

\bibliographystyle{abbrvnat}
\bibliography{biblio}

\clearpage
\onecolumn
\appendix

%%%%%%%%%%%%%%%%%%%%%%%%%%%%%%%%%%%%%%%%%%%%%%%%%%%%%%%%%%%%%%%%%%%%%%%%
%%%%%%%%%%%%%%%%%%%%%%%%%%%%%%%%%%%%%%%%%%%%%%%%%%%%%%%%%%%%%%%%%%%%%%%%
\section{Properties of \teotic}
\label{sec:computation}
%%%%%%%%%%%%%%%%%%%%%%%%%%%%%%%%%%%%%%%%%%%%%%%%%%%%%%%%%%%%%%%%%%%%%%%%
%%%%%%%%%%%%%%%%%%%%%%%%%%%%%%%%%%%%%%%%%%%%%%%%%%%%%%%%%%%%%%%%%%%%%%%%

In this section, we prove the properties of \teotic~discussed in \Cref{sec:eotic}.
For the sake of generality, we state the problem for general notations $P$ and $Q$ while keeping in mind that $P, Q \in \{\pxy, \prodxy\}$ in our case.
Let $P \in \pspace(\reals^{d_1} \times \reals^{d_2})$ and $P_X$ and $P_Y$ be the marginals on $\reals^{d_1}$ and $\reals^{d_2}$, respectively.
Define $Q$, $Q_X$, and $Q_Y$ similarly.
We are interested in the EOT cost between $P$ and $Q$ under the cost function $c$:
\begin{align}\label{eq:a_sinkhorn_div}
    S_\varepsilon(P, Q) := \inf_{\gamma \in \Pi(P, Q)} \left[ \int c d\gamma + \varepsilon \kl{\gamma}{P \otimes Q} \right].
\end{align}
When $\varepsilon = 0$, $S_0(P, Q)$ is the optimal transport cost between $P$ and $Q$.
When $\varepsilon > 0$, it admits a dual representation:
\begin{align}\label{eq:a_dual}
    S_\varepsilon(P, Q) := \sup_{f, g \in \mathcal{C}(\reals^{d_1} \times \reals^{d_2})} \left[ \int f dP + \int g dQ + \varepsilon - \varepsilon \int e^{\frac1\varepsilon [f(z) + g(z') - c(z, z')]} dP(z) dQ(z') \right].
\end{align}
The Schr\"odinger bridge potentials $(f_\varepsilon, g_\varepsilon)$ satisfy the optimality conditions:
\begin{equation}\label{eq:a_optim_potentials}
    \begin{split}
        \int e^{\frac1\varepsilon [f_\varepsilon(z) + g_\varepsilon(z) - c(z, z')]} dQ(z') &\txtover{a.s.}{=} 1 \\
        \int e^{\frac1\varepsilon [f_\varepsilon(z) + g_\varepsilon(z) - c(z, z')]} dP(z) &\txtover{a.s.}{=} 1.
    \end{split}
\end{equation}

We first derive the limit \teotic~as $\varepsilon \rightarrow 0$ and $\varepsilon \rightarrow \infty$.
\begin{proposition}\label{prop:etic_eps_limit}
    Let $c$ be a continuous cost function.
    If either $c$ is bounded or $P$ and $Q$ have compact support,
    it holds that
    \begin{align}\label{eq:etic_eps_infty}
        \eotic_\varepsilon(X, Y) \rightarrow
        \begin{cases}
            0 & \mbox{if } c = c_1 \oplus c_2 \\
            -\frac12 \hsic_{c_1, c_2}(X, Y) & \mbox{if } c = c_1 \otimes c_2,
        \end{cases}
        \quad \mbox{as } \varepsilon \rightarrow \infty.
    \end{align}
    Moreover, if both $P$ and $Q$ are densities (or discrete measures), then
    \begin{align}\label{eq:etic_eps_zero}
        \eotic_\varepsilon(X, Y) \rightarrow S_0(\pxy, \prodxy), \quad \mbox{as } \varepsilon \rightarrow 0.
    \end{align}
\end{proposition}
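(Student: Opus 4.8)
The plan is to reduce both limits to elementary properties of the entropic cost $S_\varepsilon$ and then expand $\sdiv_\varepsilon$ by linearity; throughout write $\mu := \pxy$ and $\nu := \prodxy$, so that $\eotic_\varepsilon(X,Y) = S_\varepsilon(\mu,\nu) - \tfrac12 S_\varepsilon(\mu,\mu) - \tfrac12 S_\varepsilon(\nu,\nu)$.

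\textbf{The limit $\varepsilon \to \infty$.} First I would show that $S_\varepsilon(P,Q) \to \iint c(z,z')\,dP(z)\,dQ(z')$ for each pair $(P,Q)\in\{(\mu,\nu),(\mu,\mu),(\nu,\nu)\}$. The $\le$ direction is immediate by plugging the product coupling $\gamma = P\otimes Q$ into the primal problem \eqref{eq:a_sinkhorn_div}, since $\kl{P\otimes Q}{P\otimes Q}=0$. For the $\ge$ direction, let $\gamma_\varepsilon$ be the Schr\"odinger coupling: since $c\ge 0$, the upper bound forces $\kl{\gamma_\varepsilon}{P\otimes Q} \le \varepsilon^{-1}\iint c\,dP\,dQ \to 0$, hence $\gamma_\varepsilon \to P\otimes Q$ in total variation by Pinsker; as $c$ is bounded on $\Supp{P}\times\Supp{Q}$ (by hypothesis, or by continuity on a compact set), $\int c\,d\gamma_\varepsilon \to \iint c\,dP\,dQ$, and $S_\varepsilon(P,Q)\ge\int c\,d\gamma_\varepsilon$ closes the argument. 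Substituting into $\sdiv_\varepsilon$ and rearranging gives
\[
  \eotic_\varepsilon(X,Y) \;\longrightarrow\; -\tfrac12 \iint c(z,z')\,d(\mu-\nu)(z)\,d(\mu-\nu)(z').
\]
It remains to evaluate this bilinear form. For $c = c_1\oplus c_2$, the $c_1$-summand depends on its arguments only through the $x$-coordinates, and since $\mu,\nu$ share the same $\px$-marginal, integrating out the $y$-variables annihilates it; likewise for the $c_2$-summand, so the limit is $0$. For $c = c_1\otimes c_2$, the bilinear form equals $\iint c_1(x,x')c_2(y,y')\,d(\mu-\nu)^{\otimes 2}$, which is precisely $\hsic_{c_1,c_2}(X,Y)$ upon expanding HSIC into its three terms, yielding $-\tfrac12\hsic_{c_1,c_2}(X,Y)$.

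\textbf{The limit $\varepsilon \to 0$.} The key input is the $\Gamma$-convergence of entropic OT to unregularized OT: under the stated hypotheses (densities or discrete measures, continuous cost), $S_\varepsilon(P,Q)\to S_0(P,Q)$. The $\liminf$ inequality uses lower semicontinuity of $\gamma\mapsto\int c\,d\gamma$, nonnegativity of the KL term, and tightness of near-minimizers (their marginals are pinned to $P,Q$); the recovery sequence is a block/mollification regularization of an optimal plan, scaled so that $\varepsilon\,\kl{\gamma^\varepsilon}{P\otimes Q}\to 0$. Applied to the three pairs this gives $S_\varepsilon(\mu,\nu)\to S_0(\mu,\nu) = S_0(\pxy,\prodxy)$, and $S_\varepsilon(\mu,\mu)\to S_0(\mu,\mu) = 0$, $S_\varepsilon(\nu,\nu)\to 0$, the latter two because $c(z,z)=0$ makes the diagonal coupling (discrete case), or a mollified diagonal (density case), attain vanishing cost. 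Hence $\eotic_\varepsilon(X,Y) \to S_0(\pxy,\prodxy)$, which is \eqref{eq:etic_eps_zero}.

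I expect the $\varepsilon\to 0$ direction to be the main obstacle, specifically making the $\Gamma$-convergence rigorous when $P,Q$ are absolutely continuous: the optimal OT plan is typically concentrated on a graph and hence singular with respect to $P\otimes Q$, so the recovery sequence must instead stay near-optimal for the transport cost while keeping $\kl{\gamma^\varepsilon}{P\otimes Q}$ of smaller order than $\varepsilon^{-1}$, and this has to be checked for the normalization used here, where the KL is taken relative to the product $P\otimes Q$ rather than to Lebesgue measure. By contrast, the $\varepsilon\to\infty$ direction is routine once the total-variation convergence $\gamma_\varepsilon\to P\otimes Q$ is in hand; the only side point there is the use of Fubini to split the bilinear form, which is justified by boundedness of $c$ on the compact support.
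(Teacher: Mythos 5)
Your proof is correct and reaches the same conclusion, but the technical route to the limit as $\varepsilon\to\infty$ differs in a useful way from the paper's. The paper first establishes the sandwich $S_0(P,Q) \le S_\varepsilon(P,Q) \le (P\otimes Q)[c]$ by monotonicity in $\varepsilon$, then passes to the rescaled problem $\inf_\gamma\bigl[\varepsilon^{-1}\int c\,d\gamma + \kl{\gamma}{P\otimes Q}\bigr]$, uses strict convexity of KL to identify the unique minimizers, and invokes tightness of $\Pi(P,Q)$ to extract a weakly convergent subsequence $\gamma_\varepsilon\rightharpoonup P\otimes Q$; the final step (that $\int c\,d\gamma_\varepsilon \to (P\otimes Q)[c]$) is left implicit. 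You instead obtain $\kl{\gamma_\varepsilon}{P\otimes Q} \le \varepsilon^{-1}(P\otimes Q)[c]$ directly from the comparison with the trivial coupling, upgrade this to total-variation convergence via Pinsker, and then close with boundedness of $c$ on $\Supp{P}\times\Supp{Q}$. This is marginally tighter — it bypasses compactness/tightness arguments entirely and makes the mode of convergence explicit, which also makes the passage to the limit for $\int c\,d\gamma_\varepsilon$ immediate rather than implicit. Your compact identity
\[
\eotic_\varepsilon(X,Y) \;\longrightarrow\; -\tfrac12\iint c\,d(\pxy-\prodxy)^{\otimes 2}
\]
is a genuine simplification: the paper expands the three Sinkhorn terms separately for each of the two cases, whereas you evaluate the single bilinear form once. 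Two points worth stating if you wrote this out in full: the collapse to $-\tfrac12(\mu-\nu)^{\otimes 2}[c]$ uses symmetry $c(z,z')=c(z',z)$, which holds for $c_1\oplus c_2$ and $c_1\otimes c_2$ with symmetric $c_i$ but is worth flagging; and the identity $(\mu-\nu)^{\otimes 2}[c_1\otimes c_2] = \hsic_{c_1,c_2}(X,Y)$ deserves a line of expansion since it is exactly the three-term HSIC formula. For $\varepsilon\to 0$, your sketch and the paper's proof rest on the same external input — the convergence $S_\varepsilon \to S_0$ for densities and discrete measures — and the paper simply cites \citet{leonard12} and \citet[Proposition~4.1]{peyre2019computational} rather than reproving the $\Gamma$-convergence. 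You correctly flag that a from-scratch proof (particularly the recovery sequence when the optimal plan is singular with respect to $P\otimes Q$) is the delicate part, and in a write-up the cleaner path is the paper's: cite the result, then observe $S_0(P,P)=0$ kills the self-transport terms in $\sdiv_\varepsilon$.
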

\begin{proof}
    To show \eqref{eq:etic_eps_infty}, we claim that, for all $P, Q \in \pspace(\reals^d)$,
    \begin{align}\label{eq:eot_sandwich}
        S_0(P, Q) \le S_\varepsilon(P, Q) \le (P \otimes Q)[c],
    \end{align}
    and
    \begin{align}\label{eq:a_eps_infinity}
        \lim_{\varepsilon \rightarrow \infty} S_\varepsilon(P, Q) = (P \otimes Q)[c].
    \end{align}
    In fact, for any $\varepsilon_1 < \varepsilon_2$, we have
    \begin{align*}
        \int cd\gamma + \varepsilon_1 \kl{\gamma}{P \otimes Q} \le \int cd\gamma + \varepsilon_2 \kl{\gamma}{P \otimes Q}, \quad \mbox{for all } \gamma \in \Pi(P, Q).
    \end{align*}
    This yields that
    \begin{align*}
        S_{\varepsilon_1}(P, Q) \le S_{\varepsilon_2}(P, Q), \quad \mbox{for all } \varepsilon_1 \le \varepsilon_2,
    \end{align*}
    and thus \eqref{eq:eot_sandwich} follows.
    
    We then study the limit of $S_\varepsilon$ as $\varepsilon \rightarrow \infty$.
    By the assumption that $c$ is bounded or $P$ and $Q$ have compact support, there exists $M > 0$ such that $\sup_{\gamma \in \Pi(P, Q)} \int c d \gamma \le M < \infty$.
    As a result,
    \begin{align*}
        \sup_{\gamma \in \Pi(P, Q)} \abs{\frac{1}{\varepsilon} \int c d\gamma + \kl{\gamma}{P \otimes Q} - \kl{\gamma}{P \otimes Q}} \le \frac{M}{\varepsilon},
    \end{align*}
    which implies that
    \begin{align*}
        \inf_{\gamma \in \Pi(P, Q)} \left[ \frac{1}{\varepsilon} \int c d\gamma + \kl{\gamma}{P \otimes Q} \right] \rightarrow \inf_{\gamma \in \Pi(P, Q)} \kl{\gamma}{P \otimes Q} = 0, \quad \mbox{as } \varepsilon \rightarrow \infty.
    \end{align*}
    By the strict convexity of KL, the problem on the LHS has a unique minimizer $\gamma_\varepsilon$ and the problem on the RHS has a unique minimizer $\gamma_* = P \otimes Q$.
    Now, by the tightness of $\Pi(P, Q)$ (e.g., \cite[Theorem. 1.7]{santambrogio2015}), every sequence of $\{\gamma_\varepsilon\}$ has a weakly converging subsequence whose limit must be $\gamma_*$.
    Therefore, the claim \eqref{eq:a_eps_infinity} holds true.

    Let $c = c_1 \oplus c_2$.
    According to \eqref{eq:a_eps_infinity}, we have
    \begin{align*}
        \lim_{\varepsilon \rightarrow \infty} S_\varepsilon(\pxy, \prodxy) = (\pxy \otimes \prodxy)[c] = (\px \otimes \px) [c_1] + (\py \otimes \py)[c_2].
    \end{align*}
    Similarly, it holds that
    \begin{align*}
        \lim_{\varepsilon \rightarrow \infty} S_\varepsilon(\pxy, \pxy) &= (\px \otimes \px) [c_1] + (\py \otimes \py)[c_2] \\
        \lim_{\varepsilon \rightarrow \infty} S_\varepsilon(\prodxy, \prodxy) &= (\px \otimes \px) [c_1] + (\py \otimes \py)[c_2].
    \end{align*}
    Consequently, $\lim_{\varepsilon \rightarrow \infty} \eotic_\varepsilon(X, Y) = 0$.
    An analogous argument implies that, when $c = c_1 \otimes c_2$
    \begin{align*}
        &\lim_{\varepsilon \rightarrow \infty} \eotic_\varepsilon(X, Y) = \Expect_{\pxy} \left[ \Expect_{\px} [c_1(X, X') \mid X] \Expect_{\py}[c_2(Y, Y') \mid Y]\right] \\
        &\quad - \frac12 \Expect_{\pxy^2}[c_1(X, X') c_2(Y, Y')] - \frac12 \Expect_{(\prodxy)^2}[c_1(X, X') c_2(Y, Y')] = -\frac12 \hsic_{c_1, c_2}(X, Y).
    \end{align*}
    
    Note that
    \begin{align*}
        \lim_{\varepsilon \rightarrow 0} S_\varepsilon(P, Q) = S_0(P, Q)
    \end{align*}
    when both $P$ and $Q$ are densities \citep{leonard12} and when both of them are discrete measures \citep[Proposition 4.1]{peyre2019computational}.
    The statement \eqref{eq:etic_eps_zero} follows immediately from the fact that $S_0(P, P) = 0$ for all $P$.
\end{proof}

We then prove the validity of \teotic~as a dependence measure as stated in \Cref{prop:eotic_valid}.
\begin{proof}[Proof of \Cref{prop:eotic_valid}]
    Due to \citet[Lemma 5.2]{blanchard2011generalizing}, the Gibbs kernel
    \begin{align*}
        k_\varepsilon(z, z') := e^{-c(z, z') / \varepsilon} = k_1(x,x') k_2(y,y')
    \end{align*}
    is universal since both $k_x$ and $k_y$ are.
    It is also clear that $k_\varepsilon$ is positive since both $k_x$ and $k_y$ are.
    Consequently, the Sinkhorn divergence $\sdiv_\varepsilon$ defines a semi-metric on $\pspace(\calX \times \calY)$ according to \citet[Theorem 1]{feydy2019interpolating}.
    Hence, if $\pxy, \prodxy \in \pspace(\calX \times \calY)$, then $\eotic_\varepsilon(X, Y) := \sdiv_\varepsilon(\pxy, \prodxy) = 0$ iff $\pxy = \prodxy$.
\end{proof}

Finally, we analyze the computational complexity of the Tensor Sinkhorn algorithm for additive cost functions, i.e.,
\begin{align}\label{eq:a_weighted_quadratic_cost}
    c(z, z') := c_1(x, x') + c_2(y, y'),
\end{align}
where $z = (x, y)$ and $z' = (x', y')$.

Let $\{x_i\}_{i=1}^n$ and $\{y_j\}_{j=1}^n$ be two sets of atoms.
Note that the two sets are assumed to be of the same size for convenience.
Let $A$ and $B$ be two probability measures on $\{x_i\}_{i=1}^n \times \{y_j\}_{j=1}^n$.
For convenience, both $A$ and $B$ are represented as a matrix, i.e., $A_{ij} = A(x_i, y_j)$.
For instance, if we choose $A = \hpxy$ and $B = \hprodxy$, then, in its matrix form, $A = I_{n} / n$ and $B = \mathbf{1}_{n\times n} / n^2$.
Denote $C_1$ and $C_2$ as the cost matrices of $\{x_i\}_{i=1}^n$ and $\{y_j\}_{j=1}^n$, respectively.
Define Gibbs matrices $K_1 := e^{-C_1/\varepsilon}$ and $K_2 := e^{-C_2/\varepsilon}$, where the exponential function is applied element-wisely.
Let $K := K_2 \otimes K_1 \in \reals^{n^2 \times n^2}$ be the Gibbs matrix associated with the cost matrix on the pairs $\{(x_1, y_1), (x_2, y_1), \dots, (x_n, y_n)\}$, where $\otimes$ is the Kronecker product.

\begin{proposition}\label{prop:tensor_sinkhorn}
The Tensor Sinkhorn algorithm outputs an $\delta$-accurate estimate of the entropic cost $S(A, B)$ in $O\left( n^3 \log({\kappa_1 \kappa_2 \kappa_3})/ \delta \right)$ arithmetic operations, where $\kappa_1 := \max_{i, i'} k_{1}^{-1}(x_i, x_{i'})$, $\kappa_2 := \max_{j, j'} k_{2}^{-1}(y_j, y_{j'})$, and $\kappa_3 := \max_{i,j}\{a_{ij}^{-1}, b_{ij}^{-1}\}$.
\end{proposition}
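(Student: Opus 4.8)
The plan is to reduce the analysis to the complexity of the standard Sinkhorn algorithm on the $n^2 \times n^2$ Gibbs matrix $K = K_2 \otimes K_1$, and then exploit the Kronecker structure to execute each Sinkhorn step in $O(n^3)$ rather than $O(n^4)$ operations. First I would establish correctness of \Cref{alg:tensor_sinkhorn}: writing a scaling vector on the product grid $\{x_i\}\times\{y_j\}$ as an $n\times n$ matrix $U$ (so that the vectorized form is $\mathrm{Vec}(U)$), the key identity is $K\,\mathrm{Vec}(V) = \mathrm{Vec}(K_1 V K_2^\top)$, which is the standard vec–Kronecker identity. Hence the usual Sinkhorn updates $u \gets a \oslash (Kv)$, $v \gets b \oslash (K^\top u)$ on $\reals^{n^2}$ are \emph{exactly} the matrix updates $U \gets A \oslash (K_1 V K_2^\top)$, $V \gets B \oslash (K_1^\top U K_2)$ in \Cref{alg:tensor_sinkhorn}. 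So the Tensor Sinkhorn iterates coincide with ordinary Sinkhorn iterates applied to $(A,B,K)$ in matrix coordinates, and correctness (convergence of the scalings and of the resulting primal plan / dual objective to the true entropic optimum $S_\varepsilon(A,B)$) is inherited verbatim from the classical Sinkhorn analysis.

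Next I would invoke a known iteration-complexity bound for Sinkhorn to get a $\delta$-accurate estimate of $S_\varepsilon(A,B)$. The standard result (e.g.\ Altschuler–Weed–Rigollet, or the treatment in \citep{peyre2019computational}) states that $O\big(\log(\kappa)/\delta\big)$ — or $O(\log(\kappa)/\delta^2)$ depending on the accuracy notion — iterations suffice, where $\kappa$ collects the extreme values of the Gibbs matrix entries and of the target marginals: concretely $\kappa$ is controlled by $\max_{i,i'} k_1^{-1}(x_i,x_{i'})$, $\max_{j,j'} k_2^{-1}(y_j,y_{j'})$, and $\max_{i,j}\{a_{ij}^{-1}, b_{ij}^{-1}\}$. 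Since the entries of $K = K_2\otimes K_1$ are products $k_1(x_i,x_{i'})k_2(y_j,y_{j'})$, the relevant $\ell_\infty$ and $\ell_1$ norms of $K$ factor, so the quantity $\kappa$ for the lifted problem is exactly $\kappa_1\kappa_2\kappa_3$ (up to the polynomial-in-$n$ factors that sit inside the logarithm and hence do not affect the stated bound). This yields the claimed $O\big(\log(\kappa_1\kappa_2\kappa_3)/\delta\big)$ iteration count.

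Finally I would account for the per-iteration cost. Each update requires the two matrix products $K_1 V K_2^\top$ and $K_1^\top U K_2$, which are $O(n^3)$ dense matrix–matrix multiplications, plus $O(n^2)$ work for the element-wise divisions and for reading off the potentials $F = \varepsilon\log U$, $G = \varepsilon\log V$ and forming the objective estimate $\langle F, A\rangle_{\mathbf F} + \langle G, B\rangle_{\mathbf F} + \varepsilon - \varepsilon\langle K_1 \odot \cdot\,\rangle$ type term. Multiplying the $O(n^3)$ per-iteration cost by the $O(\log(\kappa_1\kappa_2\kappa_3)/\delta)$ iteration bound gives the overall $O\big(n^3 \log(\kappa_1\kappa_2\kappa_3)/\delta\big)$ arithmetic-operation count. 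I expect the main obstacle to be not the algebra but the bookkeeping in the iteration-complexity step: one must quote the appropriate version of the Sinkhorn convergence theorem, check that the notion of "$\delta$-accurate estimate of the entropic cost" matches the one in that theorem (cost value vs.\ marginal violation vs.\ dual suboptimality), and verify carefully that lifting to the Kronecker matrix $K$ transforms the condition-number parameter exactly into the product $\kappa_1\kappa_2\kappa_3$ without hidden dependence on $n$ outside the logarithm.
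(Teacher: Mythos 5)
Your proposal matches the paper's argument essentially step for step: both establish correctness by showing the tensorized updates coincide with the ordinary Sinkhorn updates on $K = K_2 \otimes K_1$ via the vec–Kronecker identity $\Vect(K_1 V K_2^\top) = (K_2 \otimes K_1)\Vect(V)$, then cite an iteration-complexity bound from the Sinkhorn literature (the paper uses \citet[Theorem 1]{dvurechensky2018computational}) to obtain $O(\log(\kappa_1\kappa_2\kappa_3)/\delta)$ iterations, and finally multiply by the $O(n^3)$ cost per iteration coming from the two $n\times n$ matrix products. The caveats you flag about the exact accuracy notion and hidden $n$-dependence inside the logarithm are indeed the only places where care is needed, and the paper treats them exactly as you propose, by deferring to the cited theorem.
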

\begin{proof}
    Let $a := \Vect(A) \in \reals^{n^2}$ and $b := \Vect(B) \in \reals^{n^2}$ be the probability vectors corresponding to $A$ and $B$, respectively.
    Denote $u := \Vect(U) \in \reals^{n^2}$ and $v := \Vect(V) \in \reals^{n^2}$.
    The Sinkhorn algorithm to solve $S_\varepsilon(a, b)$ has the following two update steps:
    \begin{align*}
        u = a \oslash Kv \quad \mbox{and} \quad v = b \oslash K^\top u.
    \end{align*}

    By the identity $\Vect(MNL) = (L^\top \otimes M) \Vect(N)$ for matrices $M$, $N$, and $L$ of compatible dimensions, we obtain
    \begin{align*}
        \Vect(K_1 V K_2^\top) = (K_2 \otimes K_1) \Vect(V) = K v.
    \end{align*}
    Thus, the update $U = A \oslash (K_1 V K_2^\top)$ is equivalent to $u = a \oslash Kv$.
    Similarly, the updated $V = B \oslash (K_1^\top U K_2)$ is equivalent to $v = b \oslash K^\top u$.
    Due to \citet[Theorem 1]{dvurechensky2018computational}, the Tensor Sinkhorn algorithm therefore outputs an $\delta$-accurate estimate in $O(\log({\kappa_1 \kappa_2 \kappa_3})/ \delta)$ iterations.
    Since each iteration costs $O(n^3)$ time, it has overall time complexity $O(n^3 \log({\kappa_1 \kappa_2 \kappa_3})/ \delta)$.
\end{proof}

\begin{remark}
    A direct application of the Sinkhorn algorithm leads to $O(n^4 \log({\kappa_1 \kappa_2 \kappa_3})/ \delta)$ time complexity, which is $n$ times slower than the Tensor Sinkhorn algorithm.
\end{remark}

We then characterize the convergence of the Tensor Sinkhorn algorithm with the random feature approximation as presented in \Cref{prop:rf_tensor_sinkhorn}.
\begin{proof}[Proof of \Cref{prop:rf_tensor_sinkhorn}]
    The proof is heavily inspired by \citet[Proof of Theorem 3.1]{scetbon2020linear}.
    In consideration of the space, we only present the part that is significantly different from theirs, i.e., a counterpart of \citet[Proposition 3.1]{scetbon2020linear}.
    This proposition gives a uniform tail bound for the ratio between the approximated kernel and the original kernel.
    In our case, we are approximating the kernel $K := K_2 \otimes K_1$ by $K_{\bm{u}, \bm{v}} := K_{2, \bm{v}} \otimes K_{1, \bm{u}}$.
    Hence, it suffices to bound
    \begin{align*}
        \sup_{x, x' \in \{x_i\}_{i=1}^n, y, y' \in \{y_i\}_{i=1}^n} \abs{\frac{k_{1, \bm{u}}(x, x') k_{2, \bm{v}}(y, y')}{k_{1}(x, x') k_{2}(y, y')} - 1}.
    \end{align*}
    Note that
    \begin{align*}
        \frac{k_{1, \bm{u}}(x, x')}{k_{1}(x, x')} = \frac1p \sum_{k=1}^p \frac{\varphi(x, u_k)^\top \varphi(x', u_k)}{k_1(x, x')}
    \end{align*}
    is a sum of nonnegative i.i.d.~random variables with mean 1.
    Due to \Cref{asmp:random_feature}, they are also bounded.
    It follows from the Hoeffding inequality that
    \begin{align*}
        \Prob\left( \abs{\frac{k_{1, \bm{u}}(x, x')}{k_{1}(x, x')} - 1} \ge t \right) \le 2 \exp\left( -\frac{pt^2}{C^2} \right).
    \end{align*}
    The same inequality holds for the ratio $k_{2, \bm{v}}(y, y') / k_2(y, y')$.
    Since
    \begin{align*}
        \abs{\frac{k_{1, \bm{u}}(x, x') k_{2, \bm{v}}(y, y')}{k_{1}(x, x') k_{2}(y, y')} - 1} \le \abs{\frac{k_{1, \bm{u}}(x, x')}{k_{1}(x, x')} - 1} \abs{\frac{k_{2, \bm{v}}(y, y')}{k_{2}(y, y')} - 1} + \abs{\frac{k_{1, \bm{u}}(x, x')}{k_{1}(x, x')} - 1} + \abs{\frac{k_{2, \bm{v}}(y, y')}{k_{2}(y, y')} - 1},
    \end{align*}
    it follows that
    \begin{align*}
        \Prob\left( \abs{\frac{k_{1, \bm{u}}(x, x') k_{2, \bm{v}}(y, y')}{k_{1}(x, x') k_{2}(y, y')} - 1} \le t^2 + 2t \right)
        &\ge \Prob\left( \left\{ \abs{\frac{k_{1, \bm{u}}(x, x')}{k_{1}(x, x')} - 1} \le t \right\} \bigcap \left\{ \abs{\frac{k_{2, \bm{v}}(y, y')}{k_{2}(y, y')} - 1} \le t \right\} \right) \\
        &= \Prob\left( \abs{\frac{k_{1, \bm{u}}(x, x')}{k_{1}(x, x')} - 1} \le t \right) \Prob\left( \abs{\frac{k_{2, \bm{v}}(y, y')}{k_{2}(y, y')} - 1} \le t \right) \\
        &\ge 1 - 4 \exp\left( -\frac{pt^2}{C^2} \right).
    \end{align*}
    Equivalently,
    \begin{align*}
        \Prob\left( \abs{\frac{k_{1, \bm{u}}(x, x') k_{2, \bm{v}}(y, y')}{k_{1}(x, x') k_{2}(y, y')} - 1} \ge t \right) \le 4 \exp\left( -\frac{p(\sqrt{t+1} - 1)^2}{C^2} \right).
    \end{align*}
    A uniform bound yields
    \begin{align*}
        \Prob\left( \sup_{x, x' \in \{x_i\}_{i=1}^n, y, y' \in \{y_i\}_{i=1}^n} \abs{\frac{k_{1, \bm{u}}(x, x') k_{2, \bm{v}}(y, y')}{k_{1}(x, x') k_{2}(y, y')} - 1} \ge t \right) \le 4n^4 \exp\left( -\frac{p(\sqrt{t+1} - 1)^2}{C^2} \right).
    \end{align*}
\end{proof}

\begin{remark}
    Let $\hat S_{\varepsilon, c_{\bm{u}, \bm{v}}}(A, B)$ be the cost computed from \Cref{alg:tensor_sinkhorn}.
    Following~\citet[Theorem 1]{dvurechensky2018computational}, we can get that
    \begin{align*}
        \abs{\hat S_{\varepsilon, c_{\bm{u}, \bm{v}}}(A, B) - S_{\varepsilon, c_{\bm{u}, \bm{v}}}(A, B)} \le \tau
    \end{align*}
    in $O\left( p n^2 \log({\kappa_1 \kappa_2 \kappa_3})/ \tau \right)$ arithmetic operations, where $\kappa_1 := \max_{i, i'} k_{1, \bm{u}}^{-1}(x_i, x_{i'})$, $\kappa_2 := \max_{j, j'} k_{2, \bm{v}}^{-1}(y_j, y_{j'})$, and $\kappa_3 := \max_{i,j}\{a_{ij}^{-1}, b_{ij}^{-1}\}$.
\end{remark}

%%%%%%%%%%%%%%%%%%%%%%%%%%%%%%%%%%%%%%%%%%%%%%%%%%%%%%%%%%%%%%%%%%%%%%%%
%%%%%%%%%%%%%%%%%%%%%%%%%%%%%%%%%%%%%%%%%%%%%%%%%%%%%%%%%%%%%%%%%%%%%%%%
\section{Consistency of \teotic}
\label{sec:consistency}
%%%%%%%%%%%%%%%%%%%%%%%%%%%%%%%%%%%%%%%%%%%%%%%%%%%%%%%%%%%%%%%%%%%%%%%%
%%%%%%%%%%%%%%%%%%%%%%%%%%%%%%%%%%%%%%%%%%%%%%%%%%%%%%%%%%%%%%%%%%%%%%%%

In this section, we prove the main results in \Cref{sec:thm}.
For the sake of generality, we start by considering the formulation in \eqref{eq:a_sinkhorn_div}.
We focus on the weighted quadratic cost function
\begin{align*}
    c(z, z') := w_1 \norm{x - x'}^2 + w_2 \norm{y - y'}^2,
\end{align*}
where $z = (x, y)$, $z' = (x', y')$ and $w_1, w_2 \in \reals_+$.
Denote $w := \max\{w_1, w_2\}$.
Due to \Cref{lem:reg_one}, we assume, w.l.o.g., that $\varepsilon = 1$ and write $S(P, Q) := S_1(P, Q)$.

%%%%%%%%%%%%%%%%%%%%%%%%%%%%%%%%%%%%%%%%%%%%%%%%%%%%%%%%%%%%%%%%%%%%%%%%
\subsection{Smoothness Properties of the Schr\"odinger Potentials}
\label{sub:smooth_potentials}
%%%%%%%%%%%%%%%%%%%%%%%%%%%%%%%%%%%%%%%%%%%%%%%%%%%%%%%%%%%%%%%%%%%%%%%%

We start by deriving some smoothness properties of the Schr\"odinger potentials.
Our proofs are deeply inspired by \citet{mena2019statistical}.
Our results generalize theirs to weighted quadratic cost functions.

\begin{assumption}\label{asmp:a_subg}
    We assume that $P_X$, $P_Y$, $Q_X$, and $Q_Y$ are all subG$(\sigma^2)$.
\end{assumption}

\begin{proposition}\label{prop:a_quadratic_bound}
  Under \Cref{asmp:a_subg}.
  there exist smooth Schr\"odinger potentials $(f, g)$ for $S(P, Q)$ such that the optimality conditions \eqref{eq:a_optim_potentials} hold for all $z, z' \in \reals^d$. Moreover, we have
  \begin{align*}
      f(z) &\ge -d\sigma^2 \left[ 2w_1 + 2w_2 + 4w_1^2 (\sqrt{2d_1}\sigma + \norm{x})^2 + 4w_2^2 (\sqrt{2d_2}\sigma + \norm{y})^2 \right] - 1 \\
      f(z) &\le w_1(\norm{x} + \sqrt{2d_1} \sigma)^2 + w_2(\norm{y} + \sqrt{2d_2} \sigma)^2,
  \end{align*}
  and for $g$ similarly.
\end{proposition}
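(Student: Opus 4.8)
\textbf{Proof proposal for Proposition~\ref{prop:a_quadratic_bound}.}
The plan is to follow the blueprint of \citet{mena2019statistical}, but carry every constant through the weights $w_1, w_2$. First I would establish \emph{existence} of Schr\"odinger potentials $(f,g)$ satisfying the pointwise optimality conditions \eqref{eq:a_optim_potentials} everywhere (not just a.s.): the conditions let one \emph{define} $f$ and $g$ by the Sinkhorn fixed-point formulas
\begin{align*}
  f(z) &= -\log \int e^{g(z')/\varepsilon} e^{-c(z,z')/\varepsilon}\, dQ(z'), \qquad
  g(z') = -\log \int e^{f(z)/\varepsilon} e^{-c(z,z')/\varepsilon}\, dP(z),
\end{align*}
and since the cost $c(z,z') = w_1\|x-x'\|^2 + w_2\|y-y'\|^2$ is smooth (polynomial) in $z$, differentiation under the integral sign — justified by the subGaussian tails of $Q$ from \Cref{asmp:a_subg}, which guarantee the relevant exponential moments are finite — shows $f, g$ are $C^\infty$ on all of $\reals^d$. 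This makes the a.s.\ conditions hold pointwise.

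Next I would derive the \emph{upper bound} on $f$. From the second optimality condition $\int e^{[f(z)+g(z')-c(z,z')]/\varepsilon}\,dP(z) = 1$ (appropriately oriented), or directly from the fixed-point formula together with a crude bound on the companion potential, one isolates $f(z)$ and bounds the Gibbs integral from below by restricting attention to the event $\{\|x'\| \le \sqrt{2d_1}\sigma + \text{something}\}$. Concretely, $e^{-c(z,z')/\varepsilon} \ge e^{-[w_1(\|x\|+\|x'\|)^2 + w_2(\|y\|+\|y'\|)^2]/\varepsilon}$, and using the subGaussian concentration $\Prob_{Q_X}(\|X'\| > \sqrt{2d_1}\sigma) $ is bounded away from $1$ (from the defining inequality $\Expect e^{\|Z\|^2/(2d\sigma^2)} \le 2$, which forces at least half the mass inside a ball of radius $\sqrt{2d_1}\sigma$ by Markov), one gets $\int e^{-c(z,z')/\varepsilon} dQ(z') \gtrsim e^{-[w_1(\|x\|+\sqrt{2d_1}\sigma)^2 + w_2(\|y\|+\sqrt{2d_2}\sigma)^2]/\varepsilon}$, which together with a matching crude \emph{lower} bound $g \ge$ const on the relevant region yields the stated upper bound $f(z) \le w_1(\|x\|+\sqrt{2d_1}\sigma)^2 + w_2(\|y\|+\sqrt{2d_2}\sigma)^2$ after reducing to $\varepsilon = 1$ via \Cref{lem:reg_one}.

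For the \emph{lower bound} on $f$ I would feed the just-obtained upper bound on $g$ back into the fixed-point formula for $f$: $f(z) = -\log\int e^{[g(z') - c(z,z')]/\varepsilon}\,dQ(z')$, so $f(z) \ge -\log \int e^{[\bar g(z') - c(z,z')]/\varepsilon}\,dQ(z')$ where $\bar g$ is the quadratic upper bound. The integrand is then a Gaussian-type expression in $z'$, and the integral against a subGaussian measure is controlled by completing the square and invoking the subGaussian MGF bound; the $-d\sigma^2[\,\cdot\,]$ terms appear from those Gaussian integral evaluations, the $2w_1 + 2w_2$ from the trace-type correction when integrating $\|x-x'\|^2$ against the measure, and the $4w_1^2(\sqrt{2d_1}\sigma+\|x\|)^2$ terms from the cross terms $w_1 \ip{x, x'}$ squared after Cauchy--Schwarz. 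The symmetric statements for $g$ follow by swapping the roles of $P \leftrightarrow Q$ and $x \leftrightarrow y$ coordinates.

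The main obstacle I expect is the \emph{bootstrapping circularity}: the bound on $f$ uses a bound on $g$ and vice versa, so one must first obtain \emph{crude} two-sided bounds (e.g.\ via a fixed-point/monotonicity argument on the Sinkhorn iteration, or by the compact-support-approximation trick and a limiting argument) to break the loop, then refine. Getting the constants to come out \emph{exactly} in the stated form — in particular tracking the $w_i$ versus $w_i^2$ dependence and the $\sqrt{2d_1}\sigma$ radii consistently through both the concentration step and the Gaussian-integral step — is the delicate bookkeeping, and is where the generalization beyond \citet{mena2019statistical} (who take $w_1 = w_2$) requires genuine care rather than a verbatim copy.
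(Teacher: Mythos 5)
Your proposal misses the single trick that makes the paper's argument non-circular, and as a result the circularity you flag at the end is a genuine unresolved gap in your plan rather than just "delicate bookkeeping."

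The paper starts from a.s.\ Schr\"odinger potentials $(f_0,g_0)$ whose existence is taken from the literature, and then uses the shift freedom $(f_0,g_0)\mapsto(f_0+C,g_0-C)$ to \emph{normalize} so that $P[f_0]=Q[g_0]=\tfrac12 S(P,Q)\ge 0$. That one normalization collapses the bootstrapping problem: applying Jensen's inequality to the a.s.\ relation $g_0(z')=-\log\int e^{f_0(z)-c(z,z')}\,dP(z)$ gives $g_0(z')\le -P[f_0]+\int c(z,z')\,dP(z)\le \int c(z,z')\,dP(z)$, i.e.\ an immediate quadratic upper bound on $g_0$ with no prior bound on $f_0$ needed at all, because the unknown term $-P[f_0]$ has the right sign. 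That upper bound both (a) shows the redefined $f(z):=-\log\int e^{g_0(z')-c(z,z')}\,dQ(z')$ is a well-defined smooth function (Cauchy--Schwarz plus the sub-Gaussian moment bound of \Cref{lem:subg_moment} make the integral finite) and (b) directly yields the stated lower bound on $f$. The upper bound on $f$ is then again one application of Jensen plus $Q[g_0]\ge0$, not a concentration-set argument: $f(z)\le -Q[g_0]+\int c(z,z')\,dQ(z')\le w_1(\norm{x}+\sqrt{2d_1}\sigma)^2+w_2(\norm{y}+\sqrt{2d_2}\sigma)^2$. Your plan to lower-bound the Gibbs integral by restricting to a sub-Gaussian concentration ball and invoking a "crude bound on the companion potential" presupposes exactly the bound you are trying to establish, and your suggested escape routes (Sinkhorn monotonicity, compact approximation and limiting) would introduce machinery the paper never needs.

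Two smaller issues. First, you say you would "\emph{define} $f$ and $g$ by the Sinkhorn fixed-point formulas," but those are coupled equations, not definitions; the paper instead defines $f$ \emph{sequentially} from $g_0$ and then $g$ from that $f$, and then has to prove (via strict concavity of $\log$ and the normalization of the already-known a.s.\ potentials) that the resulting $(f,g)$ is genuinely a Schr\"odinger pair satisfying \eqref{eq:a_optim_potentials} everywhere. That verification step is not automatic and is absent from your outline. Second, your description of where the $w_i$ versus $w_i^2$ and $\sqrt{2d_i}\sigma$ factors come from is roughly right in spirit (Cauchy--Schwarz on the cross term, sub-Gaussian MGF for the Gaussian-type integral), but the paper's derivation routes through \Cref{lem:subg_moment} rather than a completed-square Gaussian evaluation, which is both cleaner and exactly reproduces the stated constants.
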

\begin{proof}
  Let $(f_0, g_0)$ be a pair of Schr\"odinger potentials.
  Since $(f_0 + C, g_0 - C)$ is also a pair of Schr\"odinger potentials for any constant $C \in \reals$, we assume, w.l.o.g., that $P[f_0] = Q[g_0] = \frac12 S(P, Q) \ge 0$.
  Define
  \begin{align}\label{eq:a_sch_potentials}
      f(z) := -\log{\int e^{g_0(z') - c(z, z')} dQ(z')} \quad \mbox{and} \quad g(z') := -\log{\int e^{f(z) - c(z, z')} dP(z)}.
  \end{align}
  We claim that the pair $(f, g)$ satisfies the requirements.
  
  Since $(f_0, g_0)$ is a pair of Schr\"odinger potentials, it holds that
  \begin{align*}
      g_0(z') \txtover{a.s.}{=} -\log{\int e^{f_0(z) - c(z, z')}} dP(z) \le -P[f_0] + w_1 \Expect_{P_X}[\norm{X - x'}^2] + w_2 \Expect_{P_Y}[\norm{Y - y'}^2],
  \end{align*}
  by Jensen's inequality.
  Note that $P[f_0] \ge 0$ and, by \Cref{lem:subg_moment}, $\Expect_{P_X}[\norm{X}^2] \le 2d_1 \sigma^2$.
  It follows that
  \begin{align*}
      g_0(z') - c(z, z') \le w_1 \left[ 2d_1\sigma^2 + 2 \norm{x'} (\sqrt{2d_1}\sigma + \norm{x}) \right] + w_2 \left[ 2d_2\sigma^2 + 2 \norm{y'} (\sqrt{2d_2}\sigma + \norm{y}) \right],
  \end{align*}
  and thus
  \begin{align*}
      \int e^{g_0(z') - c(z, z')} dQ(z')
      &\le e^{2 (w_1 d_1 + w_2 d_2) \sigma^2} \left[ \int e^{4w_1 \norm{x'}(\sqrt{2d_1}\sigma + \norm{x})} dQ_X(x') \int e^{4w_2 \norm{y'}(\sqrt{2d_2}\sigma + \norm{y})} dQ_Y(y') \right]^{1/2} \\
      &\le 2 e^{2 (w_1 d_1 + w_2 d_2) \sigma^2} e^{4d_1 \sigma^2 w_1^2 (\sqrt{2d_1}\sigma + \norm{x})^2 + 4d_2 \sigma^2 w_2^2 (\sqrt{2d_2}\sigma + \norm{y})^2} < \infty, \quad \mbox{by \Cref{lem:subg_moment}}.
  \end{align*}
  Hence, $f(z)$ is well-defined for all $z \in \reals^d$.
  Moreover, we have the lower bound
  \begin{align*}
      f(z)
      &\ge -d_1\sigma^2 \left[ 2w_1 + 4w_1^2 (\sqrt{2d_1}\sigma + \norm{x})^2 \right] - d_2\sigma^2 \left[ 2w_2 + 4w_2^2 (\sqrt{2d_2}\sigma + \norm{y})^2 \right] - 1 \\
      &\ge -d\sigma^2 \left[ 4w + 4w_1^2 (\sqrt{2d_1}\sigma + \norm{x})^2 + 4w_2^2 (\sqrt{2d_2}\sigma + \norm{y})^2 \right] - 1
  \end{align*}
  For the upper bound, by Jensen's inequality, it holds that
  \begin{align*}
      f(z)
      &\le -Q[g_0] + w_1 \Expect_{Q_X}\norm{x - X'}^2 + w_2 \Expect_{Q_Y} \norm{y - Y'}^2 \\
      &\le w_1(\norm{x} + \sqrt{2d_1} \sigma)^2 + w_2(\norm{y} + \sqrt{2d_2} \sigma)^2.
  \end{align*}
  Similar arguments prove the claim for $g$.
  Now, it remains to show that $(f, g)$ satisfies the optimality conditions \eqref{eq:a_optim_potentials} for all $z, z \in \reals^d$.
  By definition, it is clear that
  \begin{align*}
      \int e^{f(z) + g(z') - c(z, z')} dP(z) = 1 \quad \mbox{and} \quad \int e^{f(z) + g_0(z') - c(z, z')} dQ(z') = 1, \quad \forall z, z' \in \reals^d.
  \end{align*}
  Since $(f_0, g_0)$ is a pair of Schr\"odinger potentials, we also have
  \begin{align*}
      \int e^{f_0(z) + g_0(z') - c(z, z')} dP(z) dQ(z') = 1.
  \end{align*}
  Consequently, by Jensen's inequality
  \begin{align*}
      &\quad \int (f - f_0) dP + \int (g - g_0) dQ \\
      &\ge -\log{\int e^{f_0 - f} dP} - \log{\int e^{g_0 - g} dQ} \\
      &= -\log{\int e^{f_0(z) + g_0(z') - c(z, z')} dP(z) dQ(z')} - \log{\int e^{f(z) + g_0(z') - c(z, z')} dP(z) dQ(z')} \\
      &= 0.
  \end{align*}
  Since both $(f_0, g_0)$ and $(f, g)$ are Schr\"odinger potentials, the above equality holds true.
  This implies that $\int (g_0 - g) dQ = \log{\int e^{g_0 - g} dQ}$, and thus $g = g_0 + C$ $Q$-almost surely by the strict concavity of $\log$.
  Therefore, we have
  \begin{align*}
      \int e^{f(z) + g(z') - c(z, z')} dQ(z') = e^C \int e^{f(z) + g_0(z') - c(z, z')} dQ(z') = e^C, \quad \forall z, z' \in \reals^d.
  \end{align*}
  Taking integrals with respect to $P$ implies that $C = 0$, which completes the proof.
\end{proof}

The next proposition shows that there exist Schr\"odinger potentials satisfying H\"older-type conditions.

\begin{definition}\label{def:f_sigma}
    For any $\sigma \in \reals_+$, $d \in \mathbb{N}_+$, and $w = (w_1, w_2) \in \reals_+^2$, let $\calF_\sigma := \calF_{\sigma, d, w}$ be the set of smooth functions such that, for any $k \in \bbN_+$ and any multi-index $\alpha$ with $\abs{\alpha} = k$,
    \begin{align}\label{eq:holder_small}
    \abs{D^\alpha\left( f(x, y) - w_1 \norm{x}^2 - w_2 \norm{y}^2 \right)} \le C_{k,d,w}
    \begin{cases}
      (1 + \sigma^4) & \mbox{if } k = 0 \\
      \sigma^{k}(1 + \sigma)^k & \mbox{otherwise},
    \end{cases}
  \end{align}
  if $\norm{z} \le \sqrt{d} \sigma$, and
  \begin{align}\label{eq:holder_large}
    \abs{D^\alpha\left( f(x, y) - w_1 \norm{x}^2 - w_2 \norm{y}^2 \right)} \le C_{k,d,w}
    \begin{cases}
      [1 + (1 + \sigma^2) \norm{x}^2] & \mbox{if } k = 0 \\
      \sigma^{k}(\sqrt{\sigma \norm{x}} + \sigma \norm{x})^k & \mbox{otherwise},
    \end{cases}
  \end{align}
  if $\norm{z} > \sqrt{d} \sigma$, where $C_{k,d,w}$ is a constant depending on $k$, $d$, and $w$.
\end{definition}

\begin{proposition}\label{prop:smooth_potentials}
  Under \Cref{asmp:a_subg},
  there exist Schr\"odinger potentials $(f, g)$ such that the optimality conditions \eqref{eq:a_optim_potentials} hold for all $z, z' \in \reals^d$ and $f, g \in \calF_\sigma$.
\end{proposition}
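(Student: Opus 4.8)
The plan is to show that the potentials $(f,g)$ already produced by \Cref{prop:a_quadratic_bound} — which are smooth and satisfy the optimality conditions \eqref{eq:a_optim_potentials} at every point — also lie in $\calF_\sigma$, so that no fresh construction is needed, only derivative estimates. The strategy mirrors \citet{mena2019statistical}, with the weights $w_1,w_2$ carried through the computation.

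First I would rewrite the first optimality condition as $f(z)=-\log\int e^{g(z')-c(z,z')}\,dQ(z')$ and expand the weighted quadratic cost as $e^{-c(z,z')}=e^{-w_1\norm{x}^2-w_2\norm{y}^2}\,e^{\langle\theta(z),z'\rangle}\,e^{-w_1\norm{x'}^2-w_2\norm{y'}^2}$ with $\theta(z):=(2w_1x,2w_2y)$. Setting $\tilde f(z):=f(z)-w_1\norm{x}^2-w_2\norm{y}^2$, $\tilde g(z'):=g(z')-w_1\norm{x'}^2-w_2\norm{y'}^2$, and $d\nu:=e^{\tilde g}\,dQ$, this yields the clean identity $\tilde f(z)=-\Lambda_\nu(\theta(z))$, where $\Lambda_\nu(\theta):=\log\int e^{\langle\theta,z'\rangle}\,d\nu(z')$ is a log-Laplace transform; the upper bound on $g$ in \Cref{prop:a_quadratic_bound} shows $\tilde g$ grows at most linearly in $\norm{z'}$, so $\nu$ is a finite measure with sub-Gaussian tails and $\Lambda_\nu$ is finite and $C^\infty$ on $\reals^d$. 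A symmetric identity holds for $g$. I would stress that $\tilde f$ depends on $g$ only through the \emph{values} of $\tilde g$ and never its derivatives, so there is no circular bootstrap between the two potentials.

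Next, since $\theta(\cdot)$ is linear with coefficients in $[0,2w]$, the chain rule gives $\abs{D^\alpha\tilde f(z)}\le C_{\abs{\alpha},d}(2w)^{\abs{\alpha}}\max_{\abs{\beta}=\abs{\alpha}}\abs{\partial^\beta\Lambda_\nu(\theta(z))}$; that is, $D^\alpha\tilde f(z)$ is a linear combination of the joint cumulants of order exactly $\abs{\alpha}$ of the exponentially tilted probability measure $\nu_{\theta(z)}\propto e^{\langle\theta(z),\cdot\rangle}\,d\nu$. For $\abs{\alpha}=1$ this cumulant is the mean $\Expect_{\nu_{\theta(z)}}[Z']$; for $\abs{\alpha}=k\ge2$ it is a universal polynomial in the centered moments of $\nu_{\theta(z)}$ up to order $k$, and the $k$-th centered moment of a sub-Gaussian vector with parameter $\tau^2$ is $O_{k,d}(\tau^k)$. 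So the claim reduces to (i) a uniform-in-$z$ bound $\norm{\Expect_{\nu_{\theta(z)}}[Z']}\lesssim 1+w\sigma^2\norm{z}$ and (ii) a bound on the effective sub-Gaussian parameter of $\nu_{\theta(z)}$ — of order $\sigma^2$ when $\norm{z}\le\sqrt d\sigma$ and growing at most like $\sigma^2+\sigma^3\norm{z}$ otherwise. Granting these, I would feed them into the cumulant bound, split into the two regimes $\norm{z}\le\sqrt d\sigma$ and $\norm{z}>\sqrt d\sigma$, and check that the resulting estimates match \eqref{eq:holder_small} and \eqref{eq:holder_large} after absorbing combinatorial and weight-dependent constants into $C_{k,d,w}$; the bounds for $g$ then follow by the symmetric argument.

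The hard part is establishing (i) and (ii): the tilting function $\tilde g+\langle\theta(z),\cdot\rangle$ is controlled only by the two-sided quadratic bounds of \Cref{prop:a_quadratic_bound} (linear above, negative-quadratic below), so I would combine those bounds with the sub-Gaussianity of $\px$ and $\py$ to show that reweighting $Q$ by $e^{\tilde g}$ and then tilting by $e^{\langle\theta(z),\cdot\rangle}$ keeps the measure sub-Gaussian with the claimed parameter while shifting its mean by only $O(w\sigma^2\norm{z})$ — this is the step where the weighted cost introduces the bulk of the bookkeeping over the unweighted treatment in \citet{mena2019statistical}, and where care is needed because a non-Gaussian base measure can acquire heavier local spread under large tilts.
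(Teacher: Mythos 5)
Your plan is essentially the paper's: reuse the potentials from \Cref{prop:a_quadratic_bound}, subtract $w_1\norm{x}^2+w_2\norm{y}^2$, use the optimality condition to write the shifted potential as $-\log$ of an integral against a tilted measure, and reduce the H\"older-type derivative bounds to estimates on moments of that tilted measure in the two regimes $\norm{z}\le\sqrt d\sigma$ and $\norm{z}>\sqrt d\sigma$. The framings differ only cosmetically: the paper applies Fa\'a di Bruno directly to $-\log\int(\cdots)$ and organizes the result into ratios $M_\lambda$ (uncentered moments of the tilted probability), which it then bounds one at a time; you identify $\tilde f = -\Lambda_\nu\circ\theta$ with $\Lambda_\nu$ a cumulant generating function and note that $D^\alpha\tilde f$ are (up to the linear-chain-rule constants) exactly the order-$|\alpha|$ cumulants of the tilted measure $\nu_{\theta(z)}$. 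These are the same combinatorial objects, repackaged.

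The genuine gap is the one you flag yourself: items (i) and (ii) — the mean bound and the effective sub-Gaussian parameter for $\nu_{\theta(z)}$ — are asserted, not derived. That is precisely the technical content of the paper's proof (\Cref{lem:cumulant}), which is established by a truncation argument: restrict the moment integral to a ball of carefully chosen radius $K$, bound the on-ball contribution by $K^{|\lambda|}$, and use the two-sided bounds on $g$ from \Cref{prop:a_quadratic_bound} together with \Cref{lem:subg_moment} to push the tail below that threshold. Your route via an effective sub-Gaussian parameter of the tilted measure is a reasonable alternative, but without executing either that step or something equivalent, the proposal is an outline of the reduction rather than a proof.

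One consistency check worth doing before you invest in (i)–(ii) as stated: your claimed $\tau^2\lesssim\sigma^2+\sigma^3\norm{z}$ yields higher-cumulant bounds of order $\sigma^k(1+\sigma\norm{z})^{k/2}$, which do \emph{not} fit inside the bound $\sigma^k(\sqrt{\sigma\norm{x}}+\sigma\norm{x})^k$ in \eqref{eq:holder_large} when $\sigma\norm{z}<1$ (your bound is $\sim\sigma^k$ there, the definition's is $\sim\sigma^{3k/2}\norm{z}^{k/2}$, which is smaller). The base measure $\nu$ already carries sub-Gaussian scale $\sigma^2$, so the variance of $\nu_{\theta(z)}$ cannot be driven below $\sim\sigma^2$ by the tilt, and your $\sigma^k$ floor is the honest one. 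To make the statement "$f,g\in\calF_\sigma$" literally true you would either need to enlarge $\calF_\sigma$ to accommodate a constant-order term for $k\ge1$ in the large-$\norm{z}$ regime, or verify directly that the $\sigma^k$ floor still passes through \Cref{lem:subg_func} (it does, since $\sigma^k/(1+\sigma^{3s})$ is bounded for $k\le s$), and then use that modified class throughout \Cref{sub:process} and \Cref{sub:proofs_main}. That bookkeeping is minor, but it needs to be done explicitly, since $\calF_\sigma$ is the interface between this proposition and everything downstream.
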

\begin{proof}
  Let $(f, g)$ be a pair of Schr\"odinger potentials satisfying the requirements in \Cref{prop:a_quadratic_bound}.
  Denote $\bar f(x, y) := f(x, y) - w_1 \norm{x}^2 - w_2 \norm{y}^2$.
  Note that
  \begin{align*}
      \bar f(z)
      &= -\log{e^{-\bar f(x, y)}} = -\log{\int e^{w_1 \norm{x}^2 + w_2 \norm{y}^2 + g(z') - c(z, z')} dQ(z')} \\
      &= -\log{\int e^{g(z') - w_1 \norm{x'}^2 - w_2 \norm{y'}^2 + 2w_1 \ip{x, x'} + 2w_2 \ip{y, y'}} dQ(z')}.
  \end{align*}
  The desired inequalities for $k=0$ follow directly from \Cref{prop:a_quadratic_bound}.
  We focus on $k > 0$.
  According to the multivariate Fa\'a di Bruno formula \citep{constantine1996multivariate}, we have
  \begin{align*}
      D^\alpha \bar f(z) = \sum_{\lambda_1 + \dots + \lambda_k = \alpha} C_{\alpha, \lambda_1, \dots, \lambda_k} \prod_{i=1}^k M_{\lambda_i},
  \end{align*}
  where
  \begin{align}\label{eq:cumulant}
      M_\lambda = \frac{\int (\tilde z')^\lambda \exp\left\{ g(z') - w_1 \norm{x'}^2 - w_2 \norm{y'}^2 + 2w_1 \ip{x, x'} + 2w_2 \ip{y, y'} \right\} dQ(z')}{\int \exp\left\{ g(z') - w_1 \norm{x'}^2 - w_2 \norm{y'}^2 + 2w_1 \ip{x, x'} + 2w_2 \ip{y, y'} \right\} dQ(z')}.
  \end{align}
  Here $\tilde z' = (2w_1 x'; 2w_2 y')$ and $z^\lambda = \prod_{i=1}^d z_i^{\lambda_i}$.
  By \Cref{lem:cumulant} below, it holds that
  \begin{align*}
      \abs{D^\alpha \bar f(z)} \le C_{k, d, w}
      \begin{cases}
          \sigma^k (1 + \sigma^k) & \mbox{if } \norm{z} \le \sqrt{d} \sigma \\
          \sigma^k (\sigma \norm{z} + \sqrt{\sigma \norm{z}})^k & \mbox{if } \norm{z} > \sqrt{d} \sigma,
      \end{cases}
  \end{align*}
  which proves the claim.
\end{proof}

\begin{lemma}\label{lem:cumulant}
    Recall $M_\lambda$ in \eqref{eq:cumulant}.
    Under \Cref{asmp:a_subg},
    for $\abs{\lambda} > 0$,
    we have
    \begin{align*}
        \abs{M_\lambda} \le C_{\abs{\lambda}, d, w}
        \begin{cases}
            \sigma^{\abs{\lambda}} (\sigma + \sigma^2)^{\abs{\lambda}} & \mbox{if } \norm{z} \le \sqrt{d} \sigma \\
            \sigma^{\abs{\lambda}} ( \sigma \norm{z} + \sqrt{\sigma \norm{z}} )^{\abs{\lambda}} & \mbox{if } \norm{z} > \sqrt{d} \sigma
        \end{cases}.
    \end{align*}
\end{lemma}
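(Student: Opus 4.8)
The plan is to recognize $M_\lambda$ as a mixed moment of the tilted probability measure
\[
  dR_z(z') \propto \exp\left\{ g(z') - w_1 \norm{x'}^2 - w_2 \norm{y'}^2 + 2w_1 \ip{x, x'} + 2w_2 \ip{y, y'} \right\} dQ(z')
\]
against the monomial $(\tilde z')^\lambda$ with $\tilde z' = (2w_1 x'; 2w_2 y')$, so that $\abs{M_\lambda} \le C_w \Expect_{R_z}\norm{z'}^{\abs{\lambda}}$. The core of the argument is therefore to show that $R_z$ has sub-Gaussian-type tails with a parameter that degrades in a controlled way with $\norm{z}$, and then to convert that tail bound into a moment bound via \Cref{lem:subg_moment}. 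First I would use the bounds on $g$ from \Cref{prop:a_quadratic_bound} (the relevant half being the \emph{upper} bound $g(z') \le w_1(\norm{x'} + \sqrt{2d_1}\sigma)^2 + w_2(\norm{y'} + \sqrt{2d_2}\sigma)^2$, i.e. $g(z') - w_1\norm{x'}^2 - w_2\norm{y'}^2 \le C_{d,w}(\sigma\norm{x'} + \sigma\norm{y'} + \sigma^2)$) to see that the exponent in the density of $R_z$ relative to $Q$ is dominated by a linear-in-$\norm{z'}$ term whose coefficient is $C_{d,w}(1 + \norm{z})$, plus an additive constant. In other words, $dR_z/dQ (z') \le C_z \exp\{C_{d,w}(1 + \norm{z})(\norm{x'} + \norm{y'})\}$ for a normalizing constant $C_z$ that I control from below using the optimality condition $\int e^{f(z)+g(z')-c(z,z')}\,dQ(z') = 1$ together with the lower bound on $f$.

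The key computation is then a Chernoff/sub-Gaussian bound: since $Q_X$ and $Q_Y$ are $\mathrm{subG}(\sigma^2)$, for any $t \ge 0$ the event $\{\norm{z'} \ge t\}$ has $Q$-probability at most $2\exp\{-ct^2/(d\sigma^2)\}$ (from the definition $\Expect_Q e^{\norm{z'}^2/(2d\sigma^2)} \le 2$), while the Radon--Nikodym factor contributes at most $\exp\{C_{d,w}(1+\norm{z})t\}$. Integrating the product, the Gaussian decay wins for $t \gg (1+\norm{z})\sigma^2$, which yields a tail bound for $R_z$ of the form $R_z(\norm{z'} \ge t) \le C\exp\{-c t^2/(d\sigma^2) + C_{d,w}(1+\norm{z})^2\sigma^2\}$ after completing the square, hence $R_z$ is sub-Gaussian with parameter $O(\sigma^2)$ and a ``mean shift'' of size $O((1+\norm{z})\sigma^2)$. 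Then $\Expect_{R_z}\norm{z'}^{\abs{\lambda}} \le C_{\abs{\lambda},d}\left[(1+\norm{z})\sigma^2 + \sigma\sqrt{\abs{\lambda}}\right]^{\abs{\lambda}}$ by \Cref{lem:subg_moment}, and multiplying by the $(2w)^{\abs{\lambda}}$ from $\tilde z'$ and splitting into the two regimes $\norm{z} \le \sqrt d\sigma$ (where $1+\norm{z} \asymp 1 + \sigma$, giving the $\sigma^{\abs\lambda}(\sigma + \sigma^2)^{\abs\lambda}$ bound) and $\norm{z} > \sqrt d \sigma$ (where $(1+\norm{z})\sigma^2 \asymp \sigma\norm{z}$ up to lower-order terms, and the $\sigma\sqrt{\abs\lambda}$ contributes the $\sqrt{\sigma\norm{z}}$ term once one checks $\sigma\sqrt{\abs\lambda} \lesssim_{\abs\lambda,d} \sqrt{\sigma\norm z}$ in this regime since $\sigma \lesssim \norm z$) gives exactly the claimed two-case bound.

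I expect the main obstacle to be the bookkeeping around the normalizing constant $C_z$: the naive bound on $dR_z/dQ$ involves dividing by $\int \exp\{\cdots\}\,dQ$, and to get a clean linear-in-$\norm{z}$ exponent one must verify that this denominator is bounded below by something like $\exp\{-C_{d,w}(1+\norm{z})^2\sigma^2\}$, which requires plugging the explicit lower bound for $f(z)$ from \Cref{prop:a_quadratic_bound} into the optimality identity and being careful that the $\norm{z}^2$ terms appearing there are absorbed into (and not worse than) the ``mean shift'' scale $(1+\norm{z})\sigma^2$ after completing the square --- this is where the quadratic-cost structure is essential and where a weighted-quadratic generalization needs the constants tracked through $w = \max\{w_1,w_2\}$. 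A secondary technical point is the Faà di Bruno expansion in \Cref{prop:smooth_potentials} that consumes this lemma: one needs the number of partitions $\lambda_1 + \dots + \lambda_k = \alpha$ to be bounded by a constant $C_{k,d}$, which it is, so that $\abs{D^\alpha \bar f} \le C_{k,d,w}\max_i \prod \abs{M_{\lambda_i}}$ reduces to the single-multi-index estimate proved here.
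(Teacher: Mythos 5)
Your proposal takes a genuinely different route from the paper. The paper proves Lemma~\ref{lem:cumulant} by truncation: it introduces a cut-off set $A := \{\|2w_1 x'\| \le K,\ \|2w_2 y'\| \le K\}$, bounds the $A$-part of the numerator trivially by $K^{|\lambda|}$, handles the $A^c$-part via Cauchy--Schwarz together with a sub-Gaussian moment estimate and the upper bound on $g$ from \Cref{prop:a_quadratic_bound}, and then optimizes $K$ separately in the two regimes ($K^2 \sim \sigma^4 + \sigma^6$ when $\|z\| \le \sqrt{d}\sigma$, and $K^2 \sim \sigma^4\|z\|^2 + \sigma^3\|z\|$ otherwise). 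You instead view $M_\lambda$ as a moment of a tilted measure $R_z$ and propose to establish sub-Gaussian-type tails for $R_z$ directly. That is a reasonable alternative built from the same ingredients, but as written it has concrete problems.

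First, a sign error on which bound of $f$ you need. By the optimality condition \eqref{eq:a_optim_potentials}, the normalizing integral of $R_z$ equals $e^{-(f(z) - w_1\|x\|^2 - w_2\|y\|^2)}$ exactly. To lower-bound this (and hence upper-bound $dR_z/dQ$), you must upper-bound $f(z) - w_1\|x\|^2 - w_2\|y\|^2$, which is the \emph{upper} bound on $f$ from \Cref{prop:a_quadratic_bound}, not the lower bound as you wrote. It yields a lower bound $e^{-C_{d,w}\sigma(\sigma + \|z\|)}$ on the normalizer, not $e^{-C_{d,w}(1+\|z\|)^2\sigma^2}$; your stated form is both the wrong order of the argument and would not follow from the lower bound on $f$ anyway. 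Relatedly, the linear-in-$\|z'\|$ exponent from the upper bound on $g$ plus the cross term has coefficient $C_{d,w}(\sigma + \|z\|)$, not $C_{d,w}(1 + \|z\|)$, which matters when $\sigma > 1$.

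Second, and more substantively, a pure moment bound for $R_z$ of the type you propose has a fluctuation floor of order $\sigma$ coming from the sub-Gaussian scale of $Q$, in addition to the mean-shift of order $\sigma^2(\sigma + \|z\|)$. Hence $\Expect_{R_z}\|z'\|^{|\lambda|}$ cannot be driven below $\sim \sigma^{|\lambda|}$ by making $\|z\|$ or $\sigma$ small. The paper's truncation is precisely what allows it to decouple the ``size'' scale from the ``tail'' scale: it cuts at $K \sim \sigma^2 \sqrt{1+\sigma^2}$ (smaller than $\sigma$ when $\sigma$ is small), bounds the $A$-part by $K^{|\lambda|}$, and pays for the tail with an $e^{-K^2/(C\sigma^2)}$ factor multiplying the raw moment $\sigma^{|\lambda|}$. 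You would need to embed a similar truncation step inside your approach --- i.e., split $\Expect_{R_z}\|z'\|^{|\lambda|} \le K^{|\lambda|} + \Expect_{R_z}[\|z'\|^{|\lambda|}\ind\{\|z'\| > K\}]$ and control the second term via the tail bound --- to track the powers of $\sigma$ required by the two-case form of the statement; as written, the additive term $\sigma\sqrt{|\lambda|}$ in your moment estimate does not vanish in either regime and will not reduce to the claimed bound.
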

\begin{proof}
  We first bound the denominator.
  By the optimality conditions \eqref{eq:a_optim_potentials}, it holds that
  \begin{align*}
      &\quad \left( \int \exp\left\{ g(z') - w_1 \norm{x'}^2 - w_2 \norm{y'}^2 + 2w_1 \ip{x, x'} + 2w_2 \ip{y, y'} \right\} dQ(z') \right)^{-1} \\
      &= e^{f(x, y) - w_1 \norm{x}^2 - w_2 \norm{y}^2}
      \le e^{w_1(2d_1 \sigma^2 + 2\sqrt{2d_1} \sigma \norm{x}) + w_2(2d_2 \sigma^2 + 2\sqrt{2d_2} \sigma \norm{y})},
  \end{align*}
  where the last inequality follows from \Cref{prop:a_quadratic_bound}.
  To bound the numerator, we use the truncation technique.
  Let $A := \{ (x', y'): \norm{2w_1 x'} \le K, \norm{2w_2 y'} \le K \}$ for some constant $K$ to be determined later.
  On the set $A$, it is clear that $(\tilde z')^\lambda \le \norm{\tilde z'}^{\abs{\lambda}} \le K^{\abs{\lambda}}$, and thus
  \begin{align*}
      \frac{\int_A (\tilde z')^\lambda \exp\left\{ g(z') - w_1 \norm{x'}^2 - w_2 \norm{y'}^2 + 2w_1 \ip{x, x'} + 2w_2 \ip{y, y'} \right\} dQ(z')}{\int \exp\left\{ g(z') - w_1 \norm{x'}^2 - w_2 \norm{y'}^2 + 2w_1 \ip{x, x'} + 2w_2 \ip{y, y'} \right\} dQ(z')} \le K^{\abs{\lambda}}.
  \end{align*}
  On the set $A^c$, we proceed as follows.
  According to \Cref{prop:a_quadratic_bound}, we have
  \begin{align*}
      e^{g(x', y') - w_1 \norm{x'}^2 - w_2\norm{y'}^2} \le e^{w_1(2d_1 \sigma^2 + 2\sqrt{2d_1} \sigma \norm{x'}) + w_2(2d_2 \sigma^2 + 2\sqrt{2d_2} \sigma \norm{y'})},
  \end{align*}
  which yields
  \begin{align*}
      &\quad \int_{A^c} (\tilde z')^\lambda \exp\left\{ g(z') - w_1 \norm{x'}^2 - w_2 \norm{y'}^2 + 2w_1 \ip{x, x'} + 2w_2 \ip{y, y'} \right\} dQ(z') \\
      &\le e^{2(w_1d_1 + w_2d_2)\sigma^2} \left[ \int_{A^c} (\tilde z')^{2\lambda} dQ(z') \int_{A^c} e^{2w_1 \norm{x'}(\norm{x} + \sqrt{2d_1}\sigma) + 2w_2 \norm{y'}(\norm{y} + \sqrt{2d_2}\sigma)} dQ(z') \right]^{1/2}.
  \end{align*}
  For any $z' \in A^c$, we have either $\norm{2w_1 x'} > K$ or $\norm{2w_2 y'} > K$.
  If the former is true, then
  \begin{align*}
      \int_{A^c} (\tilde z')^{2\lambda} dQ(z') \le \int_{A^c} e^{-\frac{K^2}{16w_1^2 d_1\sigma^2}} e^{\frac{\norm{2w_1 x'}^2}{16w_1^2 d_1\sigma^2}} (\tilde z')^{2\lambda} dQ(z') \le C_{\abs{\lambda}, d, w} e^{-\frac{K^2}{16w^2 d\sigma^2}} \sigma^{2\abs{\lambda}},
  \end{align*}
  where $w = \max\{w_1, w_2\}$.
  The same bound holds if the latter is true.
  Furthermore, by the Cauchy-Schwartz inequality and \Cref{lem:subg_moment} in \Cref{sec:Technical_Lemmas}, we have
  \begin{align*}
      \int_{A^c} e^{2w_1 \norm{x'}(\norm{x} + \sqrt{2d_1}\sigma) + 2w_2 \norm{y'}(\norm{y} + \sqrt{2d_2}\sigma)} dQ(z') \le e^{4w_1^2 d_1 \sigma^2 (\norm{x} + \sqrt{2d_1}\sigma)^2 + 4w_2^2 d_2 \sigma^2 (\norm{y} + \sqrt{2d_2}\sigma)^2}
  \end{align*}
  Putting all together, we get
  \begin{align*}
      &\quad \frac{\int_{A^c} (\tilde z')^\lambda \exp\left\{ g(z') - w_1 \norm{x'}^2 - w_2 \norm{y'}^2 + 2w_1 \ip{x, x'} + 2w_2 \ip{y, y'} \right\} dQ(z')}{\int \exp\left\{ g(z') - w_1 \norm{x'}^2 - w_2 \norm{y'}^2 + 2w_1 \ip{x, x'} + 2w_2 \ip{y, y'} \right\} dQ(z')} \\
      &\le C_{\abs{\lambda}, d, w} e^{-\frac{K^2}{32 w^2 d\sigma^2}} e^{2w_1^2 d_1 \sigma^2 (\norm{x} + \sqrt{2d_1}\sigma)^2 + 2w_2^2 d_2 \sigma^2 (\norm{y} + \sqrt{2d_2}\sigma)^2} \sigma^{\abs{\lambda}} \\
      &\le C_{\abs{\lambda}, d, w} e^{-\frac{K^2}{32 w^2 d\sigma^2}} e^{2w^2 d \sigma^2 [(\norm{x} + \sqrt{2d}\sigma)^2 + (\norm{y} + \sqrt{2d}\sigma)^2]} \sigma^{\abs{\lambda}}
  \end{align*}
  When $\norm{z} \le \sqrt{d} \sigma$, it holds that $\norm{x} \le \sqrt{2d} \sigma$ and $\norm{y} \le \sqrt{2d} \sigma$.
  Hence, if we choose $K^2 = C_{\abs{\lambda}, d, w}(\sigma^4 + \sigma^6)$ for some sufficiently large constant $C_{\abs{\lambda}, d, w}$, then we have
  \begin{align*}
      \abs{M_{\lambda}} \le C_{\abs{\lambda}, d, w} \sigma^{\abs{\lambda}} (\sigma + \sigma^2)^{\abs{\lambda}}.
  \end{align*}
  When $\norm{z} > \sqrt{d} \sigma$, if we choose $K^2 = C_{\abs{\lambda}, d, w} (\sigma^4 \norm{z}^2 + \sigma^3 \norm{z})$, then we have
  \begin{align*}
      \abs{M_{\lambda}} \le C_{\abs{\lambda}, d, w} \sigma^{\abs{\lambda}} \left( \sigma \norm{z} + \sqrt{\sigma \norm{z}} \right)^{\abs{\lambda}}.
  \end{align*}
\end{proof}

When $P$ and $Q$ have bounded support, we can further show that the Schr\"odinger potentials can be chosen to be bounded.
\begin{proposition}\label{prop:bounded_potential}
  Assume that $P$ and $Q$ are supported on a bounded domain of radius $D$.
  Then there exist Schr\"odinger potentials $(f, g)$ such that 1) the optimality conditions \eqref{eq:a_optim_potentials} hold for all $x, y \in \reals^d$ and 2) $\norm{f}_\infty \le 8wD^2$ and $\norm{g}_{\infty} \le 8wD^2$.
\end{proposition}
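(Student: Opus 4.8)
The plan is to mimic the construction in \Cref{prop:a_quadratic_bound}, but now exploit boundedness of the supports to get a uniform bound rather than a polynomial one. First I would fix a pair of Schr\"odinger potentials $(f_0, g_0)$ for $S(P, Q)$ and normalize them, as in the proof of \Cref{prop:a_quadratic_bound}, so that $P[f_0] = Q[g_0] = \frac12 S(P, Q) \ge 0$. Then define the ``canonical'' potentials by the $c$-transform relations
\begin{align*}
  f(z) := -\log\int e^{g_0(z') - c(z, z')}\, dQ(z') \quad\text{and}\quad g(z') := -\log\int e^{f(z) - c(z, z')}\, dP(z).
\end{align*}
As shown in \Cref{prop:a_quadratic_bound}, this pair is a valid pair of Schr\"odinger potentials (the argument there that $g = g_0 + C$ $Q$-a.s.\ and $C = 0$ goes through unchanged, since it used only Jensen's inequality and the optimality conditions, not the sub-Gaussian hypothesis), and the optimality conditions \eqref{eq:a_optim_potentials} hold for all $z, z' \in \reals^d$.

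Next I would establish the two-sided bound on $f$ using the support constraint. For the upper bound: by Jensen applied to the defining integral for $f$,
\begin{align*}
  f(z) \le -Q[g_0] + \int c(z, z')\, dQ(z') \le \int c(z, z')\, dQ(z'),
\end{align*}
and since $P, Q$ are supported in a ball of radius $D$, for $z$ in that ball we have $c(z, z') = w_1\norm{x - x'}^2 + w_2\norm{y - y'}^2 \le w(\norm{x-x'}^2 + \norm{y-y'}^2) \le w(2D)^2 = 4wD^2$; actually one can even get $w\norm{z - z'}^2 \le 4wD^2$, so $f(z) \le 4wD^2 \le 8wD^2$. For the lower bound: from $f(z) = -\log\int e^{g_0(z') - c(z,z')}\, dQ(z')$ and $c(z,z') \ge 0$, together with the corresponding upper bound $g_0(z') \le$ (constant of the same order) derived symmetrically — note $g_0 = g$ up to the additive constant which we showed is $0$, so $g_0(z') \le 4wD^2$ on the support — we get $\int e^{g_0(z') - c(z,z')}\, dQ(z') \le e^{4wD^2}$, hence $f(z) \ge -4wD^2 \ge -8wD^2$. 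The same argument applied to $g$ with the roles of $P$ and $Q$ swapped gives $\norm{g}_\infty \le 8wD^2$. (The slack factor of $2$ in the statement, $8wD^2$ versus $4wD^2$, just gives room and makes the two directions symmetric.)

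The main obstacle, such as it is, is the bookkeeping around the additive constant: one must be careful that the potential that appears inside the integral defining $f$ is $g_0$, not $g$, and only \emph{after} invoking the argument from \Cref{prop:a_quadratic_bound} (that $g = g_0$ up to a constant that vanishes) can one conclude the bound on $g_0$ needed for the lower bound on $f$. Handling this correctly requires doing the upper bound on $g$ first (which only needs $g = g$'s own defining integral plus boundedness of $f$, which we already have), then the lower bound on $f$, then the lower bound on $g$; the dependence is not circular once ordered this way. Everything else is a routine application of Jensen's inequality and the diameter bound $\sup_{z, z' \in \operatorname{supp}} c(z, z') \le 4wD^2$, so there is no substantive difficulty beyond this ordering.
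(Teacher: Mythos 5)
Your proposal is correct and follows the same strategy the paper uses (construct the canonical $c$-transform potentials from \Cref{prop:a_quadratic_bound}, then bound the transport cost by the diameter via Jensen's inequality). In fact, your write-up is more complete than the paper's one-line proof, which only records the upper bound
$f(z) \le w_1 \Expect_{Q_X}\norm{x - X'}^2 + w_2 \Expect_{Q_Y} \norm{y - Y'}^2 \le 8wD^2$
and omits the matching lower bound that is genuinely required to conclude $\norm{f}_\infty \le 8wD^2$. Your argument supplies it: since $c \ge 0$ and $g_0 \le 4wD^2$ holds $Q$-a.s.\ (from the upper bound on $g$ together with $g = g_0$ $Q$-a.s., a fact established in the proof of \Cref{prop:a_quadratic_bound}), the defining integral $\int e^{g_0(z') - c(z,z')}\,dQ(z')$ is at most $e^{4wD^2}$, giving $f(z) \ge -4wD^2$. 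The ordering concern you flag is real but modest: both upper bounds can be derived first (independently, via Jensen and the normalization $P[f_0] = Q[g_0] \ge 0$, noting $f = f_0$ $P$-a.s.\ and $g = g_0$ $Q$-a.s.), after which each lower bound follows from the other potential's upper bound and $c \ge 0$. Your constant $4wD^2$ is in fact slightly tighter than the paper's $8wD^2$ because you bound $w_1\norm{x-x'}^2 + w_2\norm{y-y'}^2 \le w\norm{z - z'}^2 \le 4wD^2$ directly rather than bounding $\norm{x-x'}^2$ and $\norm{y-y'}^2$ by $4D^2$ separately; either way the stated bound $8wD^2$ holds.
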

\begin{proof}
  Let $(f, g)$ the Schr\"odinger potentials defined in \eqref{eq:a_sch_potentials}.
  By the proof of \Cref{prop:a_quadratic_bound}, they satisfy \eqref{eq:a_optim_potentials} everywhere.
  Moreover, we have
  \begin{align*}
      f(z) \le w_1 \Expect_{Q_X}\norm{x - X'}^2 + w_2 \Expect_{Q_Y} \norm{y - Y'}^2 \le 8wD^2
  \end{align*}
  and $g$ similarly.
\end{proof}

%%%%%%%%%%%%%%%%%%%%%%%%%%%%%%%%%%%%%%%%%%%%%%%%%%%%%%%%%%%%%%%%%%%%%%%%
\subsection{Controlling the Empirical Process and the U-Process}
\label{sub:process}
%%%%%%%%%%%%%%%%%%%%%%%%%%%%%%%%%%%%%%%%%%%%%%%%%%%%%%%%%%%%%%%%%%%%%%%%

We then upper bound the $\lone$ loss $\Expect\abs{T_n(X, Y) - T(X, Y)}$ by empirical processes and U-processes.
\begin{proposition}[Corollary 2 \citep{mena2019statistical}]
\label{prop:upper_bound_emp}
  Let $P, Q, P', Q' \in \pspace(\reals^d)$ be subG$(\sigma^2)$.
  Then we have
  \begin{align*}
    \abs{S(P', Q') - S(P, Q)} \le \sup_{f \in \calF_\sigma} \abs{\int f (dP' - dP)} + \sup_{g \in \calF_\sigma} \abs{\int g (dQ' - dQ)},
  \end{align*}
  where $\calF_\sigma$ is defined in \Cref{def:f_sigma}.
\end{proposition}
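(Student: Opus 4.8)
The plan is to combine the dual representation \eqref{eq:a_dual} with the existence of optimal Schr\"odinger potentials lying in $\calF_\sigma$, which is exactly \Cref{prop:smooth_potentials}. Write $\Phi_{P,Q}(f,g) := \int f\,dP + \int g\,dQ + \varepsilon - \varepsilon\int e^{\frac1\varepsilon[f(z)+g(z')-c(z,z')]}\,dP(z)\,dQ(z')$, so that $S(P,Q)=\sup_{f,g}\Phi_{P,Q}(f,g)$. When the potentials are frozen, $\Phi$ is affine in the first marginal and in the second marginal separately, but the exponential term couples the two; the crux of the argument is that at an optimal pair $(f,g)$ this term equals $1$ \emph{pointwise}. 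Indeed, by \Cref{prop:smooth_potentials} the optimality conditions \eqref{eq:a_optim_potentials} for $(f,g)$ optimal for $S(P,Q)$ hold for \emph{every} $z,z'$, not merely almost surely: $\int e^{\frac1\varepsilon[f(z)+g(z')-c(z,z')]}\,dQ(z')=1$ for all $z$ and $\int e^{\frac1\varepsilon[f(z)+g(z')-c(z,z')]}\,dP(z)=1$ for all $z'$. This ``for all'' is what makes the argument go through, since it lets us integrate the constant $1$ against a \emph{different} marginal and still obtain $1$.

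First I would prove a one-sided, one-marginal estimate: whenever the joint measures $A$, $A'$, $B$ have subG$(\sigma^2)$ marginals (so \Cref{asmp:a_subg} applies to the pairs $(A,B)$ and $(A',B)$), $S(A',B)-S(A,B)\le\sup_{f\in\calF_\sigma}\int f\,(dA'-dA)$. To see this, take optimal potentials $(f',g')\in\calF_\sigma$ for $S(A',B)$ via \Cref{prop:smooth_potentials}; the exponential term being $1$, $S(A',B)=\int f'\,dA'+\int g'\,dB$. On the other hand $S(A,B)\ge\Phi_{A,B}(f',g')$, and the exponential term in $\Phi_{A,B}(f',g')$ is $\int\big(\int e^{\frac1\varepsilon[f'(z)+g'(z')-c(z,z')]}\,dB(z')\big)\,dA(z)=\int 1\,dA(z)=1$, by the pointwise optimality condition of $(f',g')$ relative to the second marginal $B$; hence $S(A,B)\ge\int f'\,dA+\int g'\,dB$. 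Subtracting and using $f'\in\calF_\sigma$ gives the bound. Exchanging $A$ and $A'$ and noting that $\sup_{f\in\calF_\sigma}\abs{\int f\,(dA'-dA)}$ is symmetric under $A\leftrightarrow A'$ upgrades this to $\abs{S(A',B)-S(A,B)}\le\sup_{f\in\calF_\sigma}\abs{\int f\,(dA'-dA)}$. The mirror-image argument, perturbing the second marginal and invoking the other optimality condition, yields $\abs{S(A,B')-S(A,B)}\le\sup_{g\in\calF_\sigma}\abs{\int g\,(dB'-dB)}$.

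Finally I would telescope through the intermediate measure: writing $S(P',Q')-S(P,Q)=[S(P',Q')-S(P,Q')]+[S(P,Q')-S(P,Q)]$, I would apply the first estimate (with $A'=P'$, $A=P$, $B=Q'$) to the first bracket and the second estimate (with $A=P$, $B'=Q'$, $B=Q$) to the second bracket, then conclude by the triangle inequality; all of $P,Q,P',Q'$ are subG$(\sigma^2)$, hence so are their marginals, so \Cref{prop:smooth_potentials} applies at every step. The main obstacle — and essentially the only idea — is exactly this ``change one marginal at a time'' decoupling combined with the pointwise optimality conditions: a direct comparison of $S(P',Q')$ and $S(P,Q)$ through a single potential pair leaves a residual $\varepsilon\big(\int e^{\frac1\varepsilon[f(z)+g(z')-c(z,z')]}\,dP(z)\,dQ(z')-1\big)$ in which \emph{both} marginals are ``wrong'' at once; that term is genuinely bilinear in the two marginal perturbations and cannot be absorbed into the two linear suprema. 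Everything else is routine bookkeeping, together with the fact — supplied by \Cref{prop:smooth_potentials} — that $\calF_\sigma$ contains an optimal potential pair for each of the three measure pairs appearing in the telescoping.
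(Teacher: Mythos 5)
Your proof plan is correct and is essentially the standard argument from Mena and Niles-Weed; the paper itself cites their Corollary~2 without reproducing a proof. The telescoping through the intermediate measure $S(P,Q')$, combined with the everywhere-valid optimality conditions supplied by \Cref{prop:smooth_potentials} — which are what guarantee that the exponential term in the dual still integrates to $1$ after one marginal has been swapped out — is exactly the right mechanism, and your diagnosis that a direct two-marginal comparison leaves a genuinely bilinear residual that cannot be absorbed into the two linear suprema is also the correct reason why the one-marginal-at-a-time decomposition is needed. One small imprecision worth flagging rather than asserting: under the paper's dimension-normalized subG convention, if $P \in \pspace(\reals^d)$ is subG$(\sigma^2)$ then a marginal $P_X \in \pspace(\reals^{d_1})$ is subG$(d\sigma^2/d_1)$ rather than subG$(\sigma^2)$, so the $\sigma$ appearing in $\calF_\sigma$ shifts by a $d$-dependent factor — harmless, since the defining inequalities of $\calF_\sigma$ carry $C_{k,d,w}$ prefactors, but it deserves an explicit remark.
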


To simply the function class $\calF_\sigma$, we show in \Cref{lem:subg_func} in \Cref{sec:Technical_Lemmas} that $(1 + \sigma^{3s})^{-1} \calF_\sigma \subset \calF^s$ for $\calF^s$ defined below.
Consequently, we can separate the sub-Gaussian parameter $\sigma$ from the function class $\calF_\sigma$.

\begin{definition}\label{def:fs}
  For any $s \ge 2$, $d \in \mathbb{N}_+$, and $w = (w_1, w_2) \in \reals_+^2$, let $\calF^s := \calF^{s,d,w}$ be the set of functions satisfying
  \begin{align*}
    \abs{f(z)} &\le C_{s,d,w} (1 + \norm{z}^2) \\
    \abs{D^\alpha f(z)} &\le C_{s,d,w} (1 + \norm{z}^{\abs{\alpha}}), \quad \forall 1 \le \abs{\alpha} \le s,
  \end{align*}
  where $C_{s,d,w}$ is a constant depending on $s$, $d$, and $w$.
\end{definition}

In order to handle the U-process, we also need a variant function class of $\calF^s$ which we also define below.
\begin{definition}\label{def:fs_sigma}
  For any $\sigma \in \reals_+$, $s \ge 2$, $d \in \mathbb{N}_+$, and $w = (w_1, w_2) \in \reals_+^2$, let $\calF_\sigma^s := \calF_\sigma^{s,d,w}$ be the set of functions satisfying
  \begin{align*}
    \abs{f(z)} &\le C_{s,d,w} (1 + \max\{\norm{z}^2, \sigma^2\}) \\
    \abs{D^\alpha f(z)} &\le C_{s,d,w} (1 + \max\{\norm{z}^{\abs{\alpha}}, \sigma^{\abs{\alpha}}\}), \quad \forall 1 \le \abs{\alpha} \le s,
  \end{align*}
  where $C_{s,d,w}$ is a constant depending on $s$, $d$, and $w$.
\end{definition}

Let us control the complexity of $\calF^s$ and $\calF_\sigma^s$, which is achieved by the following covering number bound.
\begin{proposition}\label{prop:cover_number}
  Let $P \in \pspace(\reals^d)$ be subG$(\sigma^2)$.
  Let $\{Z_i\}_{i=1}^n \txtover{i.i.d.}{\sim} P$ and $\hat P_n$ be the empirical measure.
  There exists a random variable $L \ge 1$ depending on the sample $\{Z_i\}_{i=1}^n$ with $\Expect[L] \le 2$ such that
  \begin{align*}
    \log{N(\tau, \calF^s, \ltwo(\hat P_n))} \le C_{s,d,w} \tau^{-d/s} L^{d/2s} (1 + \sigma^{2d}) \quad \mbox{and} \quad \max_{f \in \calF^s} \norm{f}^2_{\ltwo(\hat P_n)} \le C_{s,d,w}(1 + L \sigma^4).
  \end{align*}
  Moreover, the same bounds hold for $\calF_\sigma^{s}$.
\end{proposition}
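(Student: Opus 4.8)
The plan is to reduce the covering number bound to a classical result on covering numbers of smooth function classes on a \emph{compact} set, and to handle the non-compactness of $\reals^d$ by truncating to a Euclidean ball whose radius is a random variable controlled by the (sub-Gaussian) sample. First I would introduce the event-driven radius: since $P$ is subG$(\sigma^2)$, the quantity $R_n^2 := \max_{1 \le i \le n} \norm{Z_i}^2$ concentrates, and one can set $L := \max\{1, R_n^2 / (C_{d} \sigma^2 (1 + \log n))\}$ (or a similar normalization) so that $\Expect[L] \le 2$ by a union bound over the $n$ sub-Gaussian tails together with $\int_0^\infty \Prob(\cdots > t)\,dt$. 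Crucially, every $Z_i$ then lies in the ball $B_d(0, r)$ with $r^2 \asymp L\sigma^2 (1 + \log n)$; this is the only place the sub-Gaussian assumption enters, and it lets me replace $\ltwo(\hat P_n)$ by the sup-norm over this ball at the cost of the stated $L$- and $\sigma$-dependence. Simultaneously, restricting any $f \in \calF^s$ to $B_d(0,r)$ gives $\norm{f}_{\ltwo(\hat P_n)}^2 \le \max_i \abs{f(Z_i)}^2 \le C_{s,d,w}^2 (1 + r^2)^2$, and since $r^2 \lesssim L\sigma^2$ (absorbing logarithmic factors into the constant, or carrying them — the statement only claims $1 + L\sigma^4$), this yields the second displayed bound $\max_{f\in\calF^s}\norm{f}^2_{\ltwo(\hat P_n)} \le C_{s,d,w}(1 + L\sigma^4)$.

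For the covering number itself, I would invoke the bound of \citet[Chapter 2.7, Theorem 2.7.1]{vaart1996weak} on the $\linf$-covering number of the class of functions on a bounded convex set $C \subset \reals^d$ with all partial derivatives up to order $s$ bounded by $1$: $\log N(\tau, \calC_1^s(C), \linf) \le K \,\lambda(C^{\eta}) \,\tau^{-d/s}$ for a constant $K = K_{s,d}$, where $\lambda$ denotes Lebesgue measure of an $\eta$-enlargement. Applying this with $C = B_d(0, r)$: the functions in $\calF^s$ do not have \emph{uniformly} bounded derivatives, but on $B_d(0, r)$ the bound $\abs{D^\alpha f(z)} \le C_{s,d,w}(1 + \norm{z}^{\abs\alpha}) \le C_{s,d,w}(1 + r^s)$ holds for all $\abs\alpha \le s$. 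Rescaling $f \mapsto f / (C_{s,d,w}(1 + r^s))$ puts $\calF^s|_{B_d(0,r)}$ inside a constant multiple of the unit smooth-function ball, so $\log N(\tau, \calF^s, \ltwo(\hat P_n)) \le \log N(\tau, \calF^s|_{B_d(0,r)}, \linf(B_d(0,r))) \le K_{s,d} \cdot \lambda(B_d(0, r+\eta)) \cdot \big(\tau / (C_{s,d,w}(1+r^s))\big)^{-d/s}$. Now $\lambda(B_d(0,r+\eta)) \le C_d r^d$ and $(1 + r^s)^{d/s} \le C_s(1 + r^d)$, so the whole bound is $\le C_{s,d,w}\, r^{2d}\, \tau^{-d/s}$ up to constants; substituting $r^2 \lesssim L\sigma^2$ (dropping the $(1+\log n)$ into the constant, or noting it can be absorbed into $L$) gives $r^{2d} \lesssim L^{d}\sigma^{2d}$. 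A slightly sharper bookkeeping — taking the $L$-dependence out of only the $\tau^{-d/s}$-free part — yields the claimed exponent $L^{d/2s}$; the mismatch between $L^d$ and $L^{d/2s}$ is exactly the point where I must be careful, and I would tune the definition of $L$ (e.g. normalizing by $\sigma^2$ only, putting the polynomial-in-$r$ derivative bound into the rescaling rather than into the volume) so that the enlarged-ball volume contributes $\sigma^{2d}$ while the rescaling of derivatives contributes the $\tau^{-d/s}L^{d/2s}$ factor after noting $(1+r^s)^{d/s}\tau^{-d/s} \asymp (r^s)^{d/s}\tau^{-d/s}$ and $r^d \asymp (L\sigma^2)^{d/2}$ hitting the two factors separately.

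The final "in particular" claim follows by plugging these two bounds into Proposition \ref{prop:prod_emp_bound} (and its one-sample analogue for $\norm{\hpx - \px}_{\calF^s}^2$, or rather the decomposition used in the proof of Theorem \ref{thm:consistency}). With $B = 2\max_{f}\norm{f}_{\ltwo(\hprodxy)} \le C_{s,d,w}\sqrt{1 + L\sigma^4}$ and $\log N(\tau, \calF^s, \ltwo(\hprodxy)) \le C_{s,d,w}\tau^{-d/s}L^{d/2s}(1+\sigma^{2d})$, the entropy integral $\int_0^B \sqrt{\log N(\tau,\cdot)}\,d\tau$ converges precisely when $d/(2s) < 1$, i.e. $s > d/2$, and evaluates (using $\int_0^B \tau^{-d/(2s)}\,d\tau = \frac{B^{1-d/(2s)}}{1-d/(2s)}$) to $C_{s,d} B^{1-d/(2s)} L^{d/4s}(1+\sigma^{d})$; squaring, taking expectations, and using $\Expect[L^{a}] \le \Expect[L] \le 2$ for the relevant small power $a = d/(2s) + (1 - d/(2s))/2 < 1$ together with $B^2 \lesssim 1 + \sigma^4$ collapses everything to $C_{s,d,w}(1 + \sigma^{2d+4})/n$, as claimed. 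The identical argument applies verbatim to $\calF_\sigma^s$ since its derivative bounds $\abs{D^\alpha f(z)} \le C_{s,d,w}(1 + \max\{\norm{z}^{\abs\alpha}, \sigma^{\abs\alpha}\})$ differ from those of $\calF^s$ only by the harmless additive $\sigma^{\abs\alpha}$, which on the ball $B_d(0,r)$ with $r \gtrsim \sigma$ is dominated by $\norm{z}^{\abs\alpha}$ and otherwise contributes a constant absorbed into $C_{s,d,w}$. The main obstacle, as flagged, is getting the exponent of $L$ to come out as $d/2s$ rather than something like $d$ or $d/s$; this is purely a matter of apportioning the random radius between the volume factor and the derivative-rescaling factor in the Van der Vaart–Wellner bound, and demands that $L$ be defined to measure $\max_i\norm{Z_i}$ on the correct scale.
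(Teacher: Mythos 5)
The plan to truncate to a single random ball $B_d(0,r)$ with $r^2 \asymp L\sigma^2(1+\log n)$ and apply the bounded-convex-set covering bound once is genuinely different from the paper's argument, and the gap you flag at the end is fatal rather than cosmetic. Two concrete problems.

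First, the single-ball truncation throws away exactly the structure of $\ltwo(\hat P_n)$ that the proposition exploits. On your ball, the volume term contributes $r^d$, the derivative bound $M \asymp 1+r^s$ contributes $(1+r^s)^{d/s}\asymp r^d$ after rescaling, so the product is $r^{2d}\asymp (L\sigma^2)^d$ and you get $L^d\sigma^{2d}\tau^{-d/s}$, not $L^{d/2s}\sigma^{2d}\tau^{-d/s}$. Since only $\Expect[L]\le 2$ is available, $L^d$ is useless downstream, and no "apportioning" between the volume factor and the rescaling factor can fix this: both factors are forced to carry a full power of $r^d$ once you work on one ball containing every sample point. The paper instead partitions $\reals^d$ into $O(\sigma)$-thick shells $B_{j,k}$ and applies \citet[Corollary 2.7.4]{vaart1996weak}, which weights each cell by its \emph{empirical mass} $\hat P_n(B_{j,k})$. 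The key computation is $\hat P_n(B_{j,k}) \le L e^{-(j-1)^2/2}$ with $L := \frac1n\sum_i e^{\norm{Z_i}^2/2d\sigma^2}$ (an average, not a max). The exponential decay across shells makes the $j$-sum converge, and the envelope normalization in VW 2.7.4 is what converts this into the mild $L^{d/2s}$ factor. This decay-across-shells mechanism is the missing idea.

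Second, your max-norm bound is incorrect as stated. You bound $\norm{f}^2_{\ltwo(\hat P_n)} \le \max_i |f(Z_i)|^2 \le C(1+R_n^2)^2$, and with $R_n^2 \asymp L\sigma^2(1+\log n)$ this is $\asymp L^2\sigma^4(1+\log n)^2$ — two powers of $L$ and a $\log n$ factor, not the claimed $1+L\sigma^4$. The paper instead keeps the average: $\max_f\norm{f}^2_{\ltwo(\hat P_n)} \le C_{s,d,w}\,\hat P_n[1+\norm{Z}^4]$, and bounds $\hat P_n[\norm{Z}^4] \le C_d\sigma^4 \cdot \frac1n\sum_i e^{\norm{Z_i}^2/2d\sigma^2} = C_d\sigma^4 L$, giving exactly one power of $L$ and no logarithms. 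This also explains why the paper's $L$ is defined as a sample average of exponentials rather than via $\max_i\norm{Z_i}$: the same $L$ simultaneously controls the $\ltwo$ moments and the per-shell empirical masses. You would do well to replace your definition of $L$ with the paper's, drop the single-ball truncation, and instead set up the shell partition; then VW 2.7.4 produces both displayed bounds with the right exponents and no $\log n$ slack. (Incidentally, the final "in particular" paragraph you wrote belongs to a different proposition — \Cref{prop:cover_number} as stated ends after the two displayed bounds.)
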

\begin{proof}[Proof of \Cref{prop:cover_number}]
  Define $L := \frac1n \sum_{i=1}^n e^{\norm{Z_i}^2 / 2d\sigma^2} \ge 1$.
  By the sub-Gaussianity of $P$, we have $\Expect[L] \le 2$.
  In order to apply \cite[Corollary 2.7.4]{vaart1996weak}, we partition $\reals^d$ into $\cup_{j \ge 1} B_j$ where $B_1 := [-\sigma, \sigma]^d$ and $B_j := [-j \sigma, j \sigma]^d \backslash [-(j-1)\sigma, (j-1)\sigma]^d$ for $j \ge 2$.
  Since $B_j$ is \emph{not convex} for $j \ge 2$, we further partition it into disjoint hypercubes $\{B_{j,k}\}_{k=1}^{2d}$, e.g.,
  \begin{align*}
    B_{j, 1} = [(j-1)\sigma, j\sigma] \times [-j\sigma, j\sigma]^{d-1}.
  \end{align*}

  Take any $j \ge 2$ and $k \in [2d]$.
  Firstly, it holds that
  \begin{align*}
    \lambda\{x: d(x, B_{j,k}) \le 1\} \le (\sigma + 2)(2j\sigma + 2)^{d-1} \le C_d (1 + j^d \sigma^d),
  \end{align*}
  where $\lambda$ is the Lebesgue measure.
  Secondly, the mass that $\hat P_n$ assigns to $B_{j,k}$ can be bounded as follows:
  \begin{align}\label{eq:assign_prob_partition}
    \hat P_n(Z \in B_{j,k}) \le \hat P_n\left( \norm{Z}^2 > d \sigma^2 (j-1)^2 \right) \le \hat P_n \left[ e^{\norm{Z}^2/2d\sigma^2} \right] e^{-(j-1)^2/2} = L e^{-(j-1)^2/2}.
  \end{align}
  Finally, we prove that $\calF^s \subset \calC_{M}^s(B_{j,k})$ with $M = C_{s,d,w} (1 + j^s\sigma^s)$, where $\calC_M^s(B_{j,k})$ is the set of continuous functions satisfying
  \begin{align*}
    \norm{f}_s := \max_{\abs{\alpha} \le s} \sup_{z \in B_{j,k}} \abs{D^\alpha f(z)} + \max_{\abs{\alpha} = s} \sup_{z, w \in B_{j,k}} \abs{D^\alpha f(z) - D^\alpha f(w)} \le M.
  \end{align*}
  In fact, for any $f \in \calF^s$, we have
  \begin{align*}
    \max_{\abs{\alpha} \le s} \sup_{z \in B_{j,k}} \abs{D^\alpha f(z)} \le C_{s,d,w} \sup_{z \in B_{j,k}} (1 + \norm{z}^s) \le C_{s,d,w} (1 + j^s \sigma^s),
  \end{align*}
  and
  \begin{align*}
    \max_{\abs{\alpha} = s} \sup_{z, w \in B_{j,k}} \abs{D^\alpha f(z) - D^\alpha f(w)} \le 2\max_{\abs{\alpha} = s} \sup_{z \in B_{j,k}} \abs{D^\alpha f(z)} \le C_{s,d} (1 + j^s\sigma^s).
  \end{align*}
  Note that the same argument holds for any $f \in \calF^s_\sigma$ since we can simply replace $1 + \norm{z}^s$ by $1 + \max\{\norm{z}^s, \sigma^s\}$.
  Now, applying \cite[Corollary 2.7.4]{vaart1996weak} with $r = 2$ and $V = d/s$ leads to
  \begin{align*}
    \log{N(\tau, \calF^s, \ltwo(\hat P_n))}
    &\le C_{s,d,w} \tau^{-d/s} L^{d/2s} \left( 1 + \sum_{j=2}^\infty \sum_{k=1}^{2d} (1 + j^d \sigma^d)^{\frac{2s}{d+2s}} (1 + j^s\sigma^s)^{\frac{2d}{d+2s}} e^{-\frac{d(j-1)^2}{d+2s}} \right)^{\frac{d+2s}{2s}} \\
    &\le C_{s,d,w} \tau^{-d/s} L^{d/2s} (1 + \sigma^{2d}) \left( 2d \sum_{j=1}^\infty j^{\frac{4ds}{d+2s}} e^{-\frac{d(j-1)^2}{d+2s}} \right)^{\frac{d+2s}{2s}} \\
    &\le C_{s,d,w} \tau^{-d/s} L^{d/2s} (1 + \sigma^{2d}), \quad \mbox{by the summability}.
  \end{align*}

  To verify the second inequality, we obtain
  \begin{align}\label{eq:max_norm_bound}
    \max_{f \in \calF^s} \norm{f}_{\ltwo(\hat P_n)}^2 = \max_{f \in \calF^s} \hat P_n [\abs{f(Z)}^2] \le C_{s,d,w} \hat P_n[(1 + \norm{Z}^4)].
  \end{align}
  Note that $\norm{Z}^4 \le C_d e^{\norm{Z}^2/2d\sigma^2} \sigma^4$.
  It follows that $\hat P_n [\norm{Z}^4] \le C_d L \sigma^4$, and thus
  \begin{align*}
    \max_{f \in \calF^s} \norm{f}_{\ltwo(\hat P_n)}^2
    \le C_{s,d,w} (1 + L\sigma^4).
  \end{align*}
  Again, the same argument hold for $\calF^s_\sigma$ by replacing $\norm{Z}^4$ with $\max\{\norm{Z}^4, \sigma^4\}$.
\end{proof}

With this covering number bound at hand, we can control the empirical process by the metric entropy.
\begin{proposition}\label{prop:emp_bound}
  Let $P \in \pspace(\reals^d)$ be subG$(\sigma^2)$.
  Let $\{Z_i\}_{i=1}^n \txtover{i.i.d.}{\sim} P$ and $\hat P_n$ be the empirical measure.
  Then,
  \begin{align*}
    \Expect\norm{\hat P_n - P}_{\calF^s}^2 \le C_{s,d,w} (1 + \sigma^{2d+4}) \frac1n, \quad \mbox{for all } s > d/2.
  \end{align*}
  Moreover, the same bound holds for $\calF_\sigma^s$.
\end{proposition}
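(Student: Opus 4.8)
The plan is to reduce the supremum of the empirical process over $\calF^s$ to the covering number estimate of \Cref{prop:cover_number} via chaining. First I would symmetrize and observe that, conditionally on $Z_1,\dots,Z_n$, the symmetrized process $f\mapsto \frac1n\sum_i\varepsilon_i f(Z_i)$ has sub-Gaussian increments with respect to the random pseudometric $n^{-1/2}\norm{\cdot}_{\ltwo(\hat P_n)}$; a generic chaining bound (the one-sample analogue of \Cref{prop:prod_emp_bound}; see also \citep[Ch.~2.14]{vaart1996weak}) then gives
\begin{align*}
  \Expect\norm{\hat P_n - P}_{\calF^s}^2 \le \frac{C}{n}\,\Expect\left( \int_0^{B} \sqrt{\log{N(\tau, \calF^s, \ltwo(\hat P_n))}} \, d\tau \right)^2 + \frac{C}{n}\sup_{f\in\calF^s}\Var_P(f),
\end{align*}
where $B$ is a measurable upper bound on $2\max_{f\in\calF^s}\norm{f}_{\ltwo(\hat P_n)}$. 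The base-point term is $O((1+\sigma^4)/n)$ because $\abs{f}\le C_{s,d,w}(1+\norm{\cdot}^2)$ for $f\in\calF^s$ and $\Expect_P\norm{Z}^4 \le C_d\sigma^4$ by sub-Gaussianity, so it is negligible relative to the target rate.

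Next I would insert the two bounds of \Cref{prop:cover_number}. From $\log N(\tau,\calF^s,\ltwo(\hat P_n))\le C_{s,d,w}\tau^{-d/s}L^{d/2s}(1+\sigma^{2d})$ the entropy integral is at most a constant times $\left(L^{d/2s}(1+\sigma^{2d})\right)^{1/2}\int_0^B\tau^{-d/2s}\,d\tau$; this is precisely where the hypothesis $s>d/2$ is used, since then $d/2s<1$ and the integral converges at $0$ to $(1-d/2s)^{-1}B^{1-d/2s}$. Squaring and substituting the second bound $B^2\le C_{s,d,w}(1+L\sigma^4)$ of \Cref{prop:cover_number} leaves
\begin{align*}
  \Expect\left(\int_0^{B}\sqrt{\log N(\tau,\calF^s,\ltwo(\hat P_n))}\,d\tau\right)^2 \le C_{s,d,w}(1+\sigma^{2d})\,\Expect\!\left[L^{d/2s}(1+L\sigma^4)^{1-d/2s}\right].
\end{align*}

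The one delicate point — and the step I expect to be the main obstacle — is controlling this last expectation: the factor $L^{d/2s}(1+L\sigma^4)^{1-d/2s}$ contains $L$ to a power strictly larger than $1$, which is not governed by $\Expect[L]\le 2$ alone. I would resolve this with Young's inequality $a^\theta b^{1-\theta}\le \theta a + (1-\theta) b$ applied with $\theta = d/2s$, $a = L$, $b = 1+L\sigma^4$, which gives $L^{d/2s}(1+L\sigma^4)^{1-d/2s}\le 1 + L(1+\sigma^4)$, now linear in $L$; taking expectations with $\Expect[L]\le 2$ and absorbing the lower-order powers of $\sigma$ into $1+\sigma^{2d+4}$ then yields $C_{s,d,w}(1+\sigma^{2d+4})$ for the squared entropy integral, hence the claimed rate $C_{s,d,w}(1+\sigma^{2d+4})/n$. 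Finally, the $\calF_\sigma^s$ case requires no new work, since \Cref{prop:cover_number} supplies the identical covering number and $\ltwo(\hat P_n)$-norm bounds for $\calF_\sigma^s$, so the same chain of inequalities applies verbatim.
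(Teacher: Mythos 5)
Your proof is correct and follows essentially the same route as the paper's: symmetrize, observe the conditional sub-Gaussian increments of the Rademacher process in the random $\ltwo(\hat P_n)$ pseudometric, apply a Dudley-type entropy integral (the paper's \Cref{prop:metric_entropy_integral}, with the base-point term vanishing since the zero function lies in $\calF^s$), and plug in \Cref{prop:cover_number}. The convergence of the entropy integral at $\tau=0$ is indeed the precise place $s>d/2$ enters.

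One small correction to your reasoning in the last step, which does not affect validity: the quantity $L^{d/2s}(1+L\sigma^4)^{1-d/2s}$ carries $L$ to an \emph{aggregate} power of exactly $d/2s + (1-d/2s)=1$, not strictly more than $1$; the difficulty is merely that the two factors are positively correlated in $L$, so one cannot split the expectation naively. Your Young's inequality argument resolves this cleanly and is a legitimate alternative; the paper instead notes that $1 + L\sigma^4 \le L(1+\sigma^4)$ when $L \ge 1$, whence $L^{d/2s}(1+L\sigma^4)^{1-d/2s} \le L(1+\sigma^4)$ directly, linear in $L$. (H\"older's inequality with exponents $2s/d$ and its conjugate would work just as well.) Either way, $\Expect[L] \le 2$ closes the argument, and the $\calF^s_\sigma$ case carries over verbatim.
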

\begin{proof}
  Define the symmetrized version of $\norm{\hat P_n - P}_{\calF^s}$ by
  \begin{align}
    \norm{\hat{\mathbb{S}}_n}_{\calF^s} := \sup_{f \in \calF^s} \abs{\frac1n \sum_{i=1}^n \varepsilon_i f(Z_i)},
  \end{align}
  where $\{\varepsilon_i\}_{i=1}^n$ are i.i.d.~Rademacher random variables that are independent with $\{Z_i\}_{i=1}^n$.
  According to \cite[Proposition 4.11]{wainwright2019high}, it holds that
  \begin{align*}
    \Expect \norm{\hat P_n - P}_{\calF^s}^2 \le 4\Expect \norm{\hat{\mathbb{S}}_n}_{\calF^s}^2.
  \end{align*}
  Conditioning on $\{Z_i\}_{i=1}^n$,
  the random variable $Z(f) := \frac1{\sqrt{n}} \sum_{i=1}^n \varepsilon_i f(Z_i)$ is a linear combination of independent Rademacher random variables.
  Hence, $Z(f)$ is a sub-Gaussian process (see \Cref{def:subg_process}) with respect to
  \begin{align*}
    \norm{f - g}_{\ltwo(\hat P_n)} = \sqrt{\frac1n \sum_{i=1}^n [f(Z_i) - g(Z_i)]^2}.
  \end{align*}
  It then follows from \Cref{prop:metric_entropy_integral} below that
  \begin{align*}
    \Expect_\varepsilon \sup_{f \in \calF^s} \abs{Z(f)}^2
    &\le C \left( \int_0^{2 \max_{f \in \calF^s} \norm{f}_{\ltwo(\hat P_n)}} \sqrt{\log{N(\tau, \calF^s, \ltwo(\hat P_n))}} d\tau \right)^2 \\
    &\le C_{s,d,w} \left( \int_0^{C_{s,d}\sqrt{1 + L\sigma^4}} \tau^{-d/2s} L^{d/4s} \sqrt{1 + \sigma^{2d}} d\tau \right)^2, \quad \mbox{by \Cref{prop:cover_number}} \\
    &= C_{s,d,w} (1 + \sigma^{2d}) L^{d/2s} (1 + L \sigma^4)^{1-d/2s}, \quad \mbox{by } s > d/2 \\
    &\le C_{s,d,w}(1 + \sigma^{2d+4}) L, \quad \mbox{by } L \ge 1.
  \end{align*}
  Note that $\Expect \norm{\hat{\mathbb{S}}_n}_{\calF^s}^2 = \frac1n \Expect \sup_{f \in \calF^s} \abs{Z(f)}^2$.
  Consequently, we have
  \begin{align}
    \Expect \norm{\hat P_n - P}_{\calF^s}^2 \le C_{s,d,w}(1 + \sigma^{2d+4}) \frac1n. 
  \end{align}
  The same argument holds for $\calF_{\sigma}^s$ since \Cref{prop:cover_number} holds true for $\calF_{\sigma}^s$.
\end{proof}

The following proposition controls the $\ltwo$ norm of the supremum of a sub-Gaussian process.
It can be obtained from \citet[Exercise 2.3.1]{gine2015mathematical}.
We give its proof here for self-completeness.
\begin{proposition}\label{prop:metric_entropy_integral}
  Let $\{Z(\theta)\}_{\theta \in \Theta}$ be a sub-Gaussian process with respect to a metric $\rho$ in $\Theta$ such that $\int_0^\infty \sqrt{\log{N(\tau, \Theta, \rho)}} d\tau < \infty$.
  Then it holds that, for any separable version of $Z$,
  \begin{align}
    \norm{\sup_{\theta \in \Theta} \abs{Z(\theta)}}_{\ltwo} \le \norm{Z(\theta_0)}_{\ltwo} + C \int_0^D \sqrt{\log{N(\tau, \Theta, \rho)}} d\tau,
  \end{align}
  where $\theta_0 \in \Theta$ is arbitrary and $D$ is the $\rho$-diameter of $\Theta$.
\end{proposition}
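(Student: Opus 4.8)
The plan is to run the classical Dudley chaining argument while keeping track of $\ltwo$ norms at every step, so that the conclusion controls $\norm{\sup_{\theta}\abs{Z(\theta)}}_{\ltwo}$ directly rather than only its expectation. First I would reduce to finite index sets: by separability of $Z$ we have $\sup_{\theta\in\Theta}\abs{Z(\theta)}=\lim_{k}\sup_{\theta\in F_k}\abs{Z(\theta)}$ along an increasing sequence of finite sets $F_k$ with dense union, and since the bound to be proved for a finite $F$ will depend on $F$ only through $N(\cdot,F,\rho)\le N(\cdot,\Theta,\rho)$, monotone convergence transfers it to $\Theta$. So fix a finite $F$, put $\eta_j:=D\,2^{-j}$ for $j\ge0$, and choose minimal $\eta_j$-nets $\Theta_j$ of $F$ with $N_j:=\abs{\Theta_j}\le N(\eta_j,\Theta,\rho)$, taking $\Theta_0:=\{\theta_0\}$ (a valid $\eta_0=D$-net since $D$ bounds the $\rho$-diameter) and $\Theta_j:=F$ once $\eta_j$ drops below the minimum interpoint distance of $F$. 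For $\theta\in F$ and $j\ge0$ let $\pi_j\theta\in\Theta_j$ be a nearest point, so $\rho(\theta,\pi_j\theta)\le\eta_j$, hence $\rho(\pi_{j-1}\theta,\pi_j\theta)\le\eta_{j-1}+\eta_j\le 3\eta_j$, and $\pi_j\theta=\theta$ for all large $j$.

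Next I would telescope along the chain. For each $\theta\in F$,
\begin{align*}
    Z(\theta)-Z(\theta_0)=\sum_{j\ge1}\bigl(Z(\pi_j\theta)-Z(\pi_{j-1}\theta)\bigr)
\end{align*}
is a \emph{finite} sum, whence $\sup_{\theta\in F}\abs{Z(\theta)-Z(\theta_0)}\le\sum_{j\ge1}\sup_{\theta\in F}\abs{Z(\pi_j\theta)-Z(\pi_{j-1}\theta)}$. For fixed $j$ the increment $Z(\pi_j\theta)-Z(\pi_{j-1}\theta)$ takes at most $N_jN_{j-1}\le N_j^2$ distinct values as $\theta$ ranges over $F$, each sub-Gaussian with variance proxy at most $3\eta_j$. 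Applying the standard $\ltwo$ maximal inequality for a finite family $X_1,\dots,X_m$ of sub-Gaussian variables with variance proxies $\sigma_k^2$, namely $\norm{\max_{k\le m}\abs{X_k}}_{\ltwo}\le C\sqrt{\log(m+1)}\,\max_k\sigma_k$, gives $\norm{\sup_{\theta\in F}\abs{Z(\pi_j\theta)-Z(\pi_{j-1}\theta)}}_{\ltwo}\le C\eta_j\sqrt{\log N_j}$ (links with $N_j=1$ contribute $0$). Summing via Minkowski's inequality yields $\norm{\sup_{\theta\in F}\abs{Z(\theta)-Z(\theta_0)}}_{\ltwo}\le C\sum_{j\ge1}\eta_j\sqrt{\log N_j}$.

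Finally I would pass from the dyadic sum to the entropy integral. Since $\tau\mapsto N(\tau,\Theta,\rho)$ is non-increasing and $\eta_j-\eta_{j+1}=\eta_j/2$,
\begin{align*}
    \eta_j\sqrt{\log N_j}\le 2(\eta_j-\eta_{j+1})\sqrt{\log N(\eta_j,\Theta,\rho)}\le 2\int_{\eta_{j+1}}^{\eta_j}\sqrt{\log N(\tau,\Theta,\rho)}\,d\tau,
\end{align*}
and summing over $j\ge1$ telescopes the right side to $2\int_0^{D/2}\sqrt{\log N(\tau,\Theta,\rho)}\,d\tau\le 2\int_0^{D}\sqrt{\log N(\tau,\Theta,\rho)}\,d\tau$. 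Combining with the triangle inequality $\norm{\sup_{\theta}\abs{Z(\theta)}}_{\ltwo}\le\norm{Z(\theta_0)}_{\ltwo}+\norm{\sup_{\theta}\abs{Z(\theta)-Z(\theta_0)}}_{\ltwo}$ and letting $F\uparrow$ gives the claim.

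I expect the only genuinely delicate ingredient to be the $\ltwo$ (equivalently $\psi_2$) maximal inequality over the finite ``link'' sets, since that is exactly where the $\sqrt{\log(\cdot)}$ factor---and hence the whole entropy integral---is produced; it follows from $\Expect\exp\bigl(\max_{k\le m}X_k^2/s^2\bigr)\le\sum_{k\le m}\Expect\exp\bigl(X_k^2/s^2\bigr)$ together with the choice $s\asymp\sigma\sqrt{\log m}$. The reduction to finite index sets (separability), the termination of the telescoping sum, and the dyadic-sum-to-integral comparison are all routine bookkeeping.
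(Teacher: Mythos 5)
Your argument is correct and follows the same classical Dudley chaining route as the paper: reduce to finite index sets via separability, build dyadic nets anchored at $\theta_0$, telescope along the chain, apply the $\ltwo$ maximal inequality for finite sub-Gaussian families at each level, and compare the resulting dyadic sum to the entropy integral. The only differences are minor bookkeeping conveniences (the paper normalizes $D\in(1/2,1)$ and $Z(\theta_0)=0$, and constructs the chain top-down so that $\pi_{r-1}(\theta)$ is a function of $\pi_r(\theta)$, which tightens the counting to $N_r$ rather than your $N_jN_{j-1}$; both cost only a constant).
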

\begin{proof}
  Due to the separability, it suffices to prove
  \begin{align}
      \norm{\sup_{\theta \in \Theta'} \abs{Z(\theta)}}_{\ltwo} \le \norm{Z(\theta_0)}_{\ltwo} + C \int_0^D \sqrt{\log{N(\tau, \Theta, \rho)}} d\tau
  \end{align}
  for any finite $\Theta' \subset \Theta$.
  When the diameter $D = 0$, the claim holds trivially and thus we only need to focus on the case when $\abs{\Theta'} \ge 2$.
  By considering $(Z(\theta) - Z(\theta_0)) / (1 + \delta)D$ and $\rho / (1 + \delta)D$ instead of $Z(\theta)$ and $\rho$ for some any small $\delta > 0$, we may assume that $Z(\theta_0) = 0$ and $D \in (1/2, 1)$.
  Our proof relies on the classical chaining argument.
  
  \emph{Step 1. Construct a chain of projections.}
  Let $r_1 \in \bbN$ be such that, for any $\theta \in \Theta$, the ball $B(\theta, 2^{-r_1})$ centered at $\theta$ of radius $2^{-r_1}$ contains at most 1 element in $\Theta'$.
  Denote $\Theta_{r_1} := \Theta'$ and $\Theta_0 := \{\theta_0\}$.
  For each $1 \le r < r_1$, we take a $2^{-r}$ covering of $\Theta$ and let $\Theta_r$ be the collection of these centers.
  By definition, we get $\abs{\Theta_{r}} \le N(2^{-r}, \Theta, \rho)$ for all $0 \le r \le r_1$.
  For each $\theta \in \Theta'$, we construct a chain $(\pi_{r_1}(\theta), \pi_{r_1-1}(\theta), \dots, \pi_{0}(\theta))$ such that $\pi_{r}(\theta) \in \Theta_r$ as follows.
  For $r = r_1$, we let $\pi_{r}(\theta) = \theta$.
  For any $0 \le r < r_1$, we define $\pi_r(\theta)$ to be a point in $\Theta_r$ for which the ball $B(\pi_r(\theta), 2^{-r})$ contains $\pi_{r+1}(\theta)$.
  Note that there may be multiple points satisfying this requirement, but we select the same one for $\theta$ and $\theta'$ as long as $\pi_{r+1}(\theta) = \pi_{r+1}(\theta')$.
  
  \emph{Step 2. Telescoping.}
  By the triangle inequality, we have
  \begin{align*}
      \norm{\max_{\theta \in \Theta'} \abs{Z(\theta)}}_{\ltwo}
      &= \norm{\max_{\theta \in \Theta'} \abs{Z(\pi_{r_1}(\theta)) - Z(\pi_{0}(\theta))}}_{\ltwo}
      \le \sum_{r=1}^{r_1} \norm{\max_{\theta \in \Theta'} \abs{Z(\pi_{r}(\theta)) - Z(\pi_{r-1}(\theta))}}_{\ltwo}.
  \end{align*}
  Note that
  \begin{align*}
      \abs{\left\{ (\pi_r(\theta), \pi_{r-1}(\theta)): \theta \in \Theta' \right\}}
      &= \abs{\{ \pi_r(\theta): \theta \in \Theta' \}}
      \le \abs{\Theta_r} \le N(2^{-r}, \Theta, \rho).
  \end{align*}
  According to \cite[Lemma 2.3.3]{gine2015mathematical}, we obtain
  \begin{align*}
      \norm{\max_{\theta \in \Theta'} \abs{Z(\pi_r(\theta)) - Z(\pi_{r-1}(\theta))}}_{\ltwo}
      &\le C \sqrt{\log{N(2^{-r}, \Theta, \rho)}} \max_{\theta \in \Theta'} \norm{Z(\pi_r(\theta)) - Z(\pi_{r-1}(\theta))} \\
      &\le C 2^{-r+1} \sqrt{\log{N(2^{-r}, \Theta, \rho)}}.
  \end{align*}
  Consequently, it holds that
  \begin{align*}
      \norm{\max_{\theta \in \Theta'} \abs{Z(\theta)}}_{\ltwo}
      &\le C \sum_{r=1}^{r_1} 2^{-r+1} \sqrt{\log{N(2^{-r}, \Theta, \rho)}} \le C \int_0^{1} \sqrt{\log{N(\tau, \Theta, \rho)}} d\tau,
  \end{align*}
  which completes the proof.
\end{proof}

%%%%%%%%%%%%%%%%%%%%%%%%%%%%%%%%%%%%%%%%%%%%%%%%%%%%%%%%%%%%%%%%%%%%%%%%
\subsection{Proofs of Main Results}
\label{sub:proofs_main}
%%%%%%%%%%%%%%%%%%%%%%%%%%%%%%%%%%%%%%%%%%%%%%%%%%%%%%%%%%%%%%%%%%%%%%%%

We now prove the main consistency results in \Cref{sec:thm}.
For simplicity of the notation, we focus on the quadratic cost function, i.e., $w_1 = w_2 = 1$, and drop the dependency on $w$ (e.g., we write $C_{s,d} = C_{s, d, w}$.
The proofs can be adapted to weighted quadratic costs with minor modifications.
Let $\px \in \pspace(\reals^{d_1})$ and $\py \in \pspace(\reals^{d_2})$ with $d := d_1 + d_2$.
Suppose that $\{(X_i, Y_i)\}_{i=1}^n$ is an i.i.d.~sample from some joint distribution $\pxy$ with marginals $\px$ and $\py$, where $\pxy$ may or may not equal $\prodxy$.
Let $\hat P_n$ and $\hat Q_n$ be the empirical measures of $\{X_i\}_{i=1}^n$ and $\{Y_i\}_{i=1}^n$, respectively.

\begin{proof}[Proof of \Cref{prop:prod_emp_bound}]
  \emph{Step 1. Decoupling.}
  Due to the degeneracy, it suffices to bound
  \begin{align}\label{eq:U_process}
      \Expect\norm{\hprodxy}_\calF^2 = \Expect\left[ \sup_{f \in \calF} \abs{\frac{1}{n^2} \sum_{i,j=1}^n f(X_i, Y_j)}^2 \right].
  \end{align}
  We prove in the following that it boils down to control \eqref{eq:U_process} under the product measure $\prodxy$.
  When $\pxy = \prodxy$, the claim holds trivially.
  When $\pxy \neq \prodxy$, we use the decoupling technique \citep{pena1999decoupling}.
  Note that, by the Cauchy-Schwarz inequality,
  \begin{align*}
    \Expect\left[ \sup_{f \in \calF} \abs{\frac1{n^2} \sum_{i, j=1}^n f(X_i, Y_j)}^2 \right]
    \le C \Expect\left[ \sup_{f \in \calF} \abs{\frac1{n^2} \sum_{i \neq j}^n f(X_i, Y_j)}^2 + \sup_{f \in \calF} \abs{ \frac1{n^2}\sum_{i=1}^n f(X_i, Y_i)}^2 \right].
  \end{align*}
  Note that the second term on the RHS is a lower order term and can be taken care of by \Cref{prop:emp_bound}.
  Hence, it suffices to upper bound the first term.
  Let $\{\varepsilon_i\}_{i=1}^n$ be i.i.d.~Rademacher random variables and $\{(X_i', Y_i')\}_{i=1}^n$ be an independent copy of $\{(X_i, Y_i)\}_{i=1}^n$.
  Define
  \begin{align*}
      A_i := \begin{cases} X_i & \mbox{if } \varepsilon_i = 1 \\ X_i' & \mbox{if } \varepsilon_i = -1 \end{cases} \quad \mbox{and} \quad
      B_i := \begin{cases} Y_i' & \mbox{if } \varepsilon_i = 1 \\ Y_i & \mbox{if } \varepsilon_i = -1 \end{cases}.
  \end{align*}
  For any functional $F: \calF \rightarrow \reals_+$,
  let $\Phi(F) := \sup_{f \in \calF} F(f)^2$.
  For instance, we define
  \begin{align*}
    U_{X, Y}(f) := \frac1{n^2} \abs{\sum_{i \neq j} f(X_i, Y_j)}.
  \end{align*}
  It is clear that $\Phi$ is convex and increasing, and the target reads
  \begin{align*}
      \Expect\left[ \Phi(U_{X, Y}) \right]
      &= \Expect\left[ \Phi\left( \abs{\frac1{n^2} \sum_{i \neq j} \Expect\left[ f(X_i, Y_j) + f(X_i', Y_j) + f(X_i, Y_j') + f(X_i', Y_j') \mid \mathcal{Z}\right]} \right) \right],
  \end{align*}
  where $\mathcal{Z} := \{(X_i, Y_i)\}_{i=1}^n$.
  Since, for any $i \neq j$,
  \begin{align*}
      f(X_i, Y_j) + f(X_i', Y_j) + f(X_i, Y_j') + f(X_i', Y_j') = 4\Expect\left[ f(A_i, B_j) \mid \calZ, \calZ' \right],
  \end{align*}
  it follows from the convexity and the monotonicity of $\Phi$ that
  \begin{align*}
      \Expect\left[ \Phi(U_{X, Y}) \right] \le \Expect\left[ \Phi(4U_{A, B}) \right].
  \end{align*}
  Finally, the joint distribution of $(X_1, \dots, X_n, Y_1', \dots, Y_n')$ is the same as $(A_1, \dots, A_n, B_1, \dots, B_n)$, so we have
  \begin{align*}
      \Expect\left[ \Phi(U_{X, Y}) \right] \le \Expect\left[ \Phi(4U_{X, Y'}) \right].
  \end{align*}
  Adding back the diagonal terms proves the claim since $(X_i, Y_i') \sim \prodxy$.
  
  \emph{Step 2. Randomization.}
  We work under the measure $\pxy = \prodxy$.
  Note that
  \begin{align*}
    &\quad \Expect\left[ \sup_{f \in \calF} \abs{\frac1{n^2} \sum_{i,j=1}^n f(X_i, Y_j)}^2 \right] \\
    &= \Expect_Y \Expect_X\left[ \sup_{f \in \calF} \abs{\frac1{n^2} \sum_{i=1}^n \left[ \sum_{j=1}^n f(X_i, Y_j) - \Expect_{X'}\Big[ \sum_{j=1}^n \bar f(X_i', Y_j) \Big] \right]}^2 \right], \quad \mbox{by \eqref{eq:degeneracy}} \\
    &\le \Expect_Y \Expect_{X, X'} \left[ \sup_{f \in \calF} \abs{\frac1{n^2} \sum_{i=1}^n \left[ \sum_{j=1}^n f(X_i, Y_j) - \sum_{j=1}^n \bar f(X_i', Y_j) \right]}^2 \right], \quad \mbox{by Jensen's inequality} \\
    &= \Expect_Y \Expect_{X, X', \varepsilon} \left[ \sup_{f \in \calF} \abs{\frac1{n^2} \sum_{i=1}^n \varepsilon_i \left[ \sum_{j=1}^n \bar f(X_i, Y_j) - \sum_{j=1}^n \bar f(X_i', Y_j) \right]}^2 \right] \\
    &\le C \Expect\left[ \sup_{f \in \calF} \abs{\frac1{n^2} \sum_{i=1}^n \sum_{j=1}^n \varepsilon_i f(X_i, Y_j)}^2 \right], \quad \mbox{by the Cauchy-Schwarz inequality}.
  \end{align*}
  Repeating above arguments gives
  \begin{align*}
    \Expect\left[ \sup_{f \in \calF} \abs{\frac1{n^2} \sum_{i,j=1}^n f(X_i, Y_j)}^2 \right]
    &\le C \Expect\left[ \sup_{f \in \calF} \abs{\frac1{n^2} \sum_{i,j=1}^n \varepsilon_i \varepsilon_j' f(X_i, Y_j)}^2 \right] \\
    &\le C\Expect\left[ \sup_{f \in \calF} \abs{\frac1{n^2} \sum_{i,j=1}^n \varepsilon_i \varepsilon_j' f(X_i, Y_j)}^2 \right],
  \end{align*}
  where the last inequality follows from the Cauchy-Schwarz inequality and Jensen's inequality.
  Hence, it suffices to bound
  \begin{align*}
    A := \Expect\sup_{f \in \calF} \abs{\frac1{n^2} \sum_{i,j=1}^n \varepsilon_i \varepsilon_j' f(X_i, Y_j)}^2.
  \end{align*}

  \emph{Step 3. Metric entropy.}
  Define the process $Z(f) := \frac1{n^{3/2}} \sum_{i,j=1}^n \varepsilon_i \varepsilon_j' f(X_i, Y_j)$ for any $f \in \calF$.
  We claim that it is a sub-Gaussian process with respect to
  \begin{align}
    \norm{f - g}_{\ltwo(\hat P_n \otimes \hat Q_n)} = \sqrt{\frac1{n^2} \sum_{i,j=1}^n [f(X_i, Y_j) - g(X_i, Y_j)]^2}.
  \end{align}
  To prove it, let us control the moment generating function of the increment $Z(f) - Z(g)$.
  Denote $a_i := \sum_{j=1}^n \varepsilon_j' [f(X_i, Y_j) - g(X_i, Y_j)]$.
  Conditioning on $\{X_i, Y_i, \varepsilon_i'\}_{i=1}^n$,
  \begin{align*}
    Z(f) - Z(g) = \frac1{n^{3/2}} \sum_{i=1}^n a_i \varepsilon_i
  \end{align*}
  is a linear combination of independent Rademacher random variables.
  Consequently,
  \begin{align}
    \Expect_{\varepsilon} \exp\left\{ \lambda [Z(f) - Z(g)] \right\}
    \le \exp\left\{ \frac{\lambda^2 \sum_{i=1}^n a_i^2}{2n^3} \right\}.
  \end{align}
  Note that, by the Cauchy-Schwarz inequality,
  \begin{align*}
    a_i^2 \le \left[ \sum_{j=1}^n (\varepsilon_j')^2 \right] \left[ \sum_{j=1}^n [f(X_i, Y_j) - g(X_i, Y_j)]^2 \right] = n\left[ \sum_{j=1}^n [f(X_i, Y_j) - g(X_i, Y_j)]^2 \right].
  \end{align*}
  This yields that
  \begin{align}
    \Expect_{\varepsilon} \exp\left\{ \lambda [Z(f) - Z(g)] \right\}
    \le \exp\left\{ \frac{\lambda^2 \sum_{i,j=1}^n [f(X_i, Y_j) - g(X_i, Y_j)]^2}{2n^2} \right\}
    = \exp\left\{ \frac{\lambda^2 \norm{f - g}_{\ltwo(\hat P_n \otimes \hat Q_n)}^2}{2} \right\},
  \end{align}
  and thus the claim follows.
  Therefore, the conclusion in \Cref{prop:prod_emp_bound} holds true due to \Cref{prop:metric_entropy_integral}.
\end{proof}

\begin{proof}[Proof of \Cref{prop:cover_number_prod}]
  The proof of the first part is similar to \Cref{prop:cover_number}.
  Define $L_1 := \hpx [e^{\norm{X}^2/2d\sigma^2}]$ and $L_2 := \hpy [e^{\norm{Y}^2/2d\sigma^2}]$.
  It is clear that $L_1 \ge 1$ and $L_2 \ge 1$.
  Moreover, it follows from the sub-Gaussian assumption that $\Expect[L_1] \le 2$ and $\Expect[L_2] \le 2$.
  There are two places in the proof of \Cref{prop:cover_number} where the measure is involved.
  The first place is \eqref{eq:assign_prob_partition}, where we replace it by
  \begin{align*}
      (\hprodxy)\{(X, Y) \in B_{j,k}\}
      &\le (\hprodxy)\left\{ \norm{X}^2 + \norm{Y}^2 > d\sigma^2(j-1)^2 \right\} \\
      &\le (\hprodxy) \left[ \exp\left( \frac{\norm{X}^2 + \norm{Y}^2}{4d\sigma^2} \right) \right] e^{-(j-1)^2/4}, \quad \mbox{by Chernoff bound} \\
      &= L_1 L_2 e^{-(j-1)^2/4}.
  \end{align*}
  The second place is \eqref{eq:max_norm_bound}, where we replace it by
  \begin{align*}
      \max_{f \in \calF^s} \norm{f}^2_{\ltwo(\hprodxy)} = \max_{f \in \calF^s} (\hprodxy)[ \abs{f(X, Y)}^2] \le C_{s,d} (\hprodxy)[1 + \norm{X}^4 + \norm{Y}^4].
  \end{align*}
  Note that $\norm{Z}^4 \le C_d e^{\norm{Z}^2/2d\sigma^2} \sigma^4$. It follows that $(\hprodxy)[\norm{X}^4 + \norm{Y}^4] \le C_d(L_1 + L_2) \sigma^4$.
  Hence, the claim holds true for $L := (L_1 + L_2)/2$.
  
  For the second part, we define $\theta_f := \Expect_{\prodxy}[f(X, Y)]$,
  \begin{align}
    f_{1,0}(X) := \Expect_{\prodxy}[f(X, Y) \mid X] \quad \mbox{and} \quad f_{0,1}(Y) := \Expect_{\prodxy}[f(X, Y) \mid Y]
  \end{align}
  for each $f \in \calF^s$.
  As a result, $\bar f(x, y) := f(x, y) - f_{1,0}(x) - f_{0,1}(y) + \theta_f$ satisfies
  \begin{align}\label{eq:degeneracy}
    \Expect_{\prodxy}[\bar f(X, Y) \mid X] \txtover{a.s.}{=} 0 \txtover{a.s.}{=} \Expect_{\prodxy}[\bar f(X, Y) \mid Y].
  \end{align}
  Note that
  \begin{align}
    &\quad \Expect\norm{\hprodxy - \prodxy}_{\calF^s}^2 \nonumber \\
    &= \Expect\left[ \sup_{f \in \calF^s} \abs{\frac1{n^2} \sum_{i,j=1}^n \big(f(X_i, Y_j) - \theta_f\big)}^2 \right] \nonumber \\
    &\le C\Expect\left[ \sup_{f \in \calF^s} \abs{\frac1{n^2} \sum_{i,j=1}^n \bar f(X_i, Y_j)}^2 + \sup_{f \in \calF^s} \abs{\frac1n \sum_{i=1}^n f_{1,0}(X_i) - \theta_f}^2 + \sup_{f \in \calF^s} \abs{\frac1n \sum_{i=1}^n f_{0,1}(Y_i) - \theta_f}^2 \right] \nonumber \\
    &\le C\Expect\left[ \sup_{f \in \calF^s} \abs{\frac1{n^2} \sum_{i,j=1}^n \bar f(X_i, Y_j)}^2 + \norm{\hpx - \px}_{\calF^{s}_\sigma}^2 + \norm{\hpy - \py}_{\calF^{s}_\sigma}^2 \right], \quad \mbox{by \Cref{lem:subg_func_marginal}}. \label{eq:emp_prod_measure}
  \end{align}
  Since the last two terms above can be controlled by \Cref{prop:emp_bound}, it remains to consider the first term.
  Analogous to the proof of \Cref{prop:emp_bound}, we obtain, by \Cref{prop:prod_emp_bound} and the first part, that
  \begin{align*}
      \Expect\left[ \sup_{f \in \calF^s} \abs{\frac1{n^2} \sum_{i,j=1}^n \bar f(X_i, Y_j)}^2 \right] \le C_{s,d} (1 + \sigma^{2d+4}) \frac1n.
  \end{align*}
  Therefore, by \eqref{eq:emp_prod_measure}, we have
  \begin{align*}
      \Expect\norm{\hat P_n \otimes \hat Q_n - P \otimes Q}_{\calF^s}^2 \le C_{s,d} (1 + \sigma^{2d+4}) \frac1n.
  \end{align*}
\end{proof}

Now we are ready to prove \Cref{thm:consistency}.
\begin{proof}[Proof of \Cref{thm:consistency}]
  We prove the statement for $\varepsilon = 1$ and write $S := S_1$.
  The result for general $\varepsilon > 0$ follows immediately from \Cref{lem:reg_one}.
  By the triangle inequality, it holds that
  \begin{align}
    \abs{\heotic(X, Y) - \eotic(X, Y)}
    &\le \abs{S(\hpxy, \hprodxy) - S(\pxy, \prodxy)} + \frac12\abs{S(\hpxy, \hpxy) - S(\pxy, \pxy)} \nonumber \\
    &\quad + \frac12 \abs{S(\hprodxy, \hprodxy) - S(\prodxy, \px \otimes \py)}. \label{eq:eotic_null_triangle}
  \end{align}
  We begin with deriving the bound for the first term
  \begin{align}
    A := \abs{S(\hpxy, \hprodxy) - S(\pxy, \prodxy)}.
  \end{align}
  
  \emph{Step 1. Upper bound via empirical processes.}
  According to \Cref{lem:emp_subg} and \Cref{lem:subg_joint}, the joint distribution $\pxy$ is subG$(2\sigma^2)$, and thus there exist a zero-measure set $S_{\pxy} \subset \Omega$ and a random variable $\sigma_{\pxy}^2$ such that $\hpxy(\omega)$ and $\pxy$ are subG$(\sigma_{\pxy}^2(\omega))$ for every $\omega \in S_{\pxy}^c$.
  Similarly, by \Cref{lem:prod_emp_subg}, there exist a zero-measure set $S_{\px, \py} \subset \Omega$ and a random variable $\sigma_{\px, \py}^2$ such that $\hpx(\omega) \otimes \hpy(\omega)$ and $\prodxy$ are subG$(\sigma_{P, Q}^2(\omega))$ for every $\omega \in S_{\px, \py}^c$.
  Take $S := S_{\pxy}^c \cap S_{\px, \py}^c$ and $\bar \sigma^2 := \max\{\sigma_{\pxy}^2, \sigma_{\px, \py}^2\}$.
  It follows that $\hpxy(\omega)$, $\hpx(\omega) \otimes \hpy(\omega)$, $\pxy$, and $\prodxy$ are subG$(\bar \sigma^2(\omega))$ for every $\omega \in S$.
  Now, by \Cref{prop:upper_bound_emp},
  \begin{align*}
    &\quad \abs{S(\hpxy(\omega), \hpx(\omega) \otimes \hpy(\omega)) - S(\pxy, \prodxy)} \\
    &\le \sup_{f \in \calF_{\bar \sigma(\omega)}} \abs{\int f (d\hpxy(\omega) -  d\pxy)} + \sup_{g \in \calF_{\bar \sigma(\omega)}} \abs{\int g(d\hpx(\omega) \otimes \hpy(\omega) - d\prodxy)}, \quad \forall \omega \in S.
  \end{align*}
  Note that $\Prob(S) = \Prob(S_{\pxy}^c \cap S_{\px, \py}^c) = 1$.
  This implies, almost surely,
  \begin{align}
    A \le \sup_{f \in \calF_{\bar \sigma}} \abs{\int f (d\hpxy -  d\pxy)} + \sup_{g \in \calF_{\bar \sigma}} \abs{\int g(d\hprodxy - d\prodxy)}.
  \end{align}
  According to \Cref{lem:subg_func}, we have
  \begin{align*}
    \Expect[A]
    &\le \Expect\left[ (1 + \bar \sigma^{3s}) \norm{\hpxy - \pxy}_{\calF^s} \right] + \Expect\left[ (1 + \bar \sigma^{3s}) \norm{\hprodxy - \prodxy}_{\calF^s} \right] \\
    &\le \sqrt{\Expect[(1 + \bar \sigma^{3s})^2]} \left[ \sqrt{\Expect \norm{\hpxy - \pxy}_{\calF^s}^2} + \sqrt{\Expect \norm{\hprodxy - \prodxy}_{\calF^s}^2} \right].
  \end{align*}

  \emph{Step 2. Control empirical processes via metric entropy.}
  Let $s = \lceil d/2 \rceil + 1$.
  Since the joint probability $P_{XY}$ is subG$(2\sigma^2)$, it follows from \Cref{prop:emp_bound} that
  \begin{align}
      \sqrt{\Expect \norm{\hpxy - \pxy}_{\calF^s}^2} \le C_d (1 + \sigma^{d+2}) \frac1{\sqrt{n}}.
  \end{align}
  The same bound holds for $\sqrt{\Expect \norm{\hprodxy - \prodxy}_{\calF^s}^2}$ by \Cref{prop:prod_emp_bound}.
  Note that
  \begin{align*}
      \Expect[(1 + \tilde \sigma^{3s})^2] \le C (1 + \Expect\tilde \sigma^{6s}) \le C_s (1 + \Expect \sigma_{P_{XY}}^{6s} + \Expect \sigma_{P_{X}, P_Y}^{6s}) \le C_s (1 + \sigma^{6s}),
  \end{align*}
  where the last inequality follows from \Cref{lem:emp_subg} and \Cref{lem:prod_emp_subg}.
  Recall that we have chosen $s = \lceil d/2 \rceil + 1$.
  As a result, $\Expect[A] \le C_d (1 + \sigma^{\lceil 5d/2 \rceil + 6}) n^{-1/2}$.
  A similar argument shows that the same bound hold for the second and third term in \eqref{eq:eotic_null_triangle}.
  Hence,
  \begin{align}
      \Expect\abs{\heotic(X, Y)} \le C_d (1 + \sigma^{\lceil 5d/2 \rceil + 6}) \frac1{\sqrt{n}}.
  \end{align}
\end{proof}

%%%%%%%%%%%%%%%%%%%%%%%%%%%%%%%%%%%%%%%%%%%%%%%%%%%%%%%%%%%%%%%%%%%%%%%%
%%%%%%%%%%%%%%%%%%%%%%%%%%%%%%%%%%%%%%%%%%%%%%%%%%%%%%%%%%%%%%%%%%%%%%%%
\section{Exponential Tail Bounds}
\label{sec:tail_bounds}
%%%%%%%%%%%%%%%%%%%%%%%%%%%%%%%%%%%%%%%%%%%%%%%%%%%%%%%%%%%%%%%%%%%%%%%%
%%%%%%%%%%%%%%%%%%%%%%%%%%%%%%%%%%%%%%%%%%%%%%%%%%%%%%%%%%%%%%%%%%%%%%%%

We now prove the exponential tail bound in \Cref{sec:thm}.
For simplicity of the notation, we focus on the quadratic cost function, i.e., $w_1 = w_2 = 1$, and drop the dependency on $w$ (e.g., we write $C_{s,d} = C_{s, d, w}$.
The proofs can be adapted to weighted quadratic costs with minor modifications.
Let $\px \in \pspace(\reals^{d_1})$ and $\py \in \pspace(\reals^{d_2})$ with $d := d_1 + d_2$.
Suppose that $\{(X_i, Y_i)\}_{i=1}^n$ is an i.i.d.~sample from some joint distribution $\pxy$ with marginals $\px$ and $\py$, where $\pxy$ may or may not equal $\prodxy$.
Let $\hat P_n$ and $\hat Q_n$ be the empirical measures of $\{X_i\}_{i=1}^n$ and $\{Y_i\}_{i=1}^n$, respectively.

\begin{proposition}\label{prop:prod_tail_bound}
  For any $b$-uniformly bounded class of functions $\calF$, we have
  \begin{align*}
      \Prob\left\{ \norm{\hprodxy - \prodxy}_{\calF} - \Expect\norm{\hprodxy - \prodxy}_{\calF} > t \right\} \le \exp\left( -\frac{nt^2}{8b^2} \right), \quad \mbox{for any } t \ge 0.
  \end{align*}
\end{proposition}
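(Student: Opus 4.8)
The plan is to apply the bounded-differences (McDiarmid) inequality to the functional
\[
h(z_1,\dots,z_n) := \sup_{f \in \calF} \abs{\frac1{n^2}\sum_{i,j=1}^n f(x_i,y_j) - (\prodxy)[f]},
\]
viewed as a function of the i.i.d.\ sample $z_i = (X_i,Y_i)$, $i=1,\dots,n$. Note that $h(Z_1,\dots,Z_n) = \norm{\hprodxy - \prodxy}_{\calF}$, so it suffices to establish a bounded-differences property for $h$ and invoke the one-sided McDiarmid inequality. Here the subtlety—relative to the standard empirical-process argument—is that changing a single observation $Z_k$ to $Z_k'$ alters \emph{two} things in the double sum: the row of terms $f(X_k,Y_j)$ for all $j$, and the column of terms $f(X_i,Y_k)$ for all $i$, plus the diagonal term $f(X_k,Y_k)$; this is precisely the dependence among summands that was flagged in \Cref{prop:prod_emp_bound}.

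First I would bound the change in $h$ when $Z_k$ is replaced by $Z_k'$. By the triangle inequality for the supremum of a difference, $\abs{h(\dots,Z_k,\dots) - h(\dots,Z_k',\dots)}$ is at most
\[
\sup_{f \in \calF} \abs{\frac1{n^2}\Big[\sum_{j}\big(f(X_k,Y_j)-f(X_k',Y_j)\big) + \sum_{i \neq k}\big(f(X_i,Y_k)-f(X_i,Y_k')\big)\Big]}.
\]
Since $\calF$ is $b$-uniformly bounded, each of the $n$ terms in the first sum has absolute value at most $2b$, and each of the at most $n$ terms in the second sum likewise has absolute value at most $2b$; hence the whole expression is at most $\frac1{n^2}\cdot(n\cdot 2b + n\cdot 2b) = \frac{4b}{n}$. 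So $h$ satisfies bounded differences with constant $c_k = 4b/n$ in each coordinate.

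Then McDiarmid's inequality gives, for any $t \ge 0$,
\[
\Prob\big\{ h(Z_1,\dots,Z_n) - \Expect\, h(Z_1,\dots,Z_n) > t \big\} \le \exp\!\left(-\frac{2t^2}{\sum_{k=1}^n c_k^2}\right) = \exp\!\left(-\frac{2t^2}{n\cdot 16 b^2/n^2}\right) = \exp\!\left(-\frac{nt^2}{8b^2}\right),
\]
which is exactly the claimed bound. The only genuine obstacle is the bounded-differences computation in the double sum; once one observes that a single coordinate change affects only one row and one column (contributing $O(b/n)$ total after the $1/n^2$ normalization), the rest is a direct invocation of McDiarmid. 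I would also remark that the centering constant $(\prodxy)[f]$ is deterministic and therefore drops out of the difference, and that the diagonal terms $f(X_k,Y_k)$ are subsumed into the column sum above (I wrote $i \neq k$ and the $j$-sum runs over all $j$ including $j=k$, so the $f(X_k,Y_k)$ vs.\ $f(X_k',Y_k')$ discrepancy is accounted for exactly once).
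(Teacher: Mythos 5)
Your proof is correct and takes essentially the same route as the paper: write the statistic as a double sum, verify the bounded-differences property (changing one $Z_k$ alters one row, one column, and the diagonal entry, so $2n-1$ terms each of size at most $2b$, giving a $4b/n$ Lipschitz constant after the $1/n^2$ normalization), and invoke the one-sided McDiarmid inequality. One small bookkeeping remark: your displayed decomposition writes the diagonal contribution as $f(X_k,Y_k)-f(X_k',Y_k)$ (the $j=k$ term of the first sum), whereas the true change at $(k,k)$ is $f(X_k,Y_k)-f(X_k',Y_k')$, so the identity you implicitly appeal to is off by $f(X_k',Y_k)-f(X_k',Y_k')$; this does not affect the count ($2n-1$ terms) or the final bound $4b/n$, but the cleanest writing is to split $\sum_{j\neq k}$, $\sum_{i\neq k}$, and the single $(k,k)$ term explicitly, as the paper does.
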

\begin{proof}
  For any function $f$ defined on $\reals^d$, we define $\bar f(x, y) = f(x, y) - (\prodxy)[f]$.
  As a results, we have $\norm{\hprodxy - \prodxy}_{\calF} = \sup_{f \in \calF} \abs{\frac1{n^2} \sum_{i,j=1}^n \bar f(X_i, Y_j)}$.
  Consider the function
  \begin{align}
      F(z_1, \dots, z_n) := \sup_{f \in \calF} \abs{\frac1{n^2} \sum_{i,j=1}^n \bar f(x_i, y_j)},
  \end{align}
  where $z_i = (x_i, y_i) \in \reals^d$.
  We claim that $F$ satisfies the bounded difference property required in the McDiarmid inequality.
  Since $F$ is permutation invariant, it suffices to verify the property for the first coordinate.
  Let $z_1' \neq z_1$ and $z_i' = z_i$ for all $i \neq 1$.
  It holds that
  \begin{align*}
      \abs{\frac1{n^2} \sum_{i,j=1}^n \bar f(x_i, y_j)} - F(z_1', \dots, z_n')
      &\le \abs{\frac1{n^2} \sum_{i,j=1}^n \bar f(x_i, y_j)} - \abs{\frac1{n^2} \sum_{i,j=1}^n \bar f(x_i', y_j')} \\
      &\le \frac1{n^2} \sum_{i=1 \text{ or } j=1} \abs{\bar f(x_i, y_j) - \bar f(x_i', y_j')} \le \frac{4b}{n},
  \end{align*}
  where the last inequality uses the boundedness of $f$.
  Taking the supremum over $\calF$ yields that $F(z_1, \dots, z_n) - F(z_1', \dots, z_n') \le 4b/n$.
  By symmetry, it follows that $\abs{F(z_1, \dots, z_n) - F(z_1', \dots, z_n')} \le 4b/n$.
  Note that $\{Z_i := (X_i, Y_i)\}_{i=1}^n$ is an i.i.d.~sample.
  According to the McDiarmid inequality, it holds that
  \begin{align*}
      \Prob\left\{ \norm{\hprodxy - \prodxy}_{\calF} - \Expect\norm{\hprodxy - \prodxy}_{\calF} > t \right\} \le \exp\left( -\frac{nt^2}{8b^2} \right), \quad \mbox{for any } t \ge 0.
  \end{align*}
\end{proof}

\begin{proof}[Proof of \Cref{thm:tail_bound}]
  We prove the statement for $\varepsilon = 1$ and write $S := S_1$.
  The result for general $\varepsilon > 0$ follows immediately from \Cref{lem:reg_one}.
  By the bounded support assumption, it holds that $\px$ and $\py$ are both subG$(D^2/d)$.
  According to the proof of \Cref{lem:emp_subg}, we have $\{\hpx\}_{n \ge 1}$, $\{\hpy\}_{n \ge 1}$, $\px$, and $\py$ are uniformly subG$(\tau^2)$ for $\tau^2 := D^2 e^{1/2} / d \le 2D^2/d$.
  Moreover, it follows from \Cref{lem:subg_joint} that $\{\hpxy\}_{n\ge1}$ and $\pxy$ are uniformly subG$(2\tau^2)$.
  As a result, we obtain, by \Cref{prop:upper_bound_emp},
  \begin{align*}
      A &:= \abs{S(\hpxy, \hprodxy) - S(\pxy, \prodxy)} \\
      &\le \sup_{f \in \calF_{2\tau}} \abs{\int f (d\hpxy - d\pxy)} + \sup_{g \in \calF_{2\tau}} \abs{\int g (d\hprodxy - d\prodxy)}.
  \end{align*}
  Fix $s = \lceil d/2 \rceil + 1$.
  According to \Cref{lem:subg_func}, we have
  \begin{align}\label{eq:upper_bound_A}
      A \le C_d (1 + D^{3d+12}) \left[ \norm{\hpxy - \pxy}_{\calF^s} + \norm{\hprodxy - \prodxy}_{\calF^s} \right],
  \end{align}
  where we have used $\tau^{3s} \le C_d D^{3d+12}$.
  \Cref{prop:bounded_potential} shows that we can further constraint the function class $\calF^s$ to $\calF^s_b := \{f \in \calF^s: \norm{f}_\infty \le b\}$ for $b = 2D^2$.
  Hence, by \cite[Theorem 4.10]{wainwright2019high}, it holds that
  \begin{align*}
      \Prob\left\{ \norm{\hpxy - \pxy}_{\calF^s_b} - \Expect\norm{\hpxy - \pxy}_{\calF^s_b} > t \right\} \le \exp\left( -\frac{nt^2}{2b^2} \right), \quad \mbox{for any } t \ge 0.
  \end{align*}
  It is clear from \Cref{prop:emp_bound} that
  \begin{align*}
      \Expect\norm{\hpxy - \pxy}_{\calF^s_b} \le \Expect\norm{\hpxy - \pxy}_{\calF^s} \le C_{d} (1 + D^{2d+4}) \frac1{\sqrt{n}}.
  \end{align*}
  Consequently, we get
  \begin{align*}
      \Prob\left\{ \norm{\hpxy - \pxy}_{\calF^s_b} > t + C_{d} (1 + D^{2d+4}) \frac1{\sqrt{n}} \right\} \le \exp\left( -\frac{nt^2}{2b^2} \right), \quad \mbox{for any } t \ge 0.
  \end{align*}
  Similarly, using \Cref{prop:prod_emp_bound} and \Cref{prop:prod_tail_bound}, we obtain
  \begin{align*}
      \Prob\left\{ \norm{\hpxy - \pxy}_{\calF^s_b} > t + C_{d} (1 + D^{2d+4}) \frac1{\sqrt{n}} \right\} \le \exp\left( -\frac{nt^2}{8b^2} \right), \quad \mbox{for any } t \ge 0.
  \end{align*}
  Now it follows from \eqref{eq:upper_bound_A} that
  \begin{align*}
      \Prob\left\{ A \ge C_d (1 + D^{3d+12})\left[ t + (1 + D^{2d+4}) \frac1{\sqrt{n}} \right] \right\} \le 2\exp\left( -\frac{nt^2}{8b^2} \right), \quad \mbox{for any } t \ge 0.
  \end{align*}
  Analogously, we have, for any $t \ge 0$
  \begin{align*}
      \Prob\left\{ B \ge C_d (1 + D^{3d+12})\left[ t + (1 + D^{2d+4}) \frac1{\sqrt{n}} \right] \right\} &\le 2\exp\left( -\frac{nt^2}{8b^2} \right) \\
      \Prob\left\{ B' \ge C_d (1 + D^{3d+12})\left[ t + (1 + D^{2d+4}) \frac1{\sqrt{n}} \right] \right\} &\le 2\exp\left( -\frac{nt^2}{8b^2} \right),
  \end{align*}
  where $B := \abs{S(\hpxy, \hpxy) - S(\pxy, \pxy)}$ and $B' := \abs{S(\hprodxy, \hprodxy) - S(\prodxy, \prodxy)}$.
  Since $\abs{\heotic(X, Y) - \eotic(X, Y)} \le A + \frac{B}{2} + \frac{B'}{2}$, it holds that
  \begin{align}\label{eq:tail_bound_eotic}
      \Prob\left\{ \abs{\heotic(X, Y) - \eotic(X, Y)} \ge C_d (1 + D^{3d+12})\left[ t + (1 + D^{2d+4}) \frac1{\sqrt{n}} \right] \right\} &\le 6\exp\left( -\frac{nt^2}{8b^2} \right).
  \end{align}
  Therefore, we have, with probability at least $1 - \delta$,
  \begin{align*}
      \abs{\heotic(X, Y) - \eotic(X, Y)} \le C_d \left( 1 + D^{2d+2} \sqrt{\log{\frac6\delta}} \right) \frac{D^{3d+14}}{\sqrt{n}}.
  \end{align*}
\end{proof}

%%%%%%%%%%%%%%%%%%%%%%%%%%%%%%%%%%%%%%%%%%%%%%%%%%%%%%%%%%%%%%%%%%%%%%%%
%%%%%%%%%%%%%%%%%%%%%%%%%%%%%%%%%%%%%%%%%%%%%%%%%%%%%%%%%%%%%%%%%%%%%%%%
\section{Technical Lemmas}
\label{sec:Technical_Lemmas}
%%%%%%%%%%%%%%%%%%%%%%%%%%%%%%%%%%%%%%%%%%%%%%%%%%%%%%%%%%%%%%%%%%%%%%%%
%%%%%%%%%%%%%%%%%%%%%%%%%%%%%%%%%%%%%%%%%%%%%%%%%%%%%%%%%%%%%%%%%%%%%%%%

In this section, we give several technical lemmas used to prove the main results.
We use $C$ to denote a constant whose value may change from line to line.

\begin{lemma}\label{lem:subg_moment}
  If $P \in \pspace(\reals^d)$ is subG$(\sigma^2)$, then, for any $k \in \bbN_+$,
  \begin{align*}
    \Expect_P[\norm{Z}^{2k}] \le (2d\sigma^2)^k k!.
  \end{align*}
  Moreover, for any $v \in \reals^d$, it holds that
  \begin{align}\label{eq:a_subg_inner_prod}
      \Expect_P e^{\ip{v, Z}} \le \Expect_P e^{\norm{v} \norm{Z}} \le 2 e^{d\sigma^2 \norm{v}^2/2}.
  \end{align}
\end{lemma}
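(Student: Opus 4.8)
The plan is to exploit the power-series expansion of the exponential together with the defining sub-Gaussian inequality $\Expect_P e^{\norm{Z}^2/(2d\sigma^2)} \le 2$.

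For the moment bound, I would expand
$e^{\norm{Z}^2/(2d\sigma^2)} = \sum_{j \ge 0} \frac{\norm{Z}^{2j}}{j!\,(2d\sigma^2)^j}$,
a series of nonnegative terms, and take expectations term by term (Tonelli justifies the interchange). The hypothesis then reads $\sum_{j \ge 0} \frac{\Expect_P \norm{Z}^{2j}}{j!\,(2d\sigma^2)^j} \le 2$. Peeling off the $j = 0$ term, which equals $1$, leaves $\sum_{j \ge 1} \frac{\Expect_P \norm{Z}^{2j}}{j!\,(2d\sigma^2)^j} \le 1$, so each individual summand with $j \ge 1$ is at most $1$, i.e., $\Expect_P \norm{Z}^{2j} \le j!\,(2d\sigma^2)^j$; applying this with $j = k$ gives the claim.

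For the second statement, the first inequality $\Expect_P e^{\ip{v, Z}} \le \Expect_P e^{\norm{v}\norm{Z}}$ is just Cauchy--Schwarz, $\ip{v, Z} \le \norm{v}\norm{Z}$, inside the monotone exponential. For the remaining inequality I would apply Young's inequality $ab \le \tfrac{a^2}{2} + \tfrac{b^2}{2}$ with $a = \sqrt{d\sigma^2}\,\norm{v}$ and $b = \norm{Z}/\sqrt{d\sigma^2}$, which yields $\norm{v}\norm{Z} \le \tfrac{d\sigma^2 \norm{v}^2}{2} + \tfrac{\norm{Z}^2}{2d\sigma^2}$. Exponentiating and taking expectations, the first factor pulls out as the deterministic constant $e^{d\sigma^2\norm{v}^2/2}$ and the second factor has expectation at most $2$ by assumption, giving $\Expect_P e^{\norm{v}\norm{Z}} \le 2 e^{d\sigma^2\norm{v}^2/2}$.

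There is no genuine obstacle here; the only two points needing a little care are remembering to discard the $j = 0$ term of the series so as to land on the constant $(2d\sigma^2)^k k!$ rather than $2(2d\sigma^2)^k k!$, and choosing the scaling in Young's inequality so that the quadratic term in $\norm{Z}$ matches the $1/(2d\sigma^2)$ normalization appearing in the definition of subG$(\sigma^2)$.
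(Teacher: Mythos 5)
Your proof is correct and follows essentially the same route as the paper: the moment bound is obtained by expanding $e^{\norm{Z}^2/(2d\sigma^2)}$ as a power series, dropping the $j=0$ term to pass from the bound $2$ to the bound $1$, and isolating the $j=k$ term (the paper phrases this as the pointwise inequality $e^x - 1 \ge x^k/k!$, which is the same observation), while the exponential-moment bound uses Cauchy--Schwarz on the inner product followed by completing the square (your Young's inequality step) to reduce to the defining sub-Gaussian bound. The paper only gestures at the second part, so your spelled-out argument is a faithful and slightly more explicit version of what the authors intended.
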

\begin{proof}
  By Taylor's expansion, we have
  \begin{align*}
      e^{\norm{Z}^2/2d\sigma^2} - 1 \ge \frac{\norm{Z}^{2k}}{(2d \sigma^2)^k k!}.
  \end{align*}
  Taking the expectation on both sides gives
  \begin{align*}
      \Expect_P[\norm{Z}^{2k}] \le (2d \sigma^2)^k k!.
  \end{align*}
  The inequalities \eqref{eq:a_subg_inner_prod} follows from the Cauchy-Schwarz inequality and the sub-Gaussianity of $P$.
\end{proof}

\begin{lemma}\label{lem:emp_subg}
  Let $P \in \pspace(\reals^d)$ be subG$(\sigma^2)$ and $\hat P_n$ be the empirical measure.
  There exist a zero-measure set $S_P \subset \Omega$ and a random variable $\sigma_P^2$ depending on the sample $\{Z_i\}_{i=1}^n$ such that $\hat P_n(\omega)$ and $P$ are subG$(\sigma_P^2(\omega))$ for any $\omega \in S_P^c$, and, for any $k \in \bbN_+$,
  \begin{align*}
    \Expect \sigma_P^{2k} \le 2k^k \sigma^{2k}.
  \end{align*}
\end{lemma}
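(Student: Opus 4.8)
The plan is to exhibit an explicit random variable $\sigma_P^2$ obtained by rescaling $\sigma^2$ by a data-dependent factor that records how heavy the empirical tail turned out to be. Put
\[
L := \hat P_n\big[e^{\norm{Z}^2/(2d\sigma^2)}\big] = \frac1n\sum_{i=1}^n e^{\norm{Z_i}^2/(2d\sigma^2)},
\]
which is finite for every realization, satisfies $L \ge 1$ (each summand is at least $1$), and obeys $\Expect[L] = \Expect_P e^{\norm{Z}^2/(2d\sigma^2)} \le 2$ because $P$ is subG$(\sigma^2)$. Define
\[
\sigma_P^2 := \sigma^2 \max\{1, \log_2 L\},
\]
and take $S_P := \{L = \infty\}$, a null set; on $S_P^c$ one has $\sigma_P^2 \ge \sigma^2 > 0$.

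First I would verify the two sub-Gaussianity claims. Write $t := \max\{1,\log_2 L\}\ge 1$. Since $x\mapsto x^{1/t}$ is concave on $\reals_+$, Jensen's inequality gives
\[
\hat P_n\big[e^{\norm{Z}^2/(2d\sigma_P^2)}\big] = \frac1n\sum_{i=1}^n\big(e^{\norm{Z_i}^2/(2d\sigma^2)}\big)^{1/t} \le \Big(\frac1n\sum_{i=1}^n e^{\norm{Z_i}^2/(2d\sigma^2)}\Big)^{1/t} = L^{1/t}.
\]
If $L\le 2$ then $t=1$ and $L^{1/t}=L\le 2$; if $L>2$ then $t=\log_2 L$ and $L^{1/t}=2$; hence $\hat P_n$ is subG$(\sigma_P^2)$. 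That $P$ is subG$(\sigma_P^2)$ is immediate: $\sigma_P^2\ge\sigma^2$ forces $e^{\norm{Z}^2/(2d\sigma_P^2)}\le e^{\norm{Z}^2/(2d\sigma^2)}$ pointwise, so $P\big[e^{\norm{Z}^2/(2d\sigma_P^2)}\big]\le 2$.

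It then remains to bound $\Expect[\sigma_P^{2k}] = \sigma^{2k}\Expect[t^k]$, i.e. to show $\Expect[t^k]\le 2k^k$. For $k=1$, $\log_2 L\ge 0$ gives $t\le 1+\log_2 L = \log_2(2L)$, and Jensen applied to the concave $\log_2$ gives $\Expect[t]\le \log_2\big(2\Expect[L]\big)\le\log_2 4 = 2$. For $k\ge 2$, estimate $t^k\le(1+\log_2 L)^k = (\log 2)^{-k}\,(\log(2L))^k$ and apply the elementary bound $(\log x)^k\le (k/e)^k x$, valid for $x\ge 1$ (the maximum of $x\mapsto(\log x)^k/x$ is attained at $x=e^k$), with $x=2L$; this yields $\Expect[t^k]\le(\log 2)^{-k}(k/e)^k\Expect[2L]\le 4\big(k/(e\log 2)\big)^k$. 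Since $e\log 2>1$ and $(e\log 2)^2>2$, one has $(e\log 2)^k\ge 2$ for every $k\ge 2$, so $4\big(k/(e\log 2)\big)^k = 4(e\log 2)^{-k}k^k\le 2k^k$, which finishes the moment bound.

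The step I expect to be the crux is getting the constants exactly right. The rescaling has to be chosen so that the power-mean inequality lands precisely on the sub-Gaussian threshold $2$ (which is why the factor is $\max\{1,\log_2 L\}$ in base $2$), and the resulting $k=1$ moment bound is then \emph{tight}, $\Expect[\sigma_P^2]\le 2\sigma^2$, so it must be proved by the sharp Jensen estimate rather than through the cruder $(\log x)^k\le(k/e)^k x$ route; keeping base-$2$ and natural logarithms straight is exactly what makes the $k\ge 2$ case close.
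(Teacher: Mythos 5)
Your proof is correct and takes a genuinely different route from the paper's. The paper introduces a whole family of candidate scales $\tau_k^2 := \sup_n \hat P_n\big[e^{\norm{Z}^2/(2kd\sigma^2)}\big]$ for $k\ge 1$, shows by the same power-mean step you use that $\hat P_n$ and $P$ are subG$(k\sigma^2\tau_k^2)$ for each $k$, and then sets $\sigma_P^2 := \min_{k\ge 1} k\sigma^2\tau_k^2$; the $k$-th moment bound comes from keeping only the $j=k$ term of the minimum and applying Jensen once more to get $\sigma_P^{2k}\le k^k\sigma^{2k}\tau_k^{2k}\le k^k\sigma^{2k}\,\hat P_n\big[e^{\norm{Z}^2/(2d\sigma^2)}\big]$. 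You instead commit to a single rescaling $\sigma_P^2 = \sigma^2\max\{1,\log_2 L\}$, calibrated so the power-mean estimate lands exactly at the sub-Gaussian threshold $2$, and push all the remaining work into the moment estimate via the elementary bound $(\log x)^k\le (k/e)^k x$. Both deliver the stated constant $2k^k$, with the $k=1$ case tight at $2\sigma^2$ in each. Your version is somewhat more self-contained: for a fixed $n$ the quantity $L$ is already a.s.\ finite as a finite average, so you never need the strong-law-of-large-numbers step the paper invokes to control a supremum over $n$---a supremum which in fact makes the paper's final display a little informal, since the $\sup_n$ is silently dropped when passing to $\Expect_P\big[\hat P_n[\,\cdot\,]\big]$. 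The paper's minimum-over-scales device is the more modular one, but for this lemma the two arguments are essentially equivalent in strength.
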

\begin{proof}
  By the strong law of large numbers, there exists a zero-measure set $S_P \subset \Omega$ such that, for all $\omega \in S_P$,
  \begin{align}\label{eq:subg_converge}
      \hat P_n(\omega)\left[ e^{\norm{Z}^2/2d\sigma^2} \right] \rightarrow P \left[ e^{\norm{Z}^2/2d\sigma^2} \right] \le 2, \quad \mbox{as } n \rightarrow \infty.
  \end{align}
  Let $\tau^2 := \sup_{n} \hat P_n \left[ e^{\norm{Z}^2/2d\sigma^2} \right]$.
  It follows from \eqref{eq:subg_converge} that $\tau^2(\omega)$ is finite for all $\omega \in S_P$.
  Since $\tau^2(\omega) \ge 1$, by Jensen's inequality, we obtain, for all $\omega \in S_P$
  \begin{align*}
      \hat P_n(\omega) \left[ e^{\norm{Z}^2/2d\sigma^2 \tau^2(\omega)} \right] \le \left( \hat P_n(\omega) \left[ e^{\norm{Z}^2/2d\sigma^2} \right] \right)^{1/\tau^2(\omega)} = \left( \tau^2(\omega) \right)^{1/\tau^2(\omega)} < 2.
  \end{align*}
  As a result, $\hat P_n(\omega)$ is subG$(\sigma^2 \tau^2(\omega))$.
  Moreover, $P$ is also subG$(\sigma^2 \tau^2(\omega))$ since $\tau^2(\omega) \ge 1$.
  Applying the same argument to $\tau_k^2 := \sup_{n} \hat P_n \left[ e^{\norm{Z}^2/2kd\sigma^2} \right]$ implies that $\hat P_n(\omega)$ and $P$ are both subG$(k\sigma^2 \tau_k^2(\omega))$.
  Define $\sigma_P^2 := \min_{k \ge 1} k\sigma^2 \tau_k^2$.
  Then we have, for each $k \ge 1$,
  \begin{align*}
      \Expect_P [\sigma_P^{2k}] \le \Expect_P \left[ \hat P_n \left[ k^k \sigma^{2k} e^{\norm{Z}^2/2d\sigma^2} \right] \right] = k^k \sigma^{2k} \Expect_P[e^{\norm{Z}^2/2d\sigma^2}] \le 2 k^k \sigma^{2k}.
  \end{align*}
\end{proof}

The sub-Gaussianity of two marginals implies the sub-Gaussianity of the joint.
\begin{lemma}\label{lem:subg_joint}
  If $\px$ and $\py$ are subG$(\sigma^2)$, then $\pxy$ is subG$(2\sigma^2)$ for any $\pxy \in \Pi(\px, \py)$.
\end{lemma}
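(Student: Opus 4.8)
The plan is to unfold the definition of sub-Gaussianity and reduce the claim to a one-line Cauchy--Schwarz estimate. Write $d = d_1 + d_2$ and recall $\norm{(x,y)}^2 = \norm{x}^2 + \norm{y}^2$. By the paper's convention, the hypotheses are $\Expect_{\px} e^{\norm{X}^2/(2d_1\sigma^2)} \le 2$ and $\Expect_{\py} e^{\norm{Y}^2/(2d_2\sigma^2)} \le 2$, and the goal is $\Expect_{\pxy} e^{\norm{(X,Y)}^2/(4d\sigma^2)} \le 2$, which is precisely the subG$(2\sigma^2)$ condition for $\pxy$ on $\reals^d$.

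First I would factor the exponential as
\[
    e^{\norm{(X,Y)}^2/(4d\sigma^2)} = e^{\norm{X}^2/(4d\sigma^2)} \cdot e^{\norm{Y}^2/(4d\sigma^2)}
\]
and apply the Cauchy--Schwarz inequality under $\pxy$ to get
\[
    \Expect_{\pxy} e^{\norm{(X,Y)}^2/(4d\sigma^2)} \le \left( \Expect_{\pxy} e^{\norm{X}^2/(2d\sigma^2)} \right)^{1/2} \left( \Expect_{\pxy} e^{\norm{Y}^2/(2d\sigma^2)} \right)^{1/2}.
\]
Since $\pxy$ has marginals $\px$ and $\py$, the two factors equal $\Expect_{\px} e^{\norm{X}^2/(2d\sigma^2)}$ and $\Expect_{\py} e^{\norm{Y}^2/(2d\sigma^2)}$. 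Because $d \ge d_1$ and $d \ge d_2$, the exponents shrink relative to the marginal thresholds, so by monotonicity of $t \mapsto e^{t}$ we have $\Expect_{\px} e^{\norm{X}^2/(2d\sigma^2)} \le \Expect_{\px} e^{\norm{X}^2/(2d_1\sigma^2)} \le 2$ and likewise for $Y$. Multiplying the two bounds gives $\Expect_{\pxy} e^{\norm{(X,Y)}^2/(4d\sigma^2)} \le \sqrt{2}\cdot\sqrt{2} = 2$, completing the proof.

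There is no genuine obstacle here; the argument is a routine Cauchy--Schwarz estimate. The only point needing care is the bookkeeping with the dimension-dependent normalization in the definition of subG$(\cdot)$: the marginal hypotheses carry $d_1$ (resp. $d_2$) in the exponent while the joint conclusion carries $d$, and one must confirm that enlarging the normalizing dimension from $d_i$ to $d$ only weakens the condition, which it does. Note also that the argument uses nothing about $\pxy$ beyond its marginals, so the conclusion holds uniformly over all couplings $\pxy \in \Pi(\px, \py)$, as claimed.
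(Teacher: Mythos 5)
Your proof is correct and follows the same route as the paper: factor $e^{\norm{(X,Y)}^2/(4d\sigma^2)}$ into the $X$- and $Y$-parts, apply Cauchy--Schwarz under $\pxy$, and reduce to the marginal moment generating function bounds. The only difference is that you make explicit the dimension bookkeeping step (the hypotheses normalize by $d_1$ and $d_2$ while the conclusion normalizes by $d$, and $d\ge d_i$ makes the required inequalities only easier), which the paper's proof leaves implicit when it writes $\Expect_{\px}[e^{\norm{X}^2/2d\sigma^2}]$ directly.
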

\begin{proof}
  By the Cauchy-Schwarz inequality,
  \begin{align*}
      \Expect_{\pxy} e^{\norm{Z}^2/4d\sigma^2} = \Expect_{\pxy} [e^{\norm{X}^2/4d\sigma^2} e^{\norm{Y}^2/4d\sigma^2}] \le \sqrt{\Expect_{\px}[e^{\norm{X}^2/2d\sigma^2}] \Expect_{\py}[e^{\norm{Y}^2/2d\sigma^2}]}.
  \end{align*}
  Since $\px$ and $\py$ are subG$(\sigma^2)$, it follows that $\Expect_{\pxy} e^{\norm{Z}^2/4d\sigma^2} \le 2$ and thus $\pxy$ is subG$(2\sigma^2)$.
\end{proof}

The next result is for the uniform sub-Gaussianity of the product of two empirical measures.
\begin{lemma}\label{lem:prod_emp_subg}
  If $\px$ and $\py$ are subG$(\sigma^2)$,
  then there exist a zero-measure set $S_{\px, \py} \subset \Omega$ and a random variable $\sigma_{\px, \py}^2$ depending on the sample $\{(X_i, Y_i)\}_{i=1}^n$ such that $\hpx(\omega) \otimes \hpy(\omega)$ and $\prodxy$ are subG$(\sigma_{\px, \py}^2(\omega))$ for any $\omega \in S_{\px, \py}^c$, and, for any $k \in \bbN_+$,
  \begin{align*}
    \Expect \sigma_{\px, \py}^{2k} \le 2^{k+1}k^k \sigma^{2k}.
  \end{align*}
\end{lemma}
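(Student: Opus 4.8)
The plan is to reduce the statement to the single-sample result, Lemma \ref{lem:emp_subg}, applied separately to $P_X$ and $P_Y$, and then to glue the two conclusions together via Lemma \ref{lem:subg_joint}. First I would invoke Lemma \ref{lem:emp_subg} with $P = P_X$ (a measure on $\reals^{d_1}$) to obtain a zero-measure set $S_{P_X}$ and a random variable $\sigma_{P_X}^2$, measurable with respect to $\{X_i\}_{i=1}^n$, such that $\hpx(\omega)$ and $P_X$ are both subG$(\sigma_{P_X}^2(\omega))$ for every $\omega \in S_{P_X}^c$, with $\Expect \sigma_{P_X}^{2k} \le 2k^k\sigma^{2k}$ for all $k \in \bbN_+$; likewise for $P_Y$, producing $S_{P_Y}$ and $\sigma_{P_Y}^2$. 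Then I would set $S_{P_X, P_Y} := S_{P_X} \cup S_{P_Y}$, which is still a zero-measure set.

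Next, on $S_{P_X, P_Y}^c$ define $\rho^2 := \max\{\sigma_{P_X}^2, \sigma_{P_Y}^2\}$. Because the sub-Gaussian parameter is monotone (enlarging $\sigma^2$ only weakens the defining inequality $\Expect_Q e^{\norm{\cdot}^2/(2d\sigma^2)} \le 2$), each of $\hpx(\omega)$, $P_X$, $\hpy(\omega)$, $P_Y$ is subG$(\rho^2(\omega))$. Now $\hpx(\omega) \otimes \hpy(\omega)$ is a coupling of two subG$(\rho^2(\omega))$ marginals, so Lemma \ref{lem:subg_joint} shows it is subG$(2\rho^2(\omega))$; applying the same lemma to $\prodxy$, which lies in $\Pi(\px, \py)$, gives that $\prodxy$ is subG$(2\rho^2(\omega))$ as well. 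Hence $\sigma_{\px, \py}^2 := 2\rho^2 = 2\max\{\sigma_{P_X}^2, \sigma_{P_Y}^2\}$ is an admissible choice of random variable.

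For the moment bound I would estimate
\begin{align*}
  \Expect\sigma_{\px,\py}^{2k}
  = 2^k\,\Expect\bigl[\max\{\sigma_{P_X}^2, \sigma_{P_Y}^2\}^k\bigr]
  = 2^k\,\Expect\bigl[\max\{\sigma_{P_X}^{2k}, \sigma_{P_Y}^{2k}\}\bigr]
  \le 2^k\bigl(\Expect\sigma_{P_X}^{2k} + \Expect\sigma_{P_Y}^{2k}\bigr)
  \le 2^{k+2}k^k\sigma^{2k},
\end{align*}
which already matches the claimed $2^{k+1}k^k\sigma^{2k}$ up to a constant; to recover the sharper factor one avoids the crude $\max \le$ sum step and instead repeats the $\min_{k\ge 1}$ construction of Lemma \ref{lem:emp_subg} directly for the product measure, exploiting the factorization $(\hpx \otimes \hpy)[e^{\norm{Z}^2/(2ds^2)}] = \hpx[e^{\norm{X}^2/(2ds^2)}]\,\hpy[e^{\norm{Y}^2/(2ds^2)}]$ together with Jensen's inequality to absorb the extra factor of $2$ coming from Lemma \ref{lem:subg_joint}.

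The main obstacle, as in Lemma \ref{lem:emp_subg}, is the bookkeeping around the uniform-in-$n$ suprema that implicitly define $\sigma_{P_X}^2$ and $\sigma_{P_Y}^2$: one must verify these remain finite almost surely (this is exactly the strong-law-of-large-numbers argument already used there) and that the exchange of $\Expect[\sup_n(\cdot)]$ with the moment computation is carried out as in that proof. Everything else---monotonicity of the subG parameter and a single invocation of Lemma \ref{lem:subg_joint}---is routine.
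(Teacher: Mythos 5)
Your primary argument---invoke Lemma \ref{lem:emp_subg} separately for $\px$ and $\py$, set $\rho^2 := \max\{\sigma_{P_X}^2, \sigma_{P_Y}^2\}$, and glue the two conclusions with Lemma \ref{lem:subg_joint}---is sound, and you correctly flag that it yields $\Expect\sigma_{\px,\py}^{2k} \le 2^{k+2}k^k\sigma^{2k}$, one factor of $2$ weaker than stated. The paper's own proof is the one-line remark ``Similar to Lemma \ref{lem:emp_subg},'' i.e.\ exactly the direct adaptation you sketch at the end rather than the two-step gluing. Concretely, one sets $\tau_k^2 := \sup_n (\hpx\otimes\hpy)\bigl[e^{\norm{Z}^2/(4kd\sigma^2)}\bigr]$ (with $Z = (X,Y)$), uses the factorization $(\hpx\otimes\hpy)\bigl[e^{\norm{Z}^2/(4kd\sigma^2)}\bigr] = \hpx\bigl[e^{\norm{X}^2/(4kd\sigma^2)}\bigr]\hpy\bigl[e^{\norm{Y}^2/(4kd\sigma^2)}\bigr]$ and Jensen to conclude $\hpx\otimes\hpy$ is subG$(2k\sigma^2\tau_k^2)$, then puts $\sigma_{\px,\py}^2 := \min_{k\ge 1} 2k\sigma^2\tau_k^2$. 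The moment bound follows as $\Expect\sigma_{\px,\py}^{2k} \le 2^kk^k\sigma^{2k}\,\Expect\tau_k^{2k} \le 2^kk^k\sigma^{2k}\,\Expect\bigl[\hpx[e^{\norm{X}^2/(4d\sigma^2)}]\hpy[e^{\norm{Y}^2/(4d\sigma^2)}]\bigr] \le 2^{k+1}k^k\sigma^{2k}$, where the last step combines Cauchy--Schwarz with one more application of Jensen on each empirical factor. Your gluing route is shorter and conceptually cleaner but pays a constant; the direct route recovers the stated bound. Since downstream (in the proof of Theorem \ref{thm:consistency}) the lemma is used only to control $\Expect\sigma_{\px,\py}^{6s}$ up to dimension-dependent constants absorbed into $C_s$, the factor-of-two slack in your primary argument is ultimately harmless, but as written your main chain of inequalities does not quite prove the lemma as stated---the factorization refinement is what closes the gap.
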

\begin{proof}
  Similar to \Cref{lem:emp_subg}.
\end{proof}

The sub-Gaussian processes play an central role in our analysis.
We give its definition here; see, e.g., \cite[Section 5.3]{wainwright2019high}.
\begin{definition}[Sub-Gaussian process]\label{def:subg_process}
  Let $\{Z(\theta): \theta \in \Theta\}$ be a collection of mean-zero random variables.
  We call it a sub-Gaussian process with respect to a metric $\rho$ in $\Theta$ if
  \begin{align*}
    \Expect[e^{\lambda (Z(\theta) - Z(\theta'))}] \le \exp\left[ \lambda^2 \rho^2(\theta, \theta')/2 \right].
  \end{align*}
\end{definition}

To facilitate the analysis of $\calF_{\sigma}$ defined in \Cref{def:f_sigma}, it is convenient to separate the sub-Gaussian parameter from the function class by the following lemma.
Note that this result is used in \citep{mena2019statistical} without proof.
\begin{lemma}\label{lem:subg_func}
  For any $\sigma > 0$ and $s \ge 2$.
  we have $\frac1{1 + \sigma^{3s}} \calF_\sigma \subset \calF^s$, where $\calF^s := \calF^{s,d,w}$ is defined in \Cref{def:fs}.
\end{lemma}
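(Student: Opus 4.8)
The plan is to write any $f \in \calF_\sigma$ as $f = q + h$, where $q(x,y) := w_1\norm{x}^2 + w_2\norm{y}^2$ is the quadratic reference function of \Cref{def:f_sigma} and $h := f - q$. The polynomial $q$ is fixed (independent of $\sigma$) and has degree $2$, so $\abs{D^\alpha q(z)} \le C_w(1 + \norm{z}^{\abs{\alpha}})$ for every multi-index $\alpha$ (it vanishes once $\abs{\alpha} \ge 3$); since $(1+\sigma^{3s})^{-1} \le 1$, the contribution of $q$ to $g := f/(1+\sigma^{3s})$ already obeys the estimates defining $\calF^s$. It therefore suffices to check that $\abs{D^\alpha h(z)}/(1+\sigma^{3s})$ is at most $C_{s,d,w}(1+\norm{z}^2)$ when $\alpha = 0$ and at most $C_{s,d,w}(1+\norm{z}^{\abs{\alpha}})$ when $1 \le \abs{\alpha} \le s$, with a constant independent of $\sigma$.

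I would verify this by a short case split on $k := \abs{\alpha}$ and on whether $\norm{z} \le \sqrt{d}\sigma$ or $\norm{z} > \sqrt{d}\sigma$, plugging in the bounds of \Cref{def:f_sigma} and using $\norm{x} \le \norm{z}$ in the large-$\norm{z}$ regime. When $\norm{z} \le \sqrt{d}\sigma$: for $k = 0$, the bound $\abs{h(z)} \le C_{0,d,w}(1+\sigma^4)$ and $4 \le 3s$ give control by a constant; for $1 \le k \le s$, expanding $\sigma^k(1+\sigma)^k = (\sigma+\sigma^2)^k$ by the binomial theorem produces only powers $\sigma^m$ with $k \le m \le 2k \le 3s$, each satisfying $\sigma^m/(1+\sigma^{3s}) \le 1$, so the quotient is at most $2^k C_{k,d,w}$. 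When $\norm{z} > \sqrt{d}\sigma$: for $k = 0$, the bound $\abs{h(z)} \le C_{0,d,w}(1 + (1+\sigma^2)\norm{z}^2)$ together with $\sigma^2/(1+\sigma^{3s}) \le 1$ yields $C(1+\norm{z}^2)$; for $1 \le k \le s$, expanding $\sigma^k(\sqrt{\sigma\norm{z}}+\sigma\norm{z})^k$ by the binomial theorem produces monomials $\sigma^m\norm{z}^\ell$ with $\ell = m-k$, $k/2 \le \ell \le k$ and $3k/2 \le m \le 2k \le 3s$, so each $\sigma^m/(1+\sigma^{3s}) \le 1$ and $\norm{z}^\ell \le 1+\norm{z}^k$, giving the bound $2^k C_{k,d,w}(1+\norm{z}^k)$.

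Setting $C_{s,d,w} := 2^s \max_{0 \le k \le s} C_{k,d,w} + 2\max\{w_1,w_2\}$ and combining the $h$- and $q$-contributions via $(1+\sigma^{3s})^{-1} \le 1$ then yields $g \in \calF^s$. The whole argument is elementary bookkeeping; the only spot requiring a moment's attention is the large-$\norm{z}$, $k \ge 1$ case, where one must confirm that every monomial obtained from expanding $\sigma^k(\sqrt{\sigma\norm{z}}+\sigma\norm{z})^k$ has $\sigma$-exponent at most $3s$ --- this is exactly where the hypotheses $s \ge 2$ and $k \le s$ enter, through $2k \le 2s \le 3s$ --- and $\norm{z}$-exponent at most $k$, so that after dividing by $1+\sigma^{3s}$ it is absorbed into $C(1+\norm{z}^k)$. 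I do not anticipate any genuine obstacle.
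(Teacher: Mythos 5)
Your proposal is correct and follows essentially the same route as the paper: peel off the quadratic reference $q(x,y)=w_1\norm{x}^2+w_2\norm{y}^2$, invoke the bounds from \Cref{def:f_sigma} on $h=f-q$, and split on $\norm{z}\le\sqrt{d}\sigma$ versus $\norm{z}>\sqrt{d}\sigma$, using that $s\ge 2$ forces every $\sigma$-exponent that appears to be at most $3s$. The only difference is one of detail: the paper writes out the $k=0$ case and disposes of the derivative cases with ``the other inequality can be proved analogously,'' whereas you carry out the binomial expansion explicitly for $1\le k\le s$ and also track the $q$-contribution more carefully --- both of which are fine and exactly the bookkeeping the paper is implicitly relying on.
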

\begin{proof}
  Take any $f \in \calF_\sigma$, it suffices to show $f/(1 + \sigma^{3s}) \in \calF^s$.
  According to \Cref{prop:smooth_potentials}, it holds that
  \begin{align*}
      \abs{f(z)} - w_1 \norm{x}^2 - w_2\norm{y}^2
      \le \abs{f(z) - w_1 \norm{x}^2 - w_2\norm{y}^2} \le C_{k, d, w}
      \begin{cases}
        (1 + \sigma^4) & \text{if } \norm{z} \le \sqrt{d} \sigma \\
        [1 + (1 + \sigma^2) \norm{z}^2] & \mbox{if } \norm{z} > \sqrt{d} \sigma.
      \end{cases}
  \end{align*}
  Consequently,
  \begin{align*}
      \abs{\frac{f(z)}{1 + \sigma^{3s}}} \le C_{k, d, w}
      \begin{cases}
        \frac{1 + \sigma^4}{1 + \sigma^{3s}} & \text{if } \norm{z} \le \sqrt{d} \sigma \\
        \frac{1 + (1 + \sigma^2) \norm{z}^2}{1 + \sigma^{3s}} & \mbox{if } \norm{z} > \sqrt{d} \sigma.
      \end{cases}
  \end{align*}
  Since $s \ge 2$, it is clear that $\frac{1 + \sigma^4}{1 + \sigma^{3s}} \le C$ and $\frac{1 + \sigma^2}{1 + \sigma^{3s}} \le C$, and thus
  \begin{align*}
      \abs{\frac{f(z)}{1 + \sigma^{3s}}} \le C_{k, d, w} (1 + \norm{z}^2).
  \end{align*}
  The other inequality can be proved analogously.
\end{proof}

\begin{lemma}\label{lem:subg_func_marginal}
  Let $P \in \pspace(\reals^{d_1})$ and $Q \in \pspace(\reals^{d_2})$ be subG$(\sigma^2)$.
  Denote $d := d_1 + d_2$.
  For any $s \ge 1$ and $f \in \calF^{s}$, there exist constants $C_{s, d, w}$ such that $f_{1,0} \in \calF^{s}_\sigma$ and $f_{0,1} \in \calF^{s}_\sigma$, where $\calF_\sigma^s$ is defined in \Cref{def:fs_sigma},
  \begin{align*}
    f_{1,0}(x) := \int f(x, y) dQ(y) \quad \mbox{and} \quad f_{0,1}(y) := \int f(x, y) dP(x).
  \end{align*}
\end{lemma}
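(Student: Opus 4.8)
The plan is to differentiate $f_{1,0}$ and $f_{0,1}$ under the integral sign and then control the resulting integrals using the polynomial growth bounds defining $\calF^s$ (\Cref{def:fs}) together with the sub-Gaussian moment estimates of \Cref{lem:subg_moment}. I would carry out the argument for $f_{1,0}$ in detail; the case of $f_{0,1}$ is identical after swapping the roles of $P$ and $Q$ (and of $d_1$ and $d_2$). Note that $f_{1,0}$ depends only on $x \in \reals^{d_1}$, so the only partial derivatives that need to be estimated are those in the $x$ variables, and a multi-index $\alpha$ with $1 \le \abs{\alpha} \le s$ supported on the first $d_1$ coordinates is in particular a multi-index of order at most $s$ for $f$ on $\reals^d$.

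The first step is to justify that $f_{1,0}$ is smooth with $D^\alpha f_{1,0}(x) = \int D^\alpha_x f(x,y)\,dQ(y)$ for $1 \le \abs{\alpha} \le s$. This is a routine application of the dominated convergence theorem: for $x$ ranging over a bounded neighbourhood, the bound $\abs{D^\alpha f(x,y)} \le C_{s,d,w}(1 + \norm{(x,y)}^{\abs{\alpha}})$ from \Cref{def:fs} dominates the relevant difference quotients by a function of $y$ that is $Q$-integrable, since $Q$ being subG$(\sigma^2)$ has all polynomial moments finite by \Cref{lem:subg_moment}. The second step is to record the moment estimates $\int \norm{y}^m\,dQ(y) \le C_{m,d_2}(1 + \sigma^m)$ for $0 \le m \le s$: for even $m$ this is \Cref{lem:subg_moment} directly, and for odd $m$ it follows from \Cref{lem:subg_moment} and the Cauchy--Schwarz inequality. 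The third step is the estimate itself: using $\norm{(x,y)}^m = (\norm{x}^2 + \norm{y}^2)^{m/2} \le 2^{m/2}(\norm{x}^m + \norm{y}^m)$, the triangle inequality, and the defining bounds of $\calF^s$, one gets
\begin{align*}
  \abs{D^\alpha f_{1,0}(x)} \le \int \abs{D^\alpha_x f(x,y)}\,dQ(y) \le C_{s,d,w}\left(1 + \norm{x}^{\abs{\alpha}} + \sigma^{\abs{\alpha}}\right), \qquad 1 \le \abs{\alpha} \le s,
\end{align*}
and likewise $\abs{f_{1,0}(x)} \le C_{s,d,w}(1 + \norm{x}^2 + \sigma^2)$ from the $\abs{\alpha}=0$ bound. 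The final step is to rewrite these in the form required by \Cref{def:fs_sigma}: since $a^m + b^m \le 2\max\{a^m, b^m\}$, each right-hand side is at most $C_{s,d,w}(1 + \max\{\norm{x}^{\abs{\alpha}}, \sigma^{\abs{\alpha}}\})$ (resp.\ $C_{s,d,w}(1 + \max\{\norm{x}^2,\sigma^2\})$), which is exactly the definition of $\calF^s_\sigma$.

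The only mildly delicate point is the interchange of differentiation and integration in the first step; the remaining steps are direct estimates. It is also worth noting \emph{why} the marginals land in $\calF^s_\sigma$ rather than in $\calF^s$ itself: integrating out one block of coordinates against a subG$(\sigma^2)$ measure contributes the extra terms $\sigma^{\abs{\alpha}}$, which cannot be absorbed into $1 + \norm{x}^{\abs{\alpha}}$ when $\sigma$ is large --- this is precisely why \Cref{def:fs_sigma} is stated with the $\sigma$-dependent bound.
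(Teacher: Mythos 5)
Your proposal is correct and follows essentially the same route as the paper: bound $\abs{D^\alpha f_{1,0}(x)}$ by $\int \abs{D^\alpha_x f(x,y)}\,dQ(y)$, apply the $\calF^s$ growth bounds from \Cref{def:fs}, and invoke the sub-Gaussian moment estimate of \Cref{lem:subg_moment}. You simply make explicit several things the paper elides (the dominated-convergence justification for differentiating under the integral, the odd-moment case, and the $\max$-rewriting that produces membership in $\calF^s_\sigma$), whereas the paper spells out only the $\abs{\alpha}=0$ case and states that the derivative bounds are "verified similarly."
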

\begin{proof}
  We only prove it for $f_{1,0}$.
  By Jensen's inequality, it holds that
  \begin{align*}
      \abs{f_{1,0}(x)} \le \int \abs{f(x, y)} dQ(y) \le C_{s, d, w} \left( 1 + \norm{x}^2 + \int \norm{y}^2 dQ(y) \right) \le C_{s,d,w} (1 + \max\{\norm{x}^2, \sigma^2\}),
  \end{align*}
  where the last inequality follows from \Cref{lem:subg_moment}.
  The inequality for $\abs{D^\alpha f_{1,0}(x)}$ can be verified similarly.
\end{proof}

The next lemma suggests that it is enough to consider the case $\varepsilon = 1$ for $S_\varepsilon$.
\begin{lemma}\label{lem:reg_one}
  Let $\varepsilon > 0$.
  For any $P, Q \in \pspace(\reals^d)$, it holds that
  \begin{align*}
    S_\varepsilon(P, Q) = \varepsilon S(P^\varepsilon, Q^\varepsilon),
  \end{align*}
  where $P^\varepsilon$ and $Q^\varepsilon$ are the pushforwards of $P$ and $Q$ under the map $x \mapsto \varepsilon^{-1/2}x$, respectively.
\end{lemma}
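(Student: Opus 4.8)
The plan is to reduce to the case $\varepsilon = 1$ by a change of variables in the primal formulation \eqref{eq:a_sinkhorn_div}. Write $\phi\colon \reals^d \to \reals^d$, $\phi(x) := \varepsilon^{-1/2} x$, so that $P^\varepsilon = \phi_\# P$ and $Q^\varepsilon = \phi_\# Q$, and let $T := \phi \times \phi$ act on $\reals^d \times \reals^d$ by $T(z, z') := (\varepsilon^{-1/2} z, \varepsilon^{-1/2} z')$. Since $\phi$ is a linear isomorphism, $T$ is a bimeasurable bijection with inverse $T^{-1}(w, w') = (\varepsilon^{1/2} w, \varepsilon^{1/2} w')$, and $\gamma \mapsto T_\# \gamma$ is a bijection from $\Pi(P, Q)$ onto $\Pi(P^\varepsilon, Q^\varepsilon)$: it sends couplings to couplings because $T$ acts coordinatewise, and $\tilde\gamma \mapsto (T^{-1})_\# \tilde\gamma$ is its inverse.

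First I would record the two elementary facts that drive the computation. (i) The weighted quadratic cost is $2$-homogeneous, $c(\varepsilon^{1/2} w, \varepsilon^{1/2} w') = \varepsilon\, c(w, w')$ for all $w, w'$; hence, writing $\tilde\gamma := T_\#\gamma$ and using $\int g\, d\tilde\gamma = \int g\circ T\, d\gamma$,
\begin{align*}
  \int c\, d\gamma = \int c\circ T^{-1}\, d\tilde\gamma = \varepsilon \int c\, d\tilde\gamma.
\end{align*}
(ii) The Kullback--Leibler divergence is invariant under bimeasurable bijections: if $\gamma \ll P \otimes Q$ then $\frac{d\tilde\gamma}{d(T_\#(P\otimes Q))}\circ T = \frac{d\gamma}{d(P\otimes Q)}$ a.e., so $\kl{\tilde\gamma}{T_\#(P\otimes Q)} = \kl{\gamma}{P\otimes Q}$, while if $\gamma \not\ll P\otimes Q$ then by bijectivity $\tilde\gamma \not\ll T_\#(P\otimes Q)$ and both sides equal $+\infty$. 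Moreover $T_\#(P\otimes Q) = \phi_\# P \otimes \phi_\# Q = P^\varepsilon \otimes Q^\varepsilon$ because $T$ is a product map, so $\kl{\gamma}{P \otimes Q} = \kl{\tilde\gamma}{P^\varepsilon \otimes Q^\varepsilon}$.

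Combining (i) and (ii), for every $\gamma \in \Pi(P,Q)$ we get
\begin{align*}
  \int c\, d\gamma + \varepsilon \kl{\gamma}{P\otimes Q} = \varepsilon\left[ \int c\, d\tilde\gamma + \kl{\tilde\gamma}{P^\varepsilon \otimes Q^\varepsilon} \right].
\end{align*}
Taking the infimum over $\gamma \in \Pi(P,Q)$ and using that $\gamma \mapsto T_\#\gamma$ is a bijection onto $\Pi(P^\varepsilon, Q^\varepsilon)$ yields $S_\varepsilon(P,Q) = \varepsilon\, \inf_{\tilde\gamma \in \Pi(P^\varepsilon, Q^\varepsilon)} [ \int c\, d\tilde\gamma + \kl{\tilde\gamma}{P^\varepsilon \otimes Q^\varepsilon} ] = \varepsilon\, S_1(P^\varepsilon, Q^\varepsilon) = \varepsilon\, S(P^\varepsilon, Q^\varepsilon)$, as claimed.

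This argument is a routine rescaling, so there is no real obstacle; the only points requiring care are that the reference measure transforms correctly --- which is exactly why it matters that $T = \phi \times \phi$ is a product map, so that $T_\#(P \otimes Q)$ is again a product --- and that the homogeneity exponent $2$ of the cost is matched by the $\varepsilon^{-1/2}$ scaling, which is what lets the factor $\varepsilon$ in front of the KL term come out cleanly (for $\norm{x-x'}^p$ one would instead rescale by $\varepsilon^{-1/p}$).
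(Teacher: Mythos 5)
Your proof is correct and spells out exactly what the paper's one-line ``by a change of variable argument'' refers to: pushing forward couplings under $T = \phi\times\phi$, using $2$-homogeneity of the quadratic cost, the invariance of KL under bimeasurable bijections, and the product structure $T_\#(P\otimes Q) = P^\varepsilon \otimes Q^\varepsilon$. Same approach, with the details filled in.
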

\begin{proof}
  By a change of variable argument.
\end{proof}

%%%%%%%%%%%%%%%%%%%%%%%%%%%%%%%%%%%%%%%%%%%%%%%%%%%%%%%%%%%%%%%%%%%%%%%%
%%%%%%%%%%%%%%%%%%%%%%%%%%%%%%%%%%%%%%%%%%%%%%%%%%%%%%%%%%%%%%%%%%%%%%%%
\section{Additional Experimental Results}
\label{sec:more_results}
%%%%%%%%%%%%%%%%%%%%%%%%%%%%%%%%%%%%%%%%%%%%%%%%%%%%%%%%%%%%%%%%%%%%%%%%
%%%%%%%%%%%%%%%%%%%%%%%%%%%%%%%%%%%%%%%%%%%%%%%%%%%%%%%%%%%%%%%%%%%%%%%%

\subsection{\eoticrf~on Bilingual Text}
\label{sub:etic_rf_text}

We examine the \eoticrf~test on bilingual data discussed in \Cref{sub:real}.
The feature embeddings are of high dimension (i.e., 768), which imposes challenges on the random feature approximation.
Hence, we first use dimension reduction (PCA) on the English embeddings and French embeddings separately to reduce the dimension to $d' \ll 768$, and then perform \eoticrf~on the low-dimensional embeddings.
Since the dimension reduction step does not utilize information about the joint distribution $\pxy$, it will not violate the level consistency of the test.
This is also validated in our experimental results, i.e., all the tests have type I error rate close to $0.05$ as expected.

\begin{figure*}[t]
\centering
\includegraphics[width=0.6\textwidth]{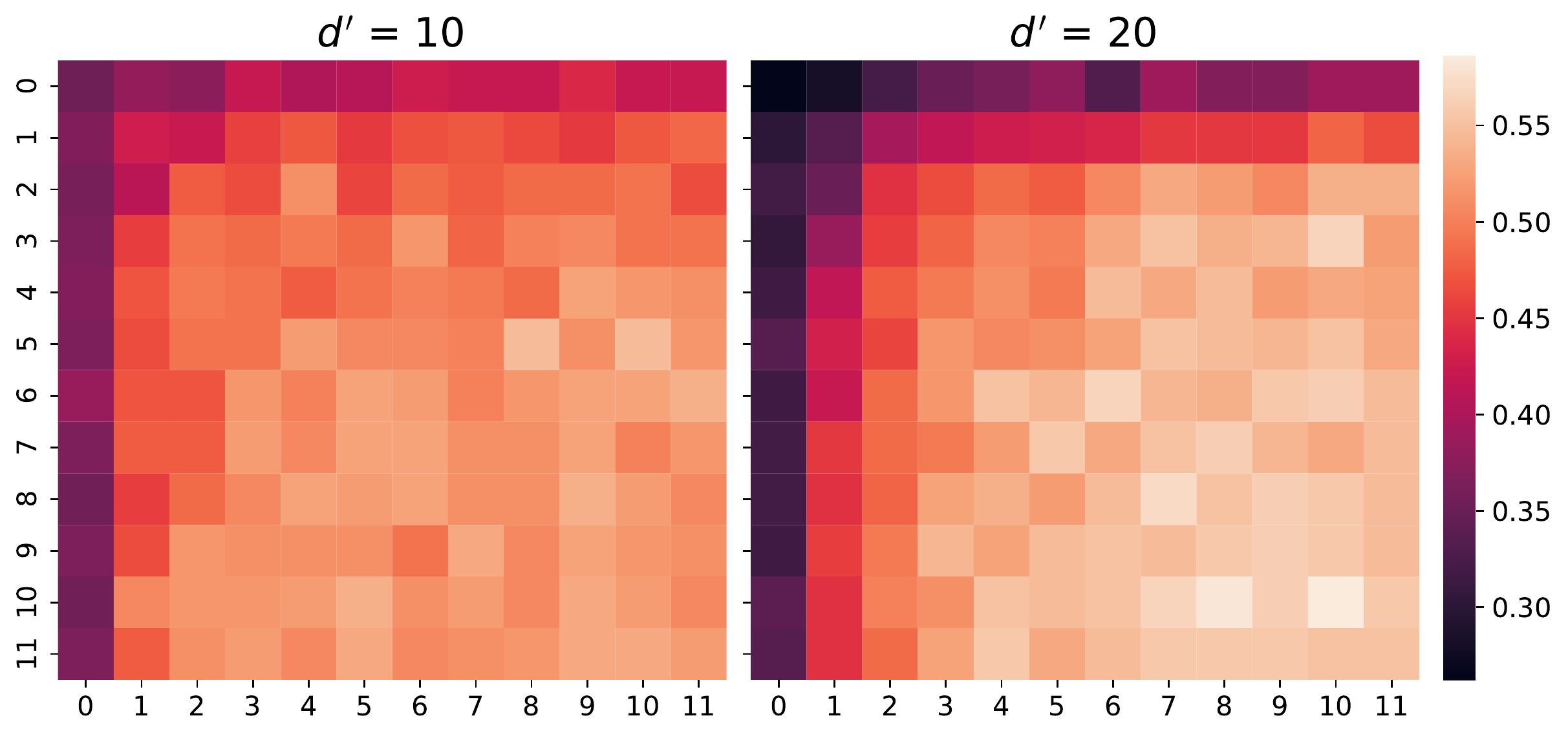}
\vspace{.1in}
\caption{Heatmaps of power for \eoticrf~with $p=700$ random features and $d'$ PCs on the partially dependent sample of the bilingual data. The $x$-axis is for $r_1$ and $y$-axis is for $r_2$. The indices from 0 to 11 correspond to equally spaced values from 0.25 to 4. Lighter color indicates larger power.}
\label{fig:rfeot_nfeat700}
\end{figure*}

\begin{figure*}[t]
\centering
\includegraphics[width=0.6\textwidth]{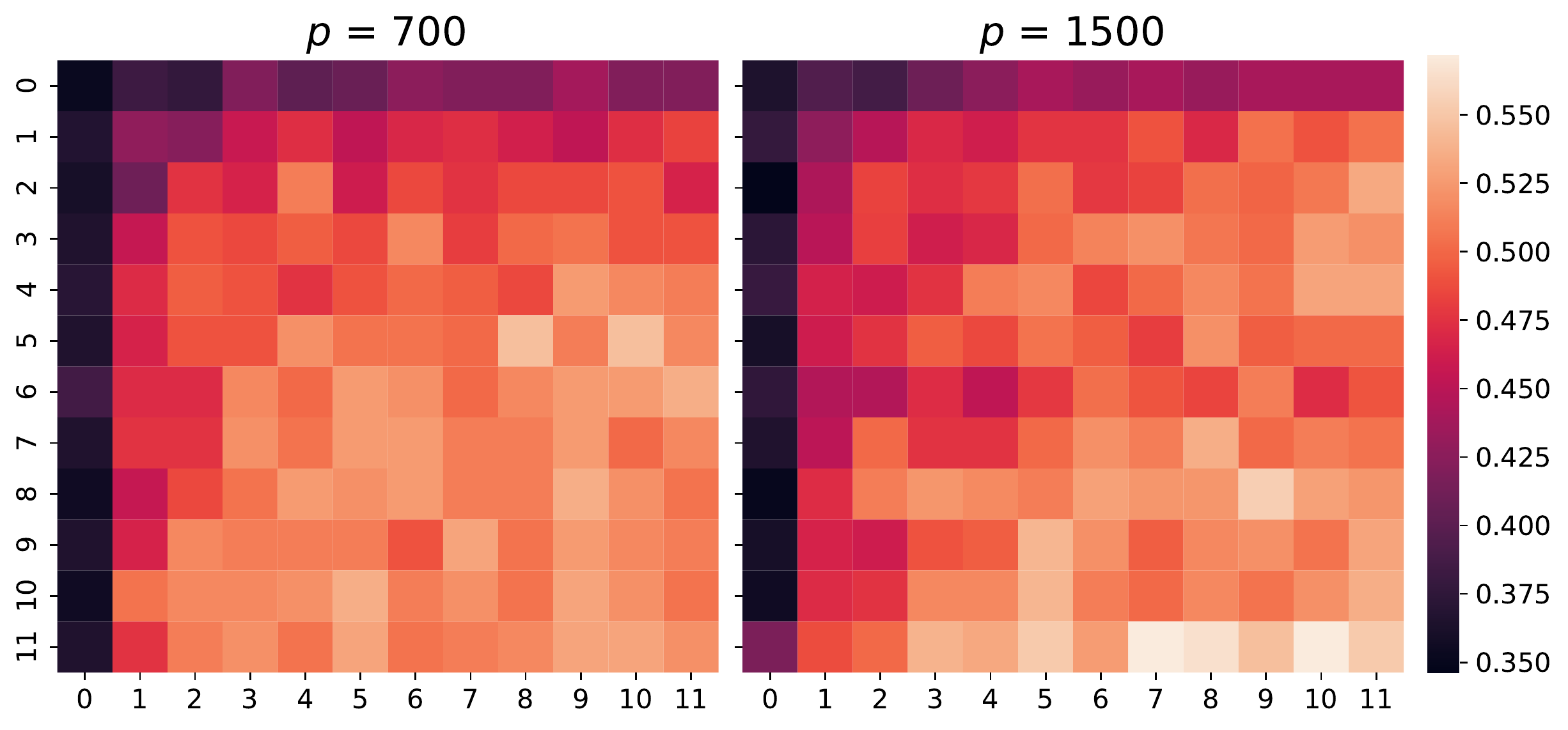}
\vspace{.1in}
\caption{Heatmaps of power for \eoticrf~with $d'=10$ PCs and $p$ random features on the partially dependent sample of the bilingual data. The $x$-axis is for $r_1$ and $y$-axis is for $r_2$. The indices from 0 to 11 correspond to equally spaced values from 0.25 to 4. Lighter color indicates larger power.}
\label{fig:rfeot_npc10}
\end{figure*}

As shown in \Cref{fig:rfeot_nfeat700}, The number of PCs $d'$ has an interesting effect on the power.
Intuitively, the larger $d'$ is the less information we lose, and thus the larger power the test has.
This can be seen at the lower right corner where both $r_1$ and $r_2$ are large.
However, larger $d'$ also means the random feature approximation is harder, especially when $r_1$ and $r_2$ are small.
This is reflected at the upper left corner where the power decreases as $d'$ increases.
We then investigate the effect of $p$---the number of random features.
As shown in \Cref{fig:rfeot_npc10}, the power increases with the number of random features. 
Overall, the random feature approximation demonstrates similar performance as the exact \teotic~with enough random features.

\begin{figure*}[t]
\centering
\includegraphics[width=0.7\textwidth]{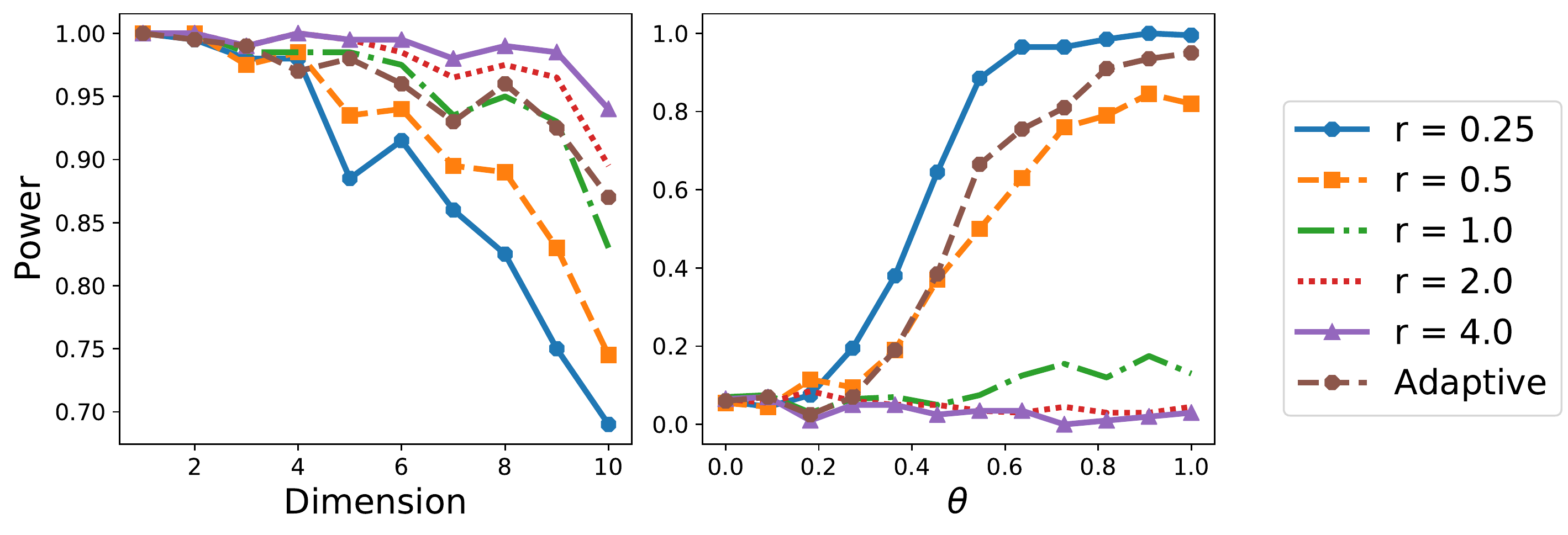}
\vspace{.1in}
\caption{Power curves in the linear dependency model (left) and subspace dependency model (right).}
\label{fig:adaptive_etic}
\end{figure*}

\begin{figure*}[t]
\centering
\includegraphics[width=0.7\textwidth]{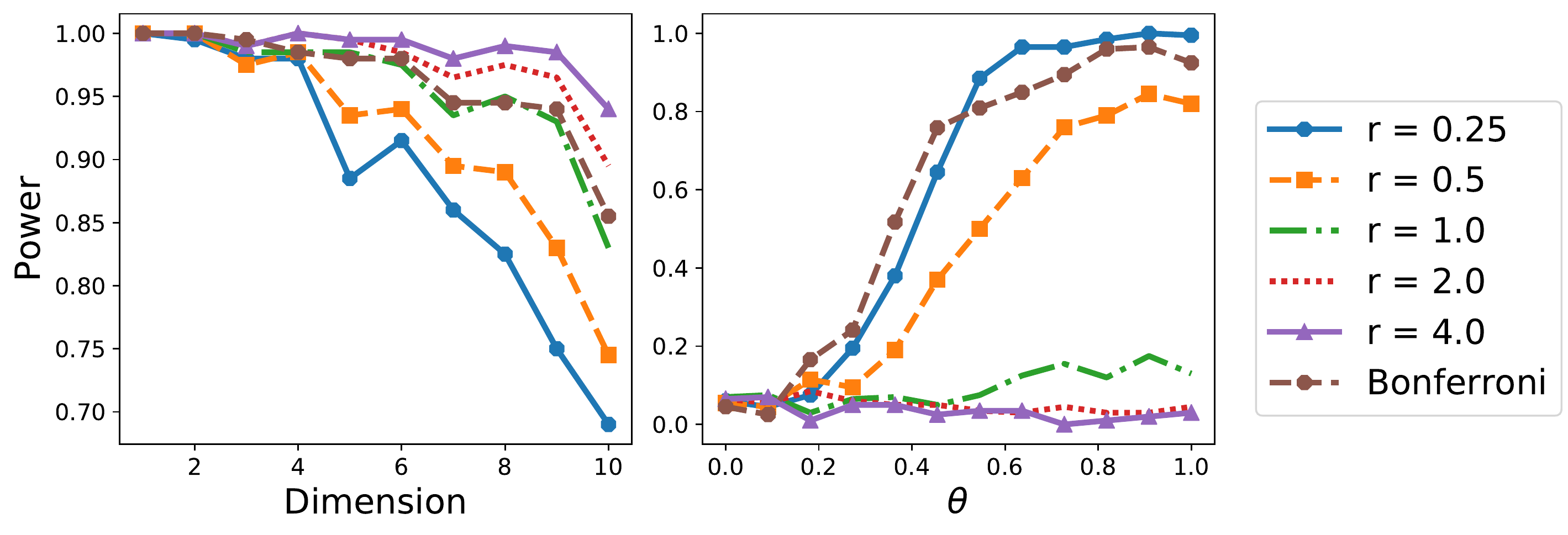}
\vspace{.1in}
\caption{Power curves in the linear dependency model (left) and subspace dependency model (right).}
\label{fig:bonferroni_etic}
\end{figure*}

\subsection{Adaptive \teotic~Test}
\label{sub:adaptive_etic}

To ensure that $(T_{n, \varepsilon}(X, Y))_\varepsilon$ are roughly on the same scale for different values of $\varepsilon$, we use
\begin{align*}
    \psi_a(\alpha) := \ind\left\{ \max_{\varepsilon \in \mathcal{E}} \bar T_{n, \varepsilon}(X, Y) > H_{n, \mathcal{E}}(\alpha) \right\}.
\end{align*}
where each $T_{n, \varepsilon}(X, Y)$ is studentized via resampling (20 permutations) under the null to yield $\bar T_{n, \varepsilon}(X, Y)$. 

Following the trick in \Cref{sec:experiments}, we select the cost function to be the weighted quadratic cost with weights given by the median heuristic.
We set $\mathcal{E} = \{0.25, 1, 4\}$ and perform the adaptive \teotic~test on the linear dependency model and the subspace dependency model.
As shown in \Cref{fig:adaptive_etic}, it is slightly worse than the best \teotic~test in both models.
We also run it on the bilingual text data.
The power and type I error rate of adaptive \teotic~are $1$ and $0.07$ on the dependent sample and the independent sample, respectively.
The power achieved is $0.535$ on the partially dependent sample; whereas the worst and best power of \teotic~are $0.38$ and $0.635$, respectively.

Finally, we consider a Bonferroni-type \teotic~test which is adaptive to both the regularization parameter and the weights in the cost function.
Following the formulation in \Cref{sec:experiments}, we let $\psi_{r_1, r_2}(\alpha)$ be the decision rule of \teotic~with hyper-parameters $r_1$ and $r_2$.
Consider the following Bonferroni-type \teotic~test
\begin{align*}
    \psi(\alpha) := \max_{r_1, r_2 \in \mathcal{R}} \psi_{r_1, r_2}(\alpha / \abs{\mathcal{R}^2}).
\end{align*}

We perform this Bonferroni-type \teotic~test on the linear dependency model and the subspace dependency model for $\mathcal{R} = \{0.25, 4\}$.
As shown in \Cref{fig:bonferroni_etic}, it is slightly worse than the best \teotic~test in both models.
Compared to the adaptive \teotic~test, it performs similar in the linear dependency model and slightly better in the subspace dependency model.
We also run it on the bilingual text data.
The power and type I error rate of adaptive \teotic~are $1$ and $0.045$ on the dependent sample and the independent sample, respectively.
The power achieved is $0.5$ on the partially dependent sample, which is smaller than the power of the adaptive \teotic.

\begin{figure*}[t]
    \centering
    \includegraphics[width=0.65\textwidth]{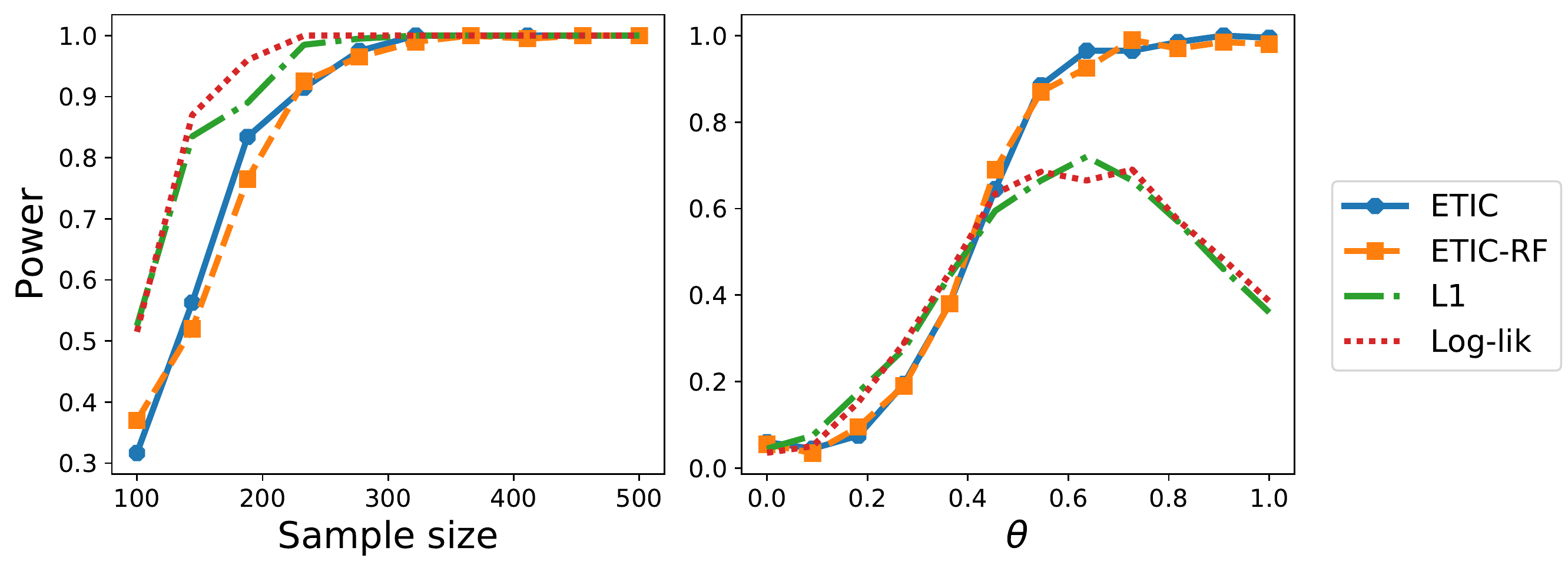}
    \vspace{.1in}
    \caption{Power curves in the Gaussian sign model (left) and subspace dependency model (right).}
    \label{fig:more_tests}
\end{figure*}

\subsection{Comparison with Other Independence Tests}
\label{sub:comparison}
We implemented another two independence tests considered in \citep{gretton2008nonparametric}: the $L_1$ test and the Log-likelihood test.
We apply them to the Gaussian sign model and the subspace dependency model and compare them with \teotic~and \eoticrf~with $r = 0.25$.
As shown in \Cref{fig:more_tests}, they slightly outperform \teotic~in the Gaussian sign model.
In the subspace dependency model, they perform similarly to \teotic~for small $\theta$ and significantly worse for large $\theta$.

\end{document}